\pdfoutput=1

\documentclass{article}

\usepackage{microtype}
\usepackage{graphicx}
\usepackage{subcaption}
\usepackage{booktabs} 

\usepackage[accepted]{icml2026}
\usepackage{multirow}
\usepackage[table]{xcolor}
\usepackage{makecell}
\definecolor{Gray}{gray}{0.93}

\renewcommand{\algorithmicrequire}{\textbf{Input:}}
\renewcommand{\algorithmicensure}{\textbf{Output:}}

\definecolor{mydarkred}{rgb}{0.6,0,0}
\definecolor{mydarkgreen}{rgb}{0,0.6,0}
\PassOptionsToPackage{options}{natbib}
\setcitestyle{authoryear,round,citesep={;},aysep={,},yysep={;}}
\usepackage[colorlinks,linkcolor=mydarkred,citecolor=mydarkgreen]{hyperref}

\hypersetup{
  bookmarks=true,
  bookmarksnumbered=true,
  bookmarksopen=true,
  bookmarksopenlevel=2,
  pdfstartview=FitH
}

\usepackage{amsmath}
\usepackage{amssymb}
\usepackage{mathtools}
\usepackage{amsthm}

\renewcommand{\hat}{\widehat}

\usepackage[capitalize,noabbrev]{cleveref}

\usepackage{tikz}
\usetikzlibrary{positioning,shapes,arrows.meta}

\theoremstyle{plain}
\newtheorem{theorem}{Theorem}[section]
\newtheorem{proposition}[theorem]{Proposition}

\theoremstyle{definition}
\newtheorem{definition}[theorem]{Definition}
\newtheorem{assumption}[theorem]{Assumption}
\theoremstyle{remark}
\newtheorem{remark}{Remark}

\makeatletter
\def\th@remark{%
  \thm@headfont{\bfseries\itshape}%
  \normalfont 
  \thm@preskip\topsep \divide\thm@preskip\tw@
  \thm@postskip\thm@preskip
}
\makeatother

\icmltitlerunning{Conditional Performance Guarantee for Large Reasoning Models}

\begin{document}

\twocolumn[
\icmltitle{Conditional Performance Guarantee for Large Reasoning Models}



\icmlsetsymbol{equal}{*}

\begin{icmlauthorlist}
\icmlauthor{Jianguo Huang}{ntu,equal}
\icmlauthor{Hao Zeng}{sustech,equal}
\icmlauthor{Bingyi Jing}{cuhk,sustech}
\icmlauthor{Hongxin Wei}{sustech}
\icmlauthor{Bo An}{ntu}
\end{icmlauthorlist}

\icmlaffiliation{ntu}{Nanyang Technological University, Singapore}
\icmlaffiliation{sustech}{Southern University of Science and Technology, China}
\icmlaffiliation{cuhk}{The Chinese University of Hong Kong, Shenzhen, China}

\icmlcorrespondingauthor{Hongxin Wei}{weihx@sustech.edu.cn}

\icmlkeywords{Machine Learning, ICML, PAC Reasoning, Efficient Reasoning}

\vskip 0.3in
]



\printAffiliationsAndNotice{\icmlEqualContribution}  

\begin{abstract}
Large reasoning models have shown strong performance through extended chain-of-thought reasoning, yet their computational cost remains significant.
Probably approximately correct (PAC) reasoning provides statistical guarantees for efficient reasoning by adaptively switching between thinking and non-thinking models, but the guarantee holds only in the marginal case and does not provide exact conditional coverage.
We propose G-PAC reasoning, a practical framework that provides PAC-style guarantees at the group level by partitioning the input space.
We develop two instantiations: Group PAC (G-PAC) reasoning for known group structures and Clustered PAC (C-PAC) reasoning for unknown groupings.
We prove that both G-PAC and C-PAC achieve group-conditional risk control, and that grouping can strictly improve efficiency over marginal PAC reasoning in heterogeneous settings.
Our experiments on diverse reasoning benchmarks demonstrate that G-PAC and C-PAC successfully achieve group-conditional risk control while maintaining substantial computational savings.
\end{abstract}


\section{Introduction}

Large language models (LLMs) demonstrate strong reasoning capabilities through extended chain-of-thought generation~\citep{deepseek-ai2025deepseekr1,yang2025qwen3}.
However, these improvements come at the cost of overthinking~\citep{sui2025stop,yue2025dont,aggarwal2025optimalthinkingbench}, resulting in increased latency and computational overhead when deploying LLMs at scale. 
This shows the importance of balancing reasoning performance and computational efficiency.
Recent work on efficient reasoning~\citep{zeng2025pac}, called \textit{PAC reasoning}, addresses this challenge by using a statistical framework based on adaptive model selection. Specifically, the framework maintains two models: a thinking model that performs full reasoning and a non-thinking model that produces fast but potentially less accurate responses. PAC reasoning constructs a composite system that adaptively routes inputs to either model based on uncertainty estimates, while providing statistical guarantees on the expected performance loss.

While PAC reasoning provides a statistical solution for efficient reasoning, its guarantees are marginal: the performance bound holds on average over the entire input distribution, allowing arbitrarily large errors on subpopulations.
Such marginal guarantees are insufficient in many high-stakes applications.
For instance, in medical diagnosis, practitioners require reliable guarantees for specific diseases, rather than guarantees that hold only on average across the population.
This raises a fundamental question: 
\begin{center}
\emph{Can we provide performance loss guarantees conditional on each group?}
\end{center}

We propose a novel statistical method, termed \textit{Group PAC (G-PAC) reasoning}.
Instead of seeking impossible per-input guarantees~\citep{zeng2025note}, G-PAC partitions the input space into groups and provides PAC guarantees at the group level, enabling explicit control of performance loss within each group.
When group information is not available a priori, we augment G-PAC with a clustering step, referred to as \textit{Clustered PAC (C-PAC) reasoning}, which learns the grouping from calibration data using uncertainty scores.
We establish theoretical guarantees for G-PAC under both known and learned grouping settings.
For known groupings, we prove group-wise PAC guarantees together with finite-sample risk bounds.
For learned groupings, we characterize the oracle optimal partition and show that grouping strictly improves efficiency under heterogeneity.
Finally, we provide coverage guarantees for the learned partitions and show that the resulting coverage gap vanishes asymptotically.

We conduct comprehensive experiments in Section~\ref{sec:experiments} evaluating G-PAC reasoning across diverse reasoning benchmarks, including MATH-500~\citep{lightman2023lets}, 
ZebraLogic~\citep{lin2025zebralogic}, and GPQA~\citep{rein2024gpqa}. 
Our results show that G-PAC reasoning effectively controls group-conditional loss while substantially reducing inference cost. 
For instance, on MATH-500 with tolerance $\epsilon = 0.05$ under the logits-based score, 
G-PAC achieves zero group-conditional error gap, whereas vanilla PAC fails to control group-wise risk.
Moreover, even when group partitions are not available a priori, 
G-PAC remains effective by learning group structure directly from data.

Our main contributions are as follows.
\begin{itemize}
\item We formalize {group-conditional PAC efficiency}, a practical relaxation that provides {statistical guarantees at the group level} while enabling fine-grained control over the accuracy-efficiency trade-off. 
\item We design a practical PAC reasoning method, \emph{G-PAC reasoning}, together with a clustering-based extension, \emph{C-PAC reasoning}, for settings where group information is unavailable, and show that both satisfy group-conditional PAC efficiency in practice.

\item We establish theoretical guarantees showing that group-conditional risk can be provably controlled under both known and learned groupings. Building on this result, we show that grouping strictly improves efficiency under heterogeneity, characterize the oracle optimal partition, and prove that the coverage gap vanishes as the learned partition approaches the oracle.
\end{itemize}

\subsection{Related work}
\label{sec:related-work}

\textbf{Efficient reasoning for large language models.}
LLMs~\citep{deepseek-ai2025deepseekr1,yang2025qwen3} have shown strong performance in complex reasoning tasks, often at a high computational cost, i.e., \emph{overthinking}~\citep{chen2025reasoning,sui2025stop}.
To address this issue, recent works propose efficient reasoning strategies such as early exit~\citep{yang2025dynamic,jiang2025flashthink} and adaptive switching of a single LLM between thinking and non-thinking modes~\citep{cheng2025think,chung2025thinker,fang2025thinkless,ma2025reasoning,xiao2025fastslow,yong2025think}.
Model routing and cascading approaches~\citep{dekoninck2025unified} select among multiple LLMs based on query difficulty.
Despite their empirical effectiveness, these techniques lack theoretical guarantees on the resulting performance degradation.
PAC reasoning~\citep{zeng2025pac} addresses this limitation by providing statistical guarantees that characterize the trade-off between computational efficiency and accuracy.
However, these guarantees are \emph{marginal}, holding only in expectation over the entire input distribution.
We move beyond marginal analysis and propose \emph{group-conditional PAC-efficient reasoning}, which enables fine-grained control of performance loss across different groups of inputs.

\textbf{PAC learning and distribution-free risk control.}
PAC learning~\citep{valiant1984theory} establishes how algorithms can generalize from training data with probabilistic guarantees.
The Learn-then-Test (LTT) framework~\citep{angelopoulos2025learn,bates2021distributionfree} provides a modern, distribution-free approach to PAC-style guarantees, enabling explicit specification of error tolerances.
Conformal risk control~\citep{angelopoulos2025conformal} extends this framework to control the expected value of monotone loss functions.
Despite their generality, these methods primarily yield \emph{marginal} guarantees that hold only on average over the input distribution, and may fail to control performance loss for particular subpopulations.
This limitation naturally raises the question of whether we can achieve conditional guarantees that hold for each input group.
Our group-conditional PAC-efficient reasoning framework addresses this challenge by providing guarantees at the group level, so it enables clear computational efficiency gains across subpopulations.

\section{PAC reasoning}
\label{sec:pac-efficient}
We introduce \textit{PAC reasoning}~\cite{zeng2025pac}, which only achieves marginal PAC efficiency. 
However, this marginal guarantee may lead to uncontrolled risk on individual input, motivating the need for a conditional PAC efficiency.

\textit{PAC reasoning} formalizes the LLM's selective thinking with a PAC guarantee on the performance risk.
Let $x\sim\mathcal{P}$ denote an input prompt drawn from a data distribution $\mathcal{P}$, $f$ a thinking LLM that generates extended reasoning chains, and $\tilde{f}$ a non-thinking LLM that generates direct answers without thinking parts.
Then, PAC reasoning constructs a composite model $\hat{f}$ that routes inputs between $f$ and $\tilde{f}$ using an uncertainty score $U(x)\in[0,1]$ and a calibrated threshold $u$. Formally, the routing rule is given by:
\begin{equation*}
\hat{f}(x) = \begin{cases}
\tilde{f}(x) & \text{if } U(x) \leq u, \\
 f(x) & \text{else},
\end{cases}
\end{equation*}
where $U(x)$ is a score to quantify the uncertainty of the non-thinking LLM on the input $x$. 
Then, the performance loss of the composite model $\hat{f}$ is defined as:
\begin{equation*}
R(\hat{f}) = \mathbb{E}_{x \sim \mathcal{P}}[\ell(\hat{f}(x), f(x))],
\end{equation*}
where $\ell$ is a loss function that measures the performance degradation of $\hat{f}$ relative to the thinking LLM $f$, such as the 0-1 loss for verifiable answers. 
\begin{remark}
    Notably, the loss is measured \textit{relative to the thinking model's output} (rather than the ground truth), as our goal is to bound the additional degradation induced by routing away from the thinking model.
\end{remark}

To control performance loss with a theoretical guarantee, we introduce the definition of PAC-efficient:
\begin{definition}[PAC-efficient]
\label{def:pac-efficient}
Given an error tolerance $\varepsilon > 0$ and a confidence level $\alpha \in (0, 1)$, a composite model $\hat{f}$ is $(\varepsilon, \alpha)$-PAC-efficient if
\begin{equation*}
\mathbb{P}(R(\hat{f}) \leq \varepsilon) \geq 1 - \alpha.
\end{equation*}
\end{definition}

Definition~\ref{def:pac-efficient} provides a marginal guarantee that controls the expected performance loss averaged over the entire input distribution.
PAC reasoning offers a model-free procedure to achieve such PAC-efficient bounds.
However, the marginal guarantee does not impose any constraint on the performance loss within specific input subpopulations.
As a result, the composite model may satisfy the global PAC guarantee while still exhibiting substantially larger losses on certain groups, potentially violating the target tolerance at the group level.
This limitation motivates the need for a finer-grained notion of PAC efficiency that provides conditional guarantees beyond the marginal setting.

\paragraph{Impossible full conditional PAC efficiency}
A natural strengthening is to require the guarantee to hold for each input $x$ separately.
For a specific input $x$, define the conditional risk $R(\hat{f} \mid x) = \mathbb{E}_{\mathcal{D}_{\text{cal}}}[\ell(\hat{f}(x), f(x)) \mid x]$, where the expectation is taken over the randomness of the calibration set.
A composite model $\hat{f}$ is $(\varepsilon, \alpha)$-pointwise PAC efficient if for every $x \in \mathcal{X}$,
\begin{equation}
\label{eq:ccpac}
\mathbb{P}(R(\hat{f} \mid x) \leq \varepsilon) \geq 1 - \alpha.
\end{equation}
Unfortunately, achieving such per-instance guarantees is impossible without sacrificing efficiency~\citep{zeng2025note}: \textit{the only way to satisfy pointwise PAC efficiency is to use the expert model for \textbf{almost every} $x \in \mathcal{X}$.}

\section{Group-conditional PAC-efficient reasoning}
\label{sec:group-conditional-pac}

To address the limitations of standard PAC reasoning, we introduce the \textit{group-conditional PAC efficiency} and propose methods termed \textit{Group PAC (G-PAC) reasoning} and \textit{Clustered PAC (C-PAC) reasoning}, to control performance loss within known and unknown groups of partitions.

\begin{figure*}[t]
\centering
\begin{tikzpicture}[
  label/.style={font=\small, align=center}
]

\def\PtsGreen{{-1.0/0.9},{-0.6/0.5},{-0.4/0.8},{-0.8/0.4},{-0.3/1.0}}
\def\PtsRed{{0.5/0.1},{0.9/-0.2},{0.6/-0.4},{1.0/0.0},{0.4/0.3}}
\def\PtsYellow{{-0.6/-0.8},{-0.3/-1.0},{-0.2/-0.7},{-0.5/-0.9},{0.0/-1.1}}

\begin{scope}[shift={(-5,0)}]
  \draw[dashed, thick, gray!60] (0,0) circle (1.8cm);
  \foreach \x/\y in \PtsGreen {
    \fill[green!70!black] (\x,\y) circle (0.15);
  }
  \foreach \x/\y in \PtsRed {
    \fill[red!70!black] (\x,\y) circle (0.15);
  }
  \foreach \x/\y in \PtsYellow {
    \fill[yellow!80!orange] (\x,\y) circle (0.15);
  }
  \node[label] at (0,-2.8) {Feasible but less efficient};
  \node[font=\normalsize] at (0,2.5) {PAC reasoning};
\end{scope}

\begin{scope}[shift={(0,0)}]
  \draw[dashed, thick, gray!60] (-0.6,0.6) ellipse (1.1cm and 0.9cm);
  \draw[dashed, thick, gray!60] (0.7,-0.1) ellipse (1.0cm and 0.8cm);
  \draw[dashed, thick, gray!60] (-0.4,-0.9) ellipse (0.9cm and 0.6cm);
  \foreach \x/\y in \PtsGreen {
    \fill[green!70!black] (\x,\y) circle (0.15);
  }
  \foreach \x/\y in \PtsRed {
    \fill[red!70!black] (\x,\y) circle (0.15);
  }
  \foreach \x/\y in \PtsYellow {
    \fill[yellow!80!orange] (\x,\y) circle (0.15);
  }
  \node[font=\scriptsize, text=green!70!black] at (-1.4,0.8) {$\hat{u}_1$};
  \node[font=\scriptsize, text=red!70!black] at (1.3,0.2) {$\hat{u}_2$};
  \node[font=\scriptsize, text=yellow!80!orange] at (-0.1,-1.6) {$\hat{u}_3$};
  \node[label, font=\normalsize, text=cyan!70!blue] at (0,-2.8) {\textbf{Feasible and group-wise efficient}};
  \node[font=\normalsize] at (0,2.5) {G-PAC reasoning};
\end{scope}

\begin{scope}[shift={(5,0)}]
  \foreach \x/\y in \PtsGreen {
    \draw[dashed, gray!50] (\x,\y) circle (0.28);
    \fill[green!70!black] (\x,\y) circle (0.15);
  }
  \foreach \x/\y in \PtsRed {
    \draw[dashed, gray!50] (\x,\y) circle (0.28);
    \fill[red!70!black] (\x,\y) circle (0.15);
  }
  \foreach \x/\y in \PtsYellow {
    \draw[dashed, gray!50] (\x,\y) circle (0.28);
    \fill[yellow!80!orange] (\x,\y) circle (0.15);
  }
  \node[label, text=red!80!black] at (0,-2.8) {\textbf{Most efficient but impossible}};
  \node[font=\normalsize] at (0,2.5) {Conditional PAC reasoning};
\end{scope}

\draw[<->, thick, gray] (-5,-2.0) -- (5,-2.0);

\end{tikzpicture}
\caption{\textbf{Trade-off between feasibility and efficiency}.
Left: PAC reasoning uses a single threshold for all inputs, which is feasible but not the most efficient.
Right: conditional PAC reasoning requires per-input guarantees, which is fully efficient but impossible.
Middle: G-PAC reasoning balances this trade-off by grouping similar inputs and calibrating dynamic thresholds $\hat{u}_j$ for each group, making it both feasible and group-conditional efficient.}
\label{fig:clustered-pac}
\end{figure*}
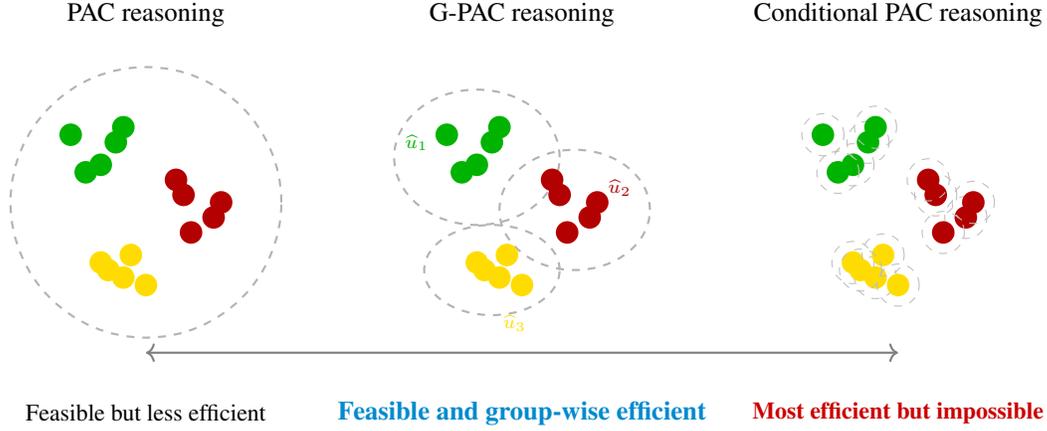

\subsection{Group PAC (G-PAC) reasoning}
\label{sec:gpac}

Let $\mathcal{G} = \{G_1, \ldots, G_k\}$ be a partition of the input space $\mathcal{X}$ with $k$ groups.
For each group $G_j$, we define the group-conditional risk function:
\begin{equation*}
R(\hat{f} \mid G_j) = \mathbb{E}_{x \sim \mathcal{P}}[\ell(\hat{f}(x), f(x)) \mid x \in G_j].
\end{equation*}
We now introduce a group-conditional PAC efficiency:
\begin{definition}[Group-conditional PAC efficiency]
\label{def:group-conditional-pac}
Given an error tolerance $\varepsilon > 0$ and a confidence level $\alpha \in (0, 1)$, a composite model $\hat{f}$ is $(\varepsilon, \alpha)$ group-conditional PAC-efficient with respect to partition $\mathcal{G}$ if for $G_j \in \mathcal{G}$,
\begin{equation*}
\mathbb{P}_{\mathcal{D}_j}(R(\hat{f} \mid G_j) \leq \varepsilon) \geq 1 - \alpha,
\end{equation*}
where $\mathcal{D}_j = \{x_i \sim \mathcal{P} : x_i \in G_j\}$ is the data for group $G_j$.
\end{definition}
This definition is stronger than the marginal PAC guarantee, as it controls performance loss within each group $G_j \in \mathcal{G}$, but weaker than the full conditional PAC guarantees in Eq.~\eqref{eq:ccpac}. 
Group-conditional PAC efficiency offers a practical way to control group-specific performance loss. 
It relaxes the conception of fully conditional PAC efficiency by balancing feasibility with efficiency.
Figure~\ref{fig:clustered-pac} illustrates the trade-off between feasibility and the strength of conditional guarantees, from marginal PAC efficiency to full conditional PAC efficiency.


\begin{remark}{(Group partition)}
The group partition $\mathcal{G}$ may be specified a priori or inferred from data, depending on the availability of group information.
Many benchmarks provide group labels, such as subject areas in MATH-500~\citep{lightman2023lets} and difficulty-level annotations in ZebraLogic~\citep{lin2025zebralogic}.
When no grouping information is available, the partition must be learned from data via clustering (Section~\ref{sec:uncertainty-clustering}).
\end{remark}


\textbf{Group-conditional calibration.}
To achieve group-conditional PAC efficiency, we introduce a statistical framework termed \emph{G-PAC reasoning}.
Suppose we are given a calibration dataset $\mathcal{D}_{\mathrm{cal}}=\{(x_i,y_i)\}_{i=1}^n$, where $y_i = f(x_i)$ denotes the output of the thinking model.

To control the group-conditional risk below a tolerance $\varepsilon$ with confidence level $1-\alpha$, we construct an upper confidence bound (UCB) $\hat{L}_{j,u}(\alpha)$ for each group $G_j$ on the calibration set such that
\begin{equation*}
\mathbb{P}\!\left(\hat{L}_{j,u}(\alpha) \ge R(\hat{f}\mid G_j)\right) \ge 1-\alpha,
\end{equation*}
where the probability is taken over the randomness of the calibration data and the importance sampling procedure.

The calibrated threshold $\hat{u}_j$ for group $G_j$ is then defined as the largest uncertainty level for which the UCB remains below the tolerance $\varepsilon$:
\begin{equation}
\label{eq:u^hat_j}
\hat{u}_{j} = \max \left\{ u \in [0,1] : \hat{L}_{j,u}(\alpha) \le \varepsilon \right\}.
\end{equation}

Following PAC reasoning~\citep{zeng2025pac}, we construct an estimator of the group-conditional performance loss via an importance sampling procedure.
Specifically, we sample indices from the calibration set uniformly with replacement and perform Bernoulli trials to decide whether to query expert (thinking-model) outputs.
This procedure yields i.i.d.\ random variables whose expectation equals the target group-conditional performance loss.
Based on these samples, we can construct a UCB using different statistical tools, including the central limit theorem (CLT), Hoeffding's inequality~\citep{hoeffding1963probability}, and Bernstein's inequality~\citep{bentkus2004hoeffdings}.
In this work, we adopt a CLT-based approach to form a confidence interval for the sample mean, which is suitable for sufficiently large sample sizes.
Algorithm~\ref{alg:ucb} summarizes the resulting CLT-based UCB construction.
Detailed derivations and alternative concentration-based bounds are provided in Appendix~\ref{app:ucb-construction}.


\begin{algorithm}[ht!]
\caption{$\texttt{UCB}(\mathcal{D})$: UCB construction via CLT}
\label{alg:ucb}
\begin{algorithmic}[1]
\REQUIRE Data $\mathcal{D} = \{x_i\}_{i=1}^n$ with model outputs $y_i = f(x_i)$ and $\tilde{y}_i = \tilde{f}(x_i)$, uncertainty scores $\{U_i\}_{i=1}^n$, sampling weights $\{\pi_i\}_{i=1}^n$, sampling size $m$, confidence level $\alpha$
\ENSURE UCB function $\hat{L}_u(\alpha)$
\STATE Initialize $\mathcal{Z} \leftarrow []$
\FOR{$t = 1$ to $m$}
    \STATE Sample index $i_t \sim \text{Unif}(\{1, \ldots, n\})$
    \STATE Sample $\xi_{i_t} \sim \text{Bern}(\pi_{i_t})$
    \STATE $Z_t \leftarrow \xi_{i_t} \cdot \ell(y_{i_t}, \tilde{y}_{i_t}) / \pi_{i_t}$
    \STATE Append $Z_t$ to $\mathcal{Z}$
\ENDFOR
\FOR{candidate $u \in \mathcal{U}$}
    \STATE Compute $Z_t(u) \leftarrow Z_t \cdot \mathbf{1}\{U_{i_t} \leq u\}$ for all $t \in [m]$
    \STATE $\hat{\mu}_Z(u) \leftarrow \frac{1}{m} \sum_{t=1}^m Z_t(u)$
    \STATE $\hat{\sigma}_Z(u) \leftarrow \sqrt{\frac{1}{m-1} \sum_{t=1}^m (Z_t(u) - \hat{\mu}_Z(u))^2}$
    \STATE $\hat{L}_u(\alpha) \leftarrow \hat{\mu}_Z(u) + z_{1-\alpha} \cdot \hat{\sigma}_Z(u) / \sqrt{m}$
\ENDFOR
\STATE {\bfseries Return} $\hat{L}_u(\alpha)$
\end{algorithmic}
\end{algorithm}


The G-PAC reasoning algorithm operates in two phases: calibration and deployment.
In the calibration phase, for each group $G_j$, we use its calibration subset $\mathcal{D}_j$ to estimate group-conditional performance loss and learn a group-specific threshold $\hat{u}_j$ by constructing an upper confidence bound $\hat{L}_{j,u}(\alpha)$ such that $\hat{L}_{j,\hat{u}_j}(\alpha) \le \varepsilon$.
In the deployment phase, each test input $x$ is assigned to its corresponding group $G_j$ and routed according to the learned threshold: it is processed by the non-thinking model if $U(x) \le \hat{u}_j$, and otherwise by the thinking model.
Algorithm~\ref{alg:calibration} summarizes the complete procedure of calibration, while the theoretical analysis is provided in Section~\ref{sec:theoretical-analysis}.
Algorithm~\ref{alg:calibration} assumes that the group partition $\mathcal{G}$ is known \emph{a priori}. When this assumption does not hold, we introduce Clustered PAC (C-PAC) reasoning to infer the partition from calibration data.

\begin{algorithm}[ht!]
\caption{G-PAC reasoning}
\label{alg:calibration}
\begin{algorithmic}[1]
\REQUIRE Calibration set $\mathcal{D}_{\text{cal}}$, group partition $\mathcal{G} = \{G_1, \ldots, G_k\}$, error tolerance $\varepsilon$, confidence $\alpha$
\ENSURE Group-specific thresholds $\{\hat{u}_1, \ldots, \hat{u}_k\}$ and composite model $\hat{f}$
\FOR{each group $G_j \in \mathcal{G}$}
    \STATE $\mathcal{D}_j \leftarrow \{x_i \in \mathcal{D}_{\text{cal}} : x_i \in G_j\}$
    \STATE $\hat{L}_{j,u}(\alpha) \leftarrow \texttt{UCB}(\mathcal{D}_j)$ 
    \STATE $\hat{u}_j \leftarrow \max\{u : \hat{L}_{j,u}(\alpha) \leq \varepsilon\}$
    \STATE $\hat{f}_j(x) \leftarrow \begin{cases}
        \tilde{f}(x) & \text{if } U(x) \leq \hat{u}_j \\
        f(x) & \text{otherwise}
    \end{cases}$
\ENDFOR
\STATE $\hat{f}(x) \leftarrow \sum_{j=1}^{k} \hat{f}_j(x) \cdot \mathbf{1}\{x \in G_j\}$
\STATE {\bfseries Return} $\{\hat{u}_1, \ldots, \hat{u}_k\}$, and $\hat{f}$
\end{algorithmic}
\end{algorithm}

The theoretical analysis for G-PAC reasoning is presented in Section~\ref{sec:known-analysis}. Specifically, Theorem~\ref{thm:known-grouping} proves that each group achieves the PAC guarantee, and Theorem~\ref{thm:empirical-known-grouping} provides a finite-sample PAC guarantee.

\subsection{Clustered PAC (C-PAC) reasoning}
\label{sec:uncertainty-clustering}

When the group partition is not known in advance, we learn it from calibration data.
We use uncertainty-based clustering to discover groups from the uncertainty score $U(x)$.

\paragraph{Clustering in $U$ space.}
We map each calibration point $x_i$ to a scalar score $U_i = U(x_i)$, and cluster the values $\{U_i\}$.
This is a one-dimensional clustering problem, which is simpler than clustering in the high-dimensional input space $\mathcal{X}$ (or its embedding space).
In practice, we can use standard methods such as 1D k-means or hierarchical clustering to obtain groups with similar uncertainty levels.

We consider two ways to combine clustering and calibration. In the split approach, we split the calibration set into two disjoint subsets, $\mathcal{D}_{\text{cluster}}$ and $\mathcal{D}_{\text{cal}}$: we learn the partition from $\mathcal{D}_{\text{cluster}}$ and calibrate thresholds on $\mathcal{D}_{\text{cal}}$, so the learned partition is independent of the threshold calibration. This yields exact coverage per learned group. In the joint approach, we use the same set for both steps, $\mathcal{D}_{\text{cluster}}=\mathcal{D}_{\text{cal}}$, which is more sample-efficient but makes the learned partition and the thresholds dependent; we account for this by adding an extra gap term $\delta(n,k)$ in the UCB construction. The theoretical analysis is given in Section~\ref{sec:unknown-analysis}, and Theorem~\ref{thm:unknown-grouping} provides PAC guarantees for both approaches: the split approach achieves exact coverage for each learned group, while the joint approach has an extra gap term $c \cdot \delta(n, k)$.

\section{Theoretical analysis}
\label{sec:theoretical-analysis}

This section establishes the theoretical guarantees for G-PAC reasoning.
We prove group-wise validity for known partitions (Section~\ref{sec:known-analysis}), show that oracle grouping improves efficiency under heterogeneity (Section~\ref{sec:oracle-efficiency}), and analyze the theoretical results of learned partitions when group partitions are not available \emph{a priori} (Section~\ref{sec:unknown-analysis}).


\subsection{Validity of G-PAC reasoning}
\label{sec:known-analysis}


\begin{assumption}[UCB validity]
\label{assump:ucb-validity}
For each group $G_j$, threshold $u$, and a confidence level $\alpha \in (0, 1)$, the upper confidence bound $\hat{L}_{j,u}(\alpha)$ computed on the group set $\mathcal{D}_j$ satisfies
\begin{equation*}
\mathbb{P}_{\mathcal{D}_j}(R(\hat{f} \mid G_j) \leq \hat{L}_{j,u}(\alpha)) \geq 1 - \alpha,
\end{equation*}
where $R(\hat{f} \mid G_j)$ is the group-conditional performance loss of the composite model $\hat{f}_u$ with threshold $u$, and the probability is taken over the randomness of the group calibration set $\mathcal{D}_j = \{x_i \in \mathcal{D}_{\text{cal}} : x_i \in G_j\}$.
\end{assumption}

In this work, we use bootstrap-based methods combined with the central limit theorem to satisfy the assumptions above, as described in Algorithm~\ref{alg:ucb}.
Alternatively, the same assumption can be instantiated using standard concentration inequalities, such as Hoeffding's inequality~\citep{hoeffding1963probability} or empirical Bernstein bounds~\citep{howard2021timeuniform}.

\begin{theorem}[PAC guarantee for known grouping]
\label{thm:known-grouping}
Let $\mathcal{G} = \{G_1, \ldots, G_k\}$ be a known partition of the input space.
Suppose the loss function $\ell: \mathcal{Y} \times \mathcal{Y} \to [0, B]$ is bounded for some constant $B > 0$, and Assumption~\ref{assump:ucb-validity} holds.
If Algorithm~\ref{alg:calibration} is applied with calibration set $\mathcal{D}_{\text{cal}}$, error tolerance $\varepsilon > 0$, and confidence level $\alpha \in (0,1)$, for each group $G_j$, we have: 
\begin{equation}
\label{eq:known-guarantee}
\mathbb{P}_{\mathcal{D}_j}(R(\hat{f} \mid G_j) \leq \varepsilon) \geq 1 - \alpha.
\end{equation}
\end{theorem}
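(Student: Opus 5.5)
Since the groups are disjoint and Algorithm~\ref{alg:calibration} calibrates each threshold $\hat{u}_j$ using only $\mathcal{D}_j$, we fix a group $G_j$ and argue entirely within it. Write $R_j(u) = R(\hat{f}_u \mid G_j)$ for the group-conditional risk of the composite model with threshold $u$ on $G_j$. Because $\ell \ge 0$ and $\{U(x)\le u\}$ grows with $u$, the map $u\mapsto R_j(u)$ is nondecreasing. It is also right-continuous, since $\ell\le B$ and the dominated convergence theorem apply. Define the oracle threshold
\begin{equation*}
u_j^\ast = \sup\{u\in[0,1] : R_j(u)\le \varepsilon\}.
\end{equation*}
By right-continuity, $R_j(u_j^\ast)\le\varepsilon$ whenever this set is nonempty. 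The failure event is $\{R_j(\hat{u}_j) > \varepsilon\}$, and by monotonicity it is contained in $\{\hat{u}_j > u_j^\ast\}$.

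The key step is a fixed-sequence (single-test) argument in the style of Learn-then-Test. On the event $\{\hat{u}_j > u_j^\ast\}$ we would like to conclude that $\hat{L}_{j,u}(\alpha)\le\varepsilon < R_j(u)$ at some \emph{deterministic} $u$, so that Assumption~\ref{assump:ucb-validity} can be applied to a single fixed threshold and no union bound over $u$ is needed. To get this, I will use that $\hat{L}_{j,u}(\alpha)$ is nondecreasing in $u$, which holds for Algorithm~\ref{alg:ucb} with Hoeffding-type bounds since $Z_t(u)\ge0$ grows with $u$. Alternatively, I will read \eqref{eq:u^hat_j} as the first crossing, i.e.\ $\hat{L}_{j,u}(\alpha)\le\varepsilon$ for all $u\le\hat{u}_j$. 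Either way, $\hat{u}_j>u_j^\ast$ implies $\hat{L}_{j,u'}(\alpha)\le\varepsilon$ for every $u'\in(u_j^\ast,\hat{u}_j]$, while $R_j(u')>\varepsilon$ there.

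Choose a deterministic $u' = u_j^\ast+\eta$ and let $\eta\downarrow0$ along a countable sequence. The events $\{\hat{u}_j \ge u_j^\ast+\eta\}$ increase to $\{\hat{u}_j>u_j^\ast\}$ as $\eta\downarrow0$, and each satisfies
\begin{equation*}
\{\hat{u}_j \ge u_j^\ast+\eta\} \subseteq \{\hat{L}_{j,u_j^\ast+\eta}(\alpha) < R_j(u_j^\ast+\eta)\},
\end{equation*}
which has probability at most $\alpha$ by Assumption~\ref{assump:ucb-validity}. Continuity of measure then gives $\mathbb{P}_{\mathcal{D}_j}(\hat{u}_j>u_j^\ast)\le\alpha$, which yields \eqref{eq:known-guarantee}. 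Two edge cases remain. If the set defining $u_j^\ast$ is empty, the same argument runs with $u'$ near $0$. If $\hat{u}_j$ is undefined because no $u$ qualifies, we adopt the convention that every input is routed to $f$, which gives risk $0$. If thresholds are searched over a finite grid $\mathcal{U}$, the argument simplifies: take $u'$ to be the smallest grid point with $R_j>\varepsilon$.

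The main obstacle is the step above. Because $\hat{u}_j$ is data-dependent, Assumption~\ref{assump:ucb-validity}, which holds only pointwise in $u$, cannot be applied at $\hat{u}_j$ directly. The argument therefore relies on monotonicity of $R_j$ together with either monotonicity of $\hat{L}_{j,u}$ in $u$ or the first-crossing definition of $\hat{u}_j$, which has to be checked for the CLT-based bound, where it may fail. If neither holds, I would fall back to a union bound over a finite grid, at the cost of replacing $\alpha$ by $\alpha/|\mathcal{U}|$.
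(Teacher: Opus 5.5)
Your argument is correct in outline but takes a different and more careful route than the paper, and one step needs repair. The paper's proof is a direct substitution. It fixes $G_j$, notes that $\hat{L}_{j,\hat{u}_j}(\alpha)\le\varepsilon$ by construction, and concludes $R(\hat{u}_j\mid G_j)\le\hat{L}_{j,\hat{u}_j}(\alpha)\le\varepsilon$ with probability at least $1-\alpha$. It handles the empty case with the same $\hat{u}_j=-\infty$ convention you use.

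That proof is shorter and needs no structure on $u\mapsto\hat{L}_{j,u}(\alpha)$. However, it evaluates Assumption~\ref{assump:ucb-validity} at the data-dependent $\hat{u}_j$, which a pointwise-in-$u$ guarantee does not license. Your fixed-sequence argument invokes the assumption only at deterministic thresholds, so it actually derives the conclusion from the assumption as stated. The price is an extra hypothesis: either $\hat{L}_{j,u}(\alpha)$ is nondecreasing in $u$, or $\hat{u}_j$ is chosen by a first-crossing rule. Monotonicity holds for the Hoeffding variant but is not guaranteed for the CLT bound of Algorithm~\ref{alg:ucb}. A first-crossing rule differs from the paper's $\max$ rule whenever $\hat{L}$ is non-monotone.

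This hypothesis is not an artifact of your method. Take a pointwise-valid UCB family that dips below $\varepsilon$ on disjoint events at different grid points where $R_j>\varepsilon$. Under the $\max$ rule, its failure probability can reach $\min(1,|\mathcal{U}|\alpha)$. Your union-bound fallback would give the guarantee only for the algorithm run at level $\alpha/|\mathcal{U}|$, not the theorem as stated.

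The step that fails as written is the claim that right-continuity of $R_j$ gives $R_j(u_j^\ast)\le\varepsilon$. Right-continuity transfers information from the right of $u_j^\ast$, where $R_j>\varepsilon$, so it only yields $R_j(u_j^\ast)\ge\varepsilon$. You would need left-continuity, and $R_j$ fails it: it jumps by $\mathbb{E}[\ell(\tilde{f}(x),f(x))\mathbf{1}\{U(x)=u\}\mid x\in G_j]$ at each atom of $U$. Atoms are not exotic here, since the verbalized score is an average of ten reported probabilities.

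When $R_j(u_j^\ast)>\varepsilon$, the failure event is $\{\hat{u}_j\ge u_j^\ast\}$, not $\{\hat{u}_j>u_j^\ast\}$. Your $\eta\downarrow 0$ limit then misses $\{\hat{u}_j=u_j^\ast\}$. The repair is a case split:
\begin{itemize}
\item If $R_j(u_j^\ast)>\varepsilon$, monotonicity of $\hat{L}$ (or the first-crossing rule) gives $\{\hat{u}_j\ge u_j^\ast\}\subseteq\{\hat{L}_{j,u_j^\ast}(\alpha)\le\varepsilon<R_j(u_j^\ast)\}$. This involves a single deterministic threshold, so it has probability at most $\alpha$.
\item Otherwise, your limit argument goes through unchanged.
\end{itemize}
The same fix covers your empty-set case, where you should take $u'=0$ itself rather than $u'$ near $0$. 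On the finite grid $\mathcal{U}$ that Algorithm~\ref{alg:ucb} actually searches, your ``smallest grid point with $R_j>\varepsilon$'' version already handles all of this correctly.
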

Eq.~\eqref{eq:known-guarantee} states the group-wise guarantee.
See Appendix~\ref{app:proof-known} for the proof. We also provide an empirical version of the guarantee that bounds the test risk on a finite test set.

\begin{theorem}[PAC finite-sample guarantee for known grouping]
\label{thm:empirical-known-grouping}
Suppose the loss function $\ell: \mathcal{Y} \times \mathcal{Y} \to [0, B]$ is bounded for some constant $B > 0$, Assumption~\ref{assump:ucb-validity} holds, and the test set $\mathcal{D}_{\text{test}}$ is independent of the calibration set $\mathcal{D}_{\text{cal}}$.
Let $\mathcal{G} = \{G_1, \ldots, G_k\}$ be a known partition, and let $\hat{u}_j$ be the threshold selected by Algorithm~\ref{alg:calibration} for group $G_j$.
Given $\varepsilon, \alpha \in (0,1)$, for each group $G_j$ with $N_j$ test samples, and any $t > 0$,
\begin{equation*}
\label{eq:empirical-known-guarantee}
\mathbb{P}_{\mathcal{D}_j}\left(\hat{R}(\hat{f} \mid G_j) \leq \varepsilon + t\right) \geq 1 - \alpha - \exp\left(-\frac{2 N_j t^2}{B^2}\right),
\end{equation*}
where $\hat{R}(\hat{f} \mid G_j) = \frac{1}{N_j} \sum_{i: x_i \in G_j} \ell(\hat{f}(x_i), f(x_i))$ is the empirical risk on the test set for group $G_j$.
\end{theorem}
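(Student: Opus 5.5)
The plan is to combine Theorem~\ref{thm:known-grouping} with a Hoeffding bound applied conditionally on the calibration data, and then finish with a union bound.

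Define two events. The first is the calibration event $E_1 = \{R(\hat{f} \mid G_j) \le \varepsilon\}$. By Theorem~\ref{thm:known-grouping}, $\mathbb{P}(E_1^c) \le \alpha$.

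The second is the test deviation event
\[
E_2 = \{\hat{R}(\hat{f}\mid G_j) - R(\hat{f}\mid G_j) \le t\}.
\]
To control $E_2$, we condition on $\mathcal{D}_{\text{cal}}$, and on the sampling randomness of Algorithm~\ref{alg:ucb}. This fixes the threshold $\hat{u}_j$, and hence the composite model $\hat{f}$ restricted to $G_j$.

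Since $\mathcal{D}_{\text{test}}$ is independent of $\mathcal{D}_{\text{cal}}$, the $N_j$ test points falling in $G_j$ are, conditionally, i.i.d.\ draws from $\mathcal{P}(\cdot \mid x \in G_j)$. The summands $\ell(\hat{f}(x_i), f(x_i))$ therefore satisfy three properties:
\begin{itemize}
\item they are i.i.d. given $\hat{u}_j$;
\item they lie in $[0,B]$;
\item their conditional mean is exactly $R(\hat{f}\mid G_j)$.
\end{itemize}
One-sided Hoeffding then gives
\[
\mathbb{P}(E_2^c \mid \mathcal{D}_{\text{cal}}) \le \exp(-2N_j t^2/B^2).
\]
Taking expectation over the calibration data preserves this bound unconditionally.

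On $E_1 \cap E_2$ we have
\[
\hat{R}(\hat{f}\mid G_j) \le R(\hat{f}\mid G_j) + t \le \varepsilon + t.
\]
The union bound then yields
\[
\mathbb{P}\big(\hat{R}(\hat{f}\mid G_j) \le \varepsilon + t\big) \ge 1 - \alpha - \exp(-2N_j t^2/B^2),
\]
which is the claim.

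The main subtlety is the conditioning step. The number $N_j$ of test points landing in $G_j$ is itself random unless the group counts are fixed. I would treat $N_j$ as given, conditioning on the group memberships of the test set, which is how the statement is phrased. Given $N_j$, the in-group points are still i.i.d.\ from the conditional law, so Hoeffding applies verbatim.

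One should also note that $\hat{f}$ depends on the calibration data only through $\hat{u}_j$. Independence of the test set is therefore exactly what makes the conditional mean equal the population group risk.
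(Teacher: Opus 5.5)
Your proposal is correct and follows the paper's proof: you invoke Theorem~\ref{thm:known-grouping} to bound the probability of the event $\{R(\hat{f}\mid G_j) > \varepsilon\}$ by $\alpha$, apply one-sided Hoeffding to the test losses conditionally on the calibrated threshold and average via total probability, and finish with the inclusion/union bound. Your explicit conditioning on the test-set group memberships, to handle a possibly random $N_j$, is a small point of care that the paper's proof leaves implicit.
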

See Appendix~\ref{app:proof-empirical-known} for the proof.

\begin{remark}
A common case is bounded loss $\ell \in [0, 1]$, e.g., 0-1 loss for verifiable answers. Then, the bound simplifies to $\mathbb{P}(\hat{R}(\hat{f} \mid G_j) \leq \varepsilon + t) \geq 1 - \alpha - e^{-2 N_j t^2}$.
This provides exact risk control for each group with probability at least $1 - \alpha$ up to a slack $t$ that decreases with test set size.
\end{remark}

\subsection{Oracle partition}
\label{sec:oracle-efficiency}

We characterize the oracle optimal partition and prove that grouping strictly improves efficiency under heterogeneity.
We formalize the notion of an oracle group partition in the PAC framework.
For an error tolerance $\varepsilon > 0$, and a confidence parameter $\alpha \in (0, 1)$, we say a threshold $u_j$ is $(\varepsilon, \alpha)$-\textit{PAC feasible} for group $G_j$ if:
\begin{equation*}
\mathbb{P}(R(u_j \mid G_j) \leq \varepsilon) \geq 1 - \alpha,
\end{equation*}
where $R(u \mid G_j) = \mathbb{E}_{x \sim \mathcal{P}}[\ell(\tilde{f}(x), f(x)) \cdot \mathbf{1}\{U(x) \leq u\} \mid x \in G_j]$ is the performance loss conditional on $G_j$, and the probability is over the randomness of the calibration data $\mathcal{D}_j$ for group $G_j$. The PAC-optimal threshold for group $G_j$ is the largest feasible threshold:
\begin{equation}
\label{eq:pac-optimal-threshold}
\hat{u}_j^*(\varepsilon, \alpha) = \sup\{u : u \text{ is } (\varepsilon, \alpha)\text{-PAC feasible for } G_j\}.
\end{equation}
Then we define the oracle optimal partition that maximizes the use of the non-thinking model while maintaining PAC guarantees: 
\begin{definition}[Oracle optimal partition] 
\label{def:oracle-optimal}
Given an error tolerance $\varepsilon > 0$, a confidence parameter $\alpha \in (0, 1)$, and number of groups $k$, the oracle optimal partition $\mathcal{G}^* = \{G_1^*, \ldots, G_k^*\}$ is defined as:
\begin{equation*}
\mathcal{G}^* = \arg\max_{\mathcal{G} \in \Pi_k(\mathcal{X})} \sum_{j=1}^k p_j \cdot \mathbb{P}(U(x) \leq \hat{u}_j^*(\varepsilon, \alpha) \mid x \in G_j),
\end{equation*}
where $\Pi_k(\mathcal{X})$ denotes the set of all measurable $k$-partitions of the input space $\mathcal{X}$, $p_j = \mathbb{P}(x \in G_j)$ is the probability mass of group $G_j$, and $\hat{u}_j^*(\varepsilon, \alpha)$ is the PAC-optimal threshold for group $G_j$ defined in~Eq.~\eqref{eq:pac-optimal-threshold}.
\end{definition}

\begin{remark}[Efficiency as non-thinking model usage]
  \label{remark:efficiency-interpretation}
  The PAC efficiency $\text{Eff}_{\text{PAC}}(\mathcal{G}; \varepsilon, \alpha)$ equals the probability of routing an input to the non-thinking model under the PAC-optimal thresholds.
  Maximizing efficiency is equivalent to maximizing the use of the non-thinking model while satisfying the PAC guarantee.
  Since the non-thinking model is faster and cheaper than the thinking model, higher efficiency directly translates to lower computational cost.
\end{remark}

\textbf{Benefits of grouping.}
Intuitively, the oracle optimal partition maximizes the usage of the non-thinking model while ensuring that the performance loss of each group satisfies the $(\varepsilon, \alpha)$-PAC guarantee.
A natural question is whether grouping provides any benefit over the marginal approach that uses a single threshold for all inputs.
The following proposition shows that grouping always weakly improves efficiency, with strict improvement when groups exhibit heterogeneous risk profiles:
\begin{proposition}[Group-conditional benefit]
\label{prop:grouping-benefit}
For the optimal partition $\mathcal{G}$, the PAC efficiency satisfies
\begin{equation*}
\text{Eff}_{\text{PAC}}(\mathcal{G}; \varepsilon, \alpha) \geq \text{Eff}_{\text{PAC}}(\{\mathcal{X}\}; \varepsilon, \alpha),
\end{equation*}
where $\{\mathcal{X}\}$ denotes the trivial input space (no grouping).
Equality holds if and only if the PAC-optimal thresholds are identical across all groups:$\hat{u}_1^*(\varepsilon, \alpha) = \cdots = \hat{u}_k^*(\varepsilon, \alpha) = \hat{u}^*(\varepsilon, \alpha),
$
where $\hat{u}^*(\varepsilon, \alpha)$ is the PAC-optimal threshold.
\end{proposition}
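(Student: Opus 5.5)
We first make the efficiency functional explicit. For a $k$-partition $\mathcal{G}$, following Remark~\ref{remark:efficiency-interpretation}, we write
\begin{equation*}
\text{Eff}_{\text{PAC}}(\mathcal{G};\varepsilon,\alpha)=\sum_{j=1}^k p_j\,\mathbb{P}\bigl(U(x)\le \hat{u}_j^*(\varepsilon,\alpha)\mid x\in G_j\bigr)=\sum_{j=1}^k \mathbb{P}\bigl(x\in G_j,\ U(x)\le \hat{u}_j^*\bigr).
\end{equation*}
With the trivial partition this becomes $\mathbb{P}(U(x)\le \hat{u}^*)$.

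\textbf{Weak inequality.} The plan is a comparison argument: we exhibit one $k$-partition whose efficiency is at least the marginal one, and then use optimality of $\mathcal{G}^*$ from Definition~\ref{def:oracle-optimal}. The natural candidate is any partition for which the common threshold $\hat{u}^*$ is PAC feasible in every group. The key claim is then monotonicity: if $\hat{u}^*$ is $(\varepsilon,\alpha)$-PAC feasible for each $G_j$, then $\hat{u}_j^*\ge \hat{u}^*$ by Eq.~\eqref{eq:pac-optimal-threshold}. Since $\mathbf{1}\{U\le u\}$ is nondecreasing in $u$, it follows that $\sum_j \mathbb{P}(x\in G_j, U\le \hat{u}_j^*)\ge \sum_j\mathbb{P}(x\in G_j,U\le\hat{u}^*)=\mathbb{P}(U\le\hat{u}^*)$. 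To obtain such a partition, we would use a degenerate or refining construction: take $G_1=\mathcal{X}$ up to null sets, or split $\mathcal{X}$ into pieces of equal conditional risk profile. In either case the feasibility of $\hat{u}^*$ transfers. Optimality then gives $\text{Eff}(\mathcal{G}^*)\ge \text{Eff}(\{\mathcal{X}\})$.

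\textbf{Equality case.} For the ``if'' direction, suppose all $\hat{u}_j^*=\hat{u}^*$. Then the sum collapses to $\sum_j\mathbb{P}(x\in G_j,U\le\hat{u}^*)=\mathbb{P}(U\le\hat{u}^*)$, so equality holds. For ``only if'', we argue by contrapositive. Suppose the thresholds differ. Since the marginal risk $R(u\mid\mathcal{X})=\sum_j p_jR(u\mid G_j)$ is a mixture, the marginal feasible threshold is constrained by the worst mixture. If some group admits $\hat{u}_j^*>\hat{u}^*$ and the others admit at least $\hat{u}^*$, then strictly more mass is routed to the non-thinking model, provided $\mathbb{P}(x\in G_j,\ \hat{u}^*<U\le\hat{u}_j^*)>0$. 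We would add a mild nondegeneracy condition (e.g.\ $U$ has positive density on the relevant intervals) to make this strict.

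\textbf{Main obstacle.} The hardest step is the ``only if'' direction, together with the transfer of feasibility of $\hat{u}^*$ to each group. Marginal feasibility $R(\hat{u}^*\mid\mathcal{X})\le\varepsilon$ does not imply $R(\hat{u}^*\mid G_j)\le\varepsilon$ for every $j$; this is exactly the paper's motivation. So some group may have $\hat{u}_j^*<\hat{u}^*$, which could offset gains elsewhere. The first thing to try is to exploit that $\mathcal{G}^*$ is optimal over all partitions: reduce to comparing against the refinement or degenerate partition where feasibility transfers trivially, so the weak inequality holds for the optimum regardless. For strictness, the plan is to take the optimal partition and show that if thresholds differ, merging groups (which attains the marginal efficiency) is strictly dominated. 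This requires an exchange argument that moves mass from a low-threshold group into a high-threshold group while preserving feasibility, and I expect this to require the nondegeneracy assumption to be stated explicitly.
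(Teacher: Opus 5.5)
The genuine gap is the \emph{only if} direction, and the positive-density assumption you propose cannot close it. Your contrapositive only treats the case where every $\hat u_j^*\ge\hat u^*$. In an optimal partition some cells may have $\hat u_j^*<\hat u^*$, and then equality can hold with distinct thresholds. Take $U\sim\mathrm{Unif}[0,1]$, $\ell\equiv 1$, $\varepsilon=0.2$, $k=2$. Then $R(u\mid G)=\mathbb{P}(U\le u\mid G)$ is a continuous conditional CDF, so each cell contributes $\mathbb{P}(U\le\hat u_j^*\mid G_j)\le 0.2$. Hence every $2$-partition has efficiency at most $0.2=\text{Eff}_{\text{PAC}}(\{\mathcal{X}\};\varepsilon,\alpha)$, with $\hat u^*=0.2$. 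The partition $G_1=\{U\le 1/2\}$, $G_2=\{U>1/2\}$ attains $0.2$ and is therefore optimal, yet $\hat u_1^*=0.1$ and $\hat u_2^*=0.6$. Since $U$ has a positive density everywhere, the stated equivalence is false rather than merely hard to prove, and no exchange argument will rescue it. Note also that the argmax need not be unique, so ``the optimal partition'' is ambiguous. Your \emph{if} direction is fine. Your sub-case argument also establishes a correct sufficient condition for strictness: if some $k$-partition has $\hat u^*$ feasible in every cell and $\mathbb{P}(x\in G_j,\ \hat u^*<U\le\hat u_j^*)>0$ for some $j$, then $\text{Eff}_{\text{PAC}}(\mathcal{G}^*;\varepsilon,\alpha)>\text{Eff}_{\text{PAC}}(\{\mathcal{X}\};\varepsilon,\alpha)$. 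The paper's equality argument does not avoid the problem either. It rests on termwise domination $\hat u_j^*\ge\hat u^*$ for all $j$.

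Your weak-inequality argument is sound and takes a different route from the paper. The paper argues for an arbitrary partition by asserting that $\hat u^*$ is feasible in every group. That is exactly the step you flag, and for non-optimal partitions the inequality can genuinely fail. For example, with two equal-mass cells, $U$ uniform in each, losses $0.3$ and $0$, and $\varepsilon=0.15$, the marginal rule routes everything while the grouped rule routes only $0.75$ of the mass. Comparing $\mathcal{G}^*$ with the degenerate partition is the correct replacement, and it explains why the claim can only be made for the optimum. It does require $\Pi_k(\mathcal{X})$ to admit null cells, with the convention $p_j\,\mathbb{P}(\cdot\mid G_j):=0$. If cells must carry positive mass, you do not need identical risk profiles, which may not exist. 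For non-atomic $\mathcal{P}$, Lyapunov's convexity theorem gives $k$ cells with equal mass and equal routed loss $\mathbb{E}[\ell\,\mathbf{1}\{U\le\hat u^*\}\mathbf{1}\{x\in G_j\}]$. Then $R(\hat u^*\mid G_j)=R(\hat u^*)\le\varepsilon$ in every cell, which is all that feasibility of $\hat u^*$ requires.
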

See Appendix~\ref{app:proof-grouping-benefit} for the proof.
Heterogeneity in the conditional risk functions $R(u \mid G_j)$ arises from different data distributions or non-uniform model capability across groups. 


\subsection{Validity of C-PAC reasoning}
\label{sec:unknown-analysis}

Let $\hat{\mathcal{G}} = \{\hat{G}_1, \ldots, \hat{G}_k\}$ be a learned partition and let $\mathcal{G}^* = \{G_1^*, \ldots, G_k^*\}$ be the oracle one.
Let $\hat{g}, g^*: \mathcal{X} \to [k]$ be their group assignment functions.
We define the \emph{partition gap} as
\begin{equation}
\label{eq:partition-gap}
\delta := \min_{\sigma \in S_k} \mathbb{P}_{x \sim \mathcal{P}}\left(\hat{g}(x) \neq \sigma(g^*(x))\right),
\end{equation}
where $S_k$ is the set of permutations on $[k]$.
When $\delta = 0$, the learned partition matches the oracle exactly (up to relabeling).
We establish coverage guarantees under two approaches: sample splitting uses disjoint sets $\mathcal{D}_{\text{cluster}}$ and $\mathcal{D}_{\text{cal}}$ for clustering and threshold calibration, while the joint approach uses the same set $\mathcal{D}_{\text{cal}}$ for both tasks.

\begin{theorem}[PAC guarantee for unknown grouping]
\label{thm:unknown-grouping}
Let $\hat{\mathcal{G}} = \{\hat{G}_1, \ldots, \hat{G}_k\}$ be a learned partition and let $\delta$ be the partition gap in Eq.~\eqref{eq:partition-gap}.
Suppose the loss function \(\ell\) is bounded for some constant $B > 0$, and Assumption~\ref{assump:ucb-validity} holds.
If Algorithm~\ref{alg:calibration} is applied with error tolerance $\varepsilon > 0$ and confidence parameter $\alpha \in (0,1)$, then:
\begin{enumerate}
\item \textbf{Sample splitting:} If $\hat{\mathcal{G}}$ is learned from $\mathcal{D}_{\text{cluster}}$ and calibration uses $\mathcal{D}_{\text{cal}}$ (disjoint from $\mathcal{D}_{\text{cluster}}$), then for each learned group $\hat{G}_j$,
\begin{equation}
\label{eq:unknown-guarantee-split}
\mathbb{P}_{\mathcal{D}_{\text{cal}}}(R(\hat{f} \mid \hat{G}_j) \leq \varepsilon) \geq 1 - \alpha.
\end{equation}
And, the efficiency satisfies $\text{Eff}_{\text{PAC}}(\mathcal{G}^*; \varepsilon, \alpha) - \text{Eff}_{\text{PAC}}(\hat{\mathcal{G}}; \varepsilon, \alpha) \leq 2\delta$.

\item \textbf{Joint approach:} If $\hat{\mathcal{G}}$ is learned from the calibration set $\mathcal{D}_{\text{cal}}$ of size $n$ (same set used for both clustering and calibration), then with probability at least $1 - \alpha$,
\begin{equation}
\label{eq:unknown-guarantee-joint}
R(\hat{f} \mid \hat{G}_j) \leq \varepsilon + c \cdot \delta \quad \text{for all } j = 1, \ldots, k,
\end{equation}
for some positive constant \(c\).
\end{enumerate}
\end{theorem}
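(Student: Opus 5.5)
The plan splits along the two parts of the statement: part 1 is a conditioning argument combined with a coupling bound, and part 2 is a union bound over groups plus a mass-perturbation bound.

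\textbf{Part 1, validity.} Condition on $\mathcal{D}_{\text{cluster}}$. Given it, $\hat{\mathcal{G}}$ is a fixed, deterministic partition, and $\mathcal{D}_{\text{cal}}$ is still i.i.d.\ from $\mathcal{P}$ and independent of $\hat{\mathcal{G}}$. So, conditionally, we are exactly in the known-grouping setting. Apply Theorem~\ref{thm:known-grouping}, using Assumption~\ref{assump:ucb-validity} for the fixed groups $\hat G_j$, to get
\begin{equation*}
\mathbb{P}_{\mathcal{D}_{\text{cal}}}\bigl(R(\hat f\mid \hat G_j)\le\varepsilon \,\big|\, \mathcal{D}_{\text{cluster}}\bigr)\ge 1-\alpha .
\end{equation*}
Taking expectation over $\mathcal{D}_{\text{cluster}}$ gives \eqref{eq:unknown-guarantee-split}.

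\textbf{Part 1, efficiency gap.} Choose the permutation $\sigma$ attaining $\delta$ and relabel so that $\sigma$ is the identity. Write
\begin{equation*}
\text{Eff}_{\text{PAC}}(\mathcal{G})=\mathbb{P}\bigl(U(x)\le \hat u^*_{g(x)}\bigr).
\end{equation*}
Let $E=\{\hat g(x)\neq g^*(x)\}$. Outside $E$ the two group assignments agree, so the only differences come from (i) points in $E$, which contribute at most $\delta$, and (ii) a change in the optimal thresholds caused by moving mass $\delta$ between groups.

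To handle (ii), construct an explicit feasible threshold rule for $\hat{\mathcal{G}}$ that agrees with the oracle rule on $E^c$ and loses at most another $\delta$ of routing mass. Then use the fact that $\text{Eff}_{\text{PAC}}(\hat{\mathcal{G}})$ is at least the efficiency of any feasible rule. This gives a total gap of $2\delta$.

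The difficulty is that feasibility in the conditional risk $R(u\mid \hat G_j)$ is not automatically preserved when groups are perturbed. I would use monotonicity of $R(u\mid G)$ in $u$ and boundedness of the loss. Concretely, lower each threshold just enough to absorb the risk contributed by the misassigned mass in $E$. I would then argue that the extra non-thinking mass lost by this lowering is also bounded by $\delta$. If this direct route fails, I would weaken the argument to an accounting of mass only, using that both efficiencies are probabilities of events differing on a set of measure at most $2\delta$.

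\textbf{Part 2, joint approach.} Compare $\hat{\mathcal{G}}$ with the oracle $\mathcal{G}^*$, which is data-independent.

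1. Apply Assumption~\ref{assump:ucb-validity} to each oracle group $G_j^*$. A union bound, or a Bonferroni-adjusted $\alpha/k$ absorbed into the constant, gives $\hat L_{j,u}\ge R(u\mid G^*_j)$ for all $j$ on an event of probability at least $1-\alpha$.

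2. Bound the change in conditional risk when passing from $G_j^*$ to $\hat G_j$. With loss in $[0,B]$ and $\mathbb{P}(\hat G_j\triangle G_j^*)\le\delta$, the standard ratio-of-expectations perturbation gives
\begin{equation*}
|R(u\mid \hat G_j)-R(u\mid G_j^*)|\le 2B\delta/\min_j p_j .
\end{equation*}

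3. The empirical UCB computed on the learned group differs from the oracle-group UCB by the same order, up to the sampling fluctuation already included in $\delta(n,k)$.

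Combining these on the event from step 1 gives $R(\hat f\mid\hat G_j)\le\varepsilon+c\delta$ with $c=O(B/\min_j p_j)$.

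The main obstacle I expect is step 2 together with the threshold-feasibility transfer in the efficiency bound. Both require a lower bound on the group masses $p_j$, which I would add implicitly into $c$.
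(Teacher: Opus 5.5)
Your sample-splitting validity argument is the same as the paper's: condition on the independent $\hat{\mathcal G}$, apply the known-grouping result, and integrate out. The gaps are in the efficiency bound and in the joint approach.

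\textbf{Efficiency bound.} Your fallback says both efficiencies are probabilities of events that differ on a set of mass at most $2\delta$. That is exactly the paper's one-line argument, and it has the defect you yourself identified. It evaluates the learned groups at the oracle's per-label thresholds. But $\text{Eff}_{\text{PAC}}(\hat{\mathcal G};\varepsilon,\alpha)$ uses the PAC-optimal thresholds of the $\hat G_j$, so the two routing events can also differ on $E^c$.

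Your primary route does not repair this. The claim that restoring feasibility costs at most $\delta$ of routing mass is false, because the hard constraint lets thresholds jump. A counterexample:
\begin{itemize}
\item Take $k=2$ and $\ell\in\{0,1\}$. Let $U$ have an atom of mass $s$ at $0$ carrying loss $1$, and be uniform on $(0,1]$ with loss $0$ elsewhere.
\item Set $G_1^*=\{U\le 1/2\}$ and $G_2^*=\{U>1/2\}$, with $s$ chosen so that $s/\mathbb{P}(G_1^*)=\varepsilon$. Every threshold is feasible in both groups, so $\text{Eff}_{\text{PAC}}(\mathcal G^*)=1$ and $\mathcal G^*$ is an oracle partition.
\item Move the cut to $1/2-\eta$. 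This gives $\delta=(1-s)\eta$.
\item Now $R(u\mid\hat G_1)>\varepsilon$ for every $u\ge 0$, so $\text{Eff}_{\text{PAC}}(\hat{\mathcal G})\le s+\mathbb{P}(\hat G_2)\approx 1/2$ for small $\varepsilon,\eta$, while $2\delta\to 0$.
\end{itemize}
So the $2\delta$ claim is false without further assumptions.

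Under a margin condition such as $\mathbb{E}[\ell\mid U=u,\,x\in\hat G_j]\ge\kappa>0$ just below the oracle thresholds, your lowering argument does work. The conditional-risk excess $2B\,\mathbb{P}(\hat G_j\triangle G_j^*)/\mathbb{P}(\hat G_j)$ is absorbed by giving up at most $2B\,\mathbb{P}(\hat G_j\triangle G_j^*)/\kappa$ of routing mass. But this yields a bound of roughly $(1+4B/\kappa)\delta$, not $2\delta$.

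\textbf{Joint approach.} Here you take a genuinely different route from the paper. The paper splits $R(\hat f\mid\hat G_j)$ over $\{\hat g=g^*\}$ and $\{\hat g\ne g^*\}$ and bounds the second piece by $B\delta/p_j$. It bounds the first piece by $\varepsilon$ by invoking Assumption~\ref{assump:ucb-validity} conditionally on $\hat{\mathcal G}$. It then assumes $p_j\ge 1/(2k)$ to get $c=2Bk$, and concludes for each group separately.

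Anchoring on the data-independent $\mathcal G^*$, as you do, is the better instinct, since conditioning on a partition learned from $\mathcal D_{\text{cal}}$ is exactly what data reuse forbids. But it moves the whole difficulty into your step 3, which is asserted rather than proved.
\begin{itemize}
\item The algorithm's UCB is built from the calibration points in $\hat G_j$. Comparing it with the oracle-group UCB at the data-dependent $\hat u_j$ requires bounding the \emph{empirical} fraction of calibration points in $\hat G_j\triangle G_j^*$. You also need its effect on $\hat\mu_Z$ and $\hat\sigma_Z$, which come from different importance samples.
\item Since $\hat G_j$ is chosen using those same points, the population gap $\delta$ does not control this fraction. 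You need a uniform-convergence bound over the partitions the clusterer can output (e.g.\ $k$ intervals in $U$) and over $u$.
\item That bound adds a sampling term of order $n^{-1/2}$, up to logarithmic factors and $1/\min_j\mathbb{P}(\hat G_j)$. This term is not of the form $c\,\delta$.
\item The $\delta$ in the statement is the population quantity in Eq.~\eqref{eq:partition-gap}. The $\delta(n,k)$ slack mentioned in Section~\ref{sec:uncertainty-clustering} is not among the hypotheses, so you cannot appeal to it.
\end{itemize}

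Finally, the Bonferroni adjustment cannot be absorbed into $c$. The algorithm runs each group at level $\alpha$; switching to $\alpha/k$ changes the UCB width by an $O(m^{-1/2})$ term, not an $O(\delta)$ one. At level $\alpha$ per group you only get simultaneous validity with probability $1-k\alpha$. Indeed, the paper's proof establishes only the per-group reading of \eqref{eq:unknown-guarantee-joint}.
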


Eq.~\eqref{eq:unknown-guarantee-split} and \eqref{eq:unknown-guarantee-joint} analyze the two settings:
Sample splitting uses disjoint data for partition learning and calibration, giving exact coverage regardless of the partition gap $\delta$.
Joint learning reuses the same data, improving efficiency but incurring a coverage gap of order $c\delta$.
In both cases, a larger $\delta$ reduces efficiency, but under standard conditions $\delta=O(n^{-\beta})$~\citep{lu2016statistical, luxburg2008consistency}, so the joint gap vanishes as $n$ grows.

\section{Experiments}
\label{sec:experiments}
In this section, we evaluate G-PAC (for known partitions) and C-PAC (for learned partitions) across multiple LLM benchmarks and deployment settings.
Specifically, we investigate two key questions:
(i) \textbf{Group-conditional risk control}: Can G-PAC and C-PAC effectively control the performance loss for each group, especially when vanilla PAC reasoning fails?
(ii) \textbf{Trade-off and cost}: What is the cost of achieving group-level risk control in terms of efficiency? 

\begin{figure*}[t]
    \centering
    \begin{subfigure}[t]{\linewidth}
        \centering
        \includegraphics[width=0.27\linewidth]{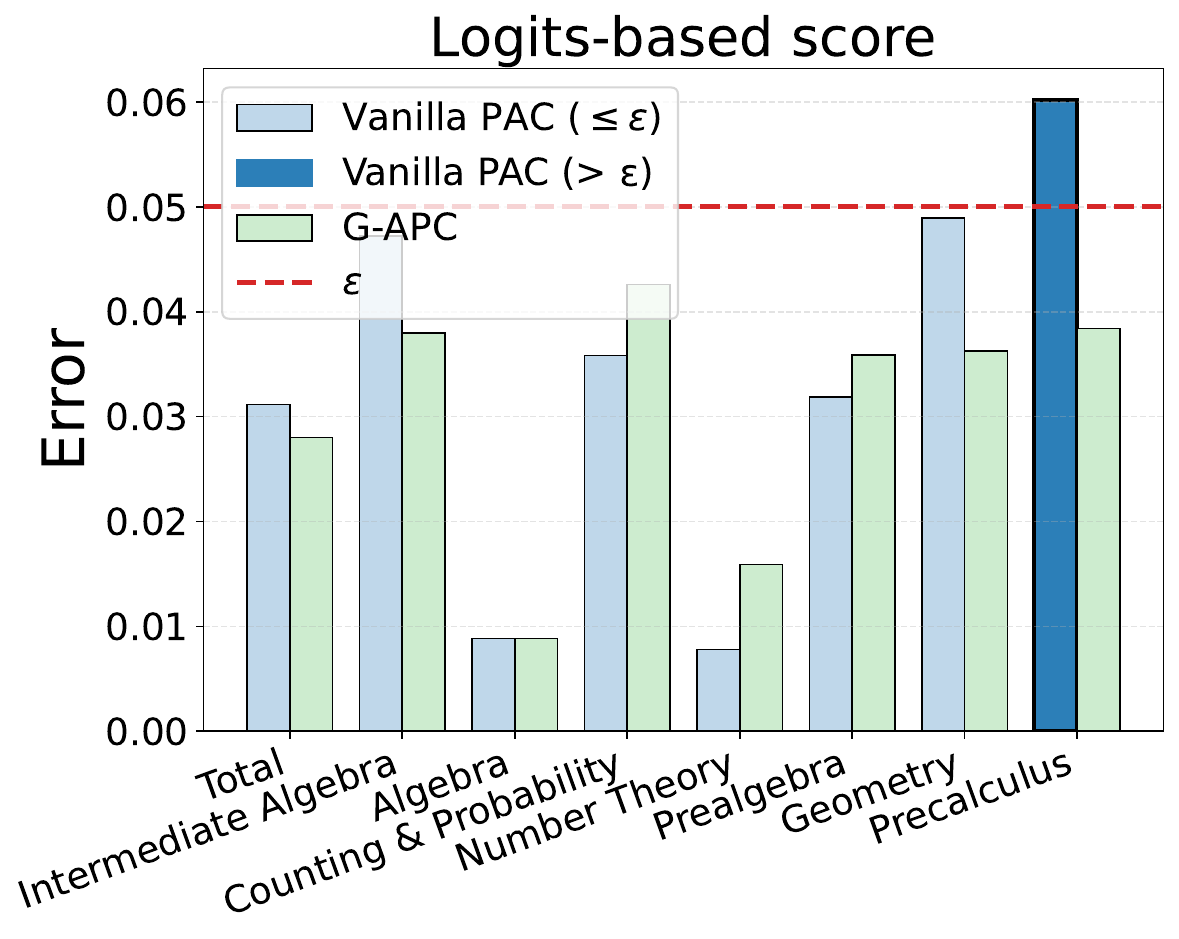}
        \includegraphics[width=0.27\linewidth]{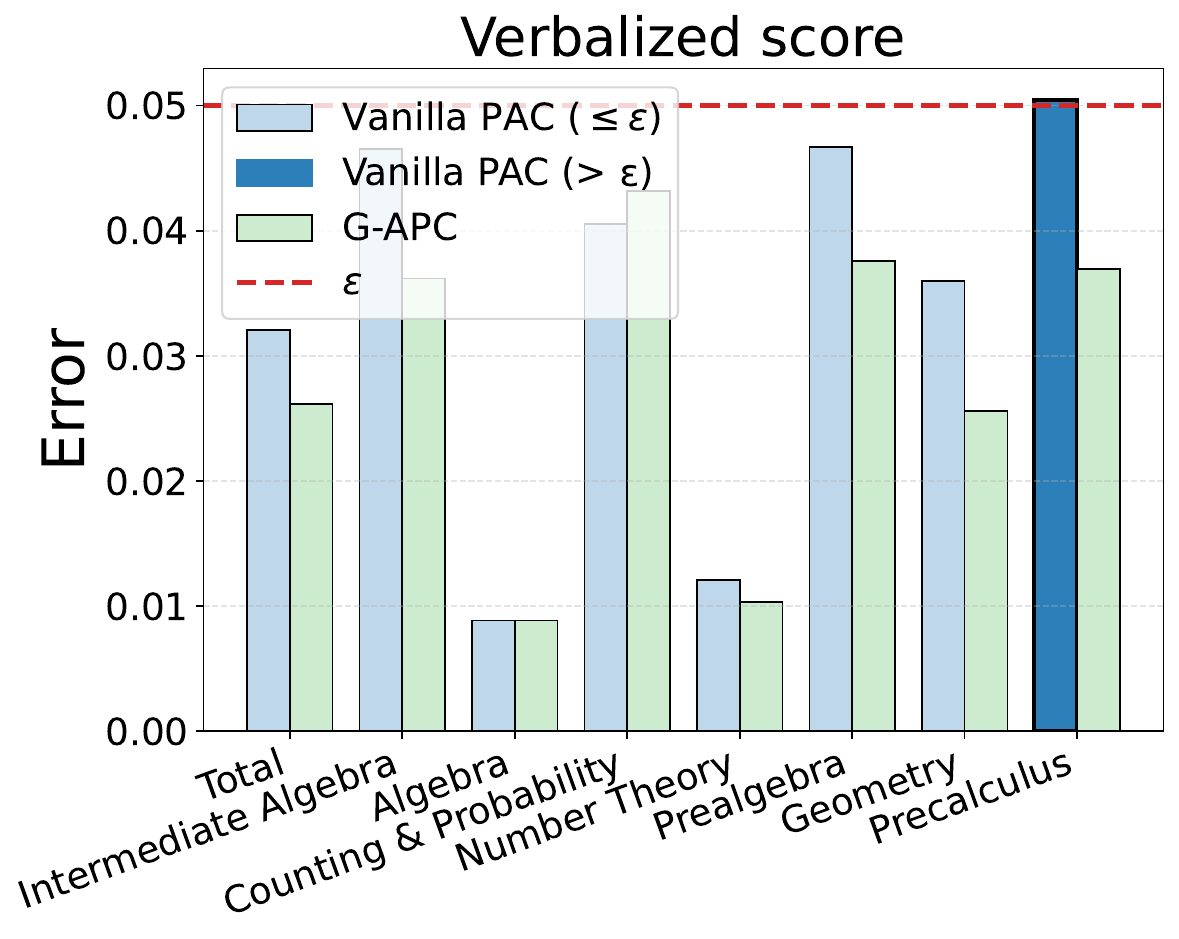}
        \includegraphics[width=0.27\linewidth]{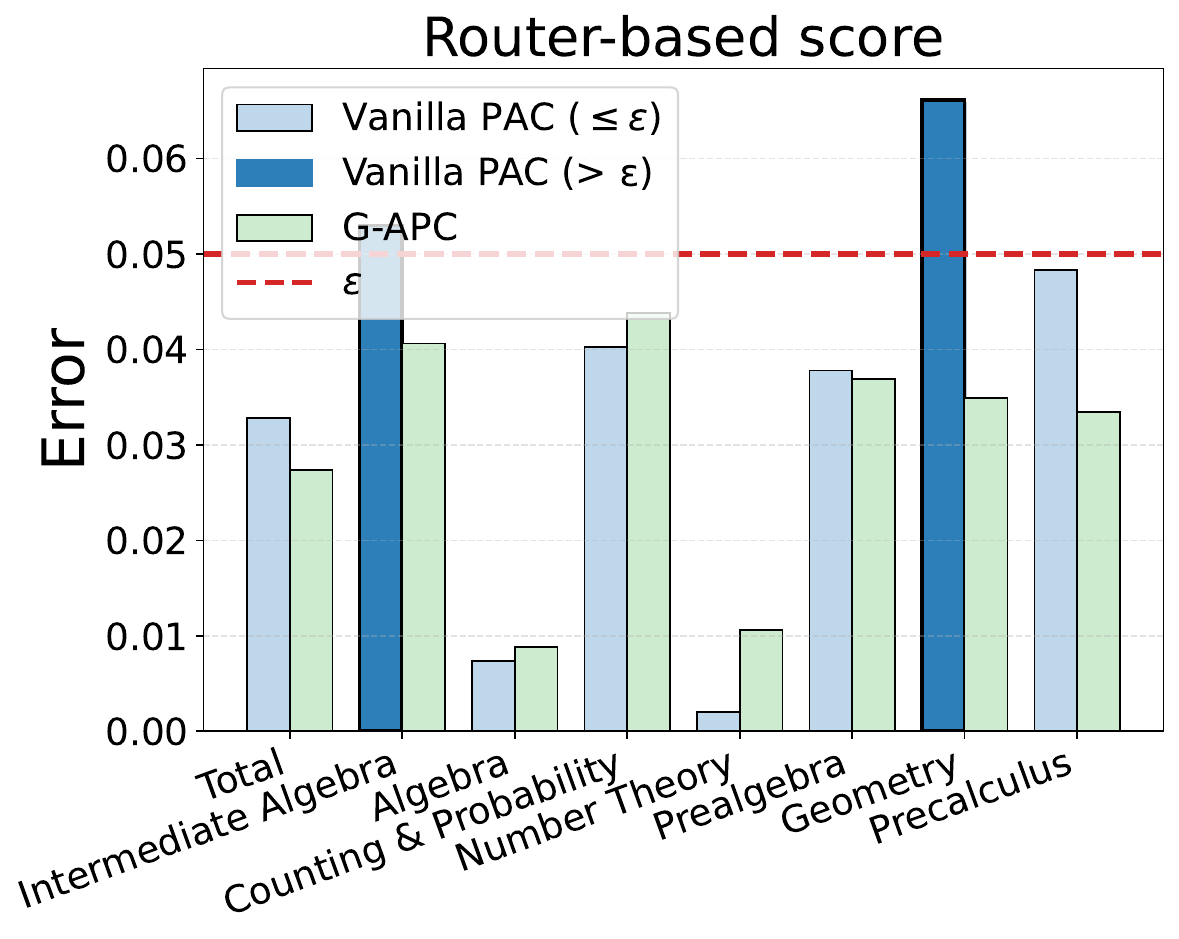}
        \caption*{MATH-500}
    \end{subfigure}\\
    \begin{subfigure}[t]{\linewidth}
        \centering
        \includegraphics[width=0.27\linewidth]{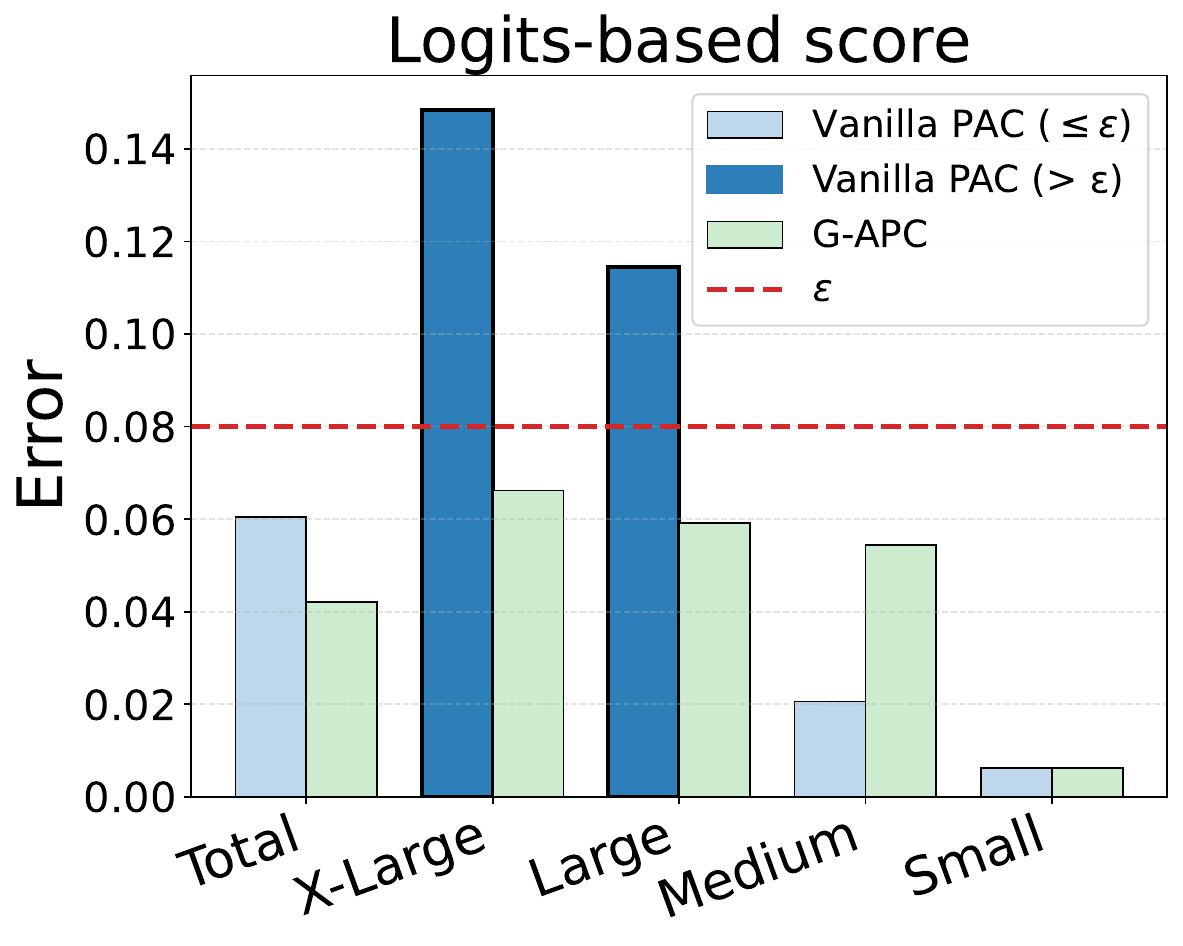}
        \includegraphics[width=0.27\linewidth]{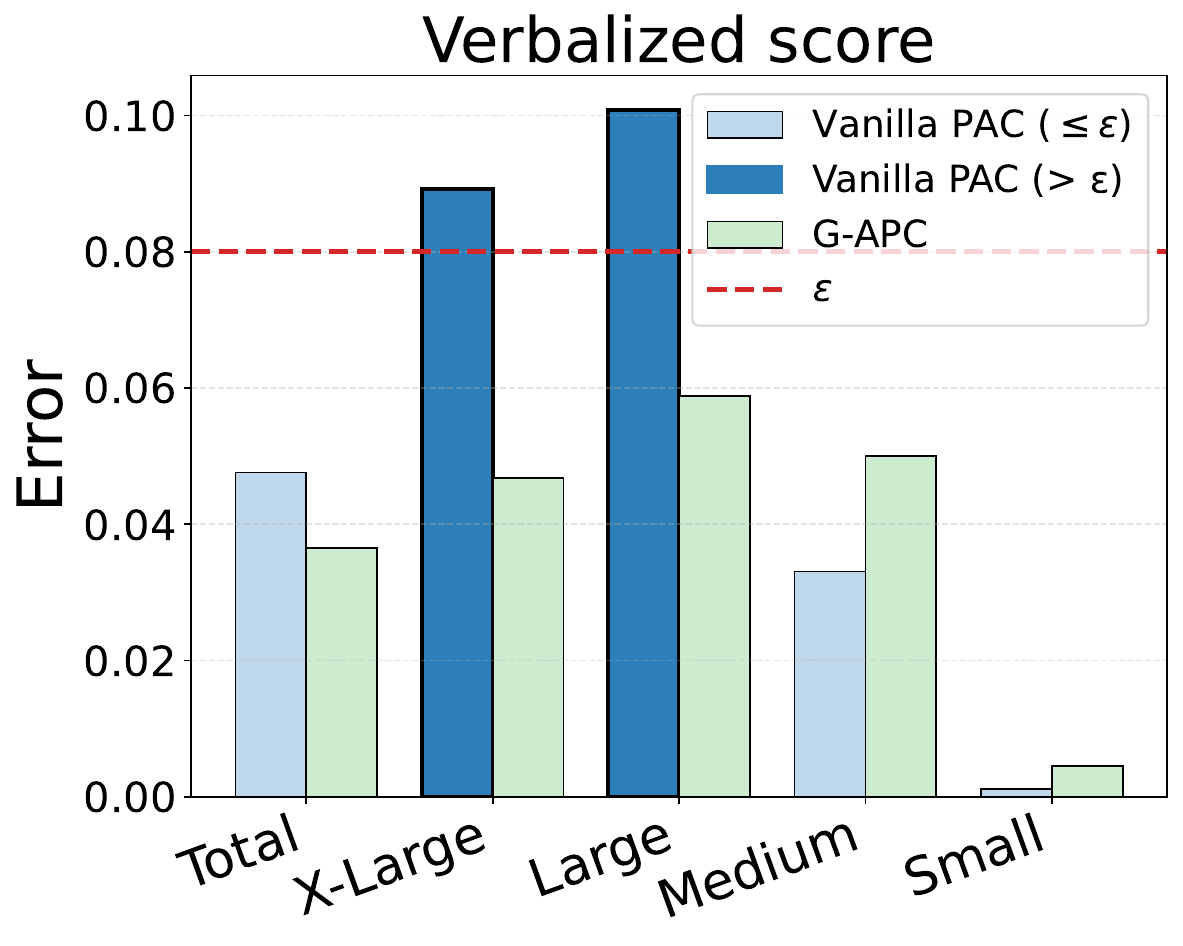}
        \includegraphics[width=0.27\linewidth]{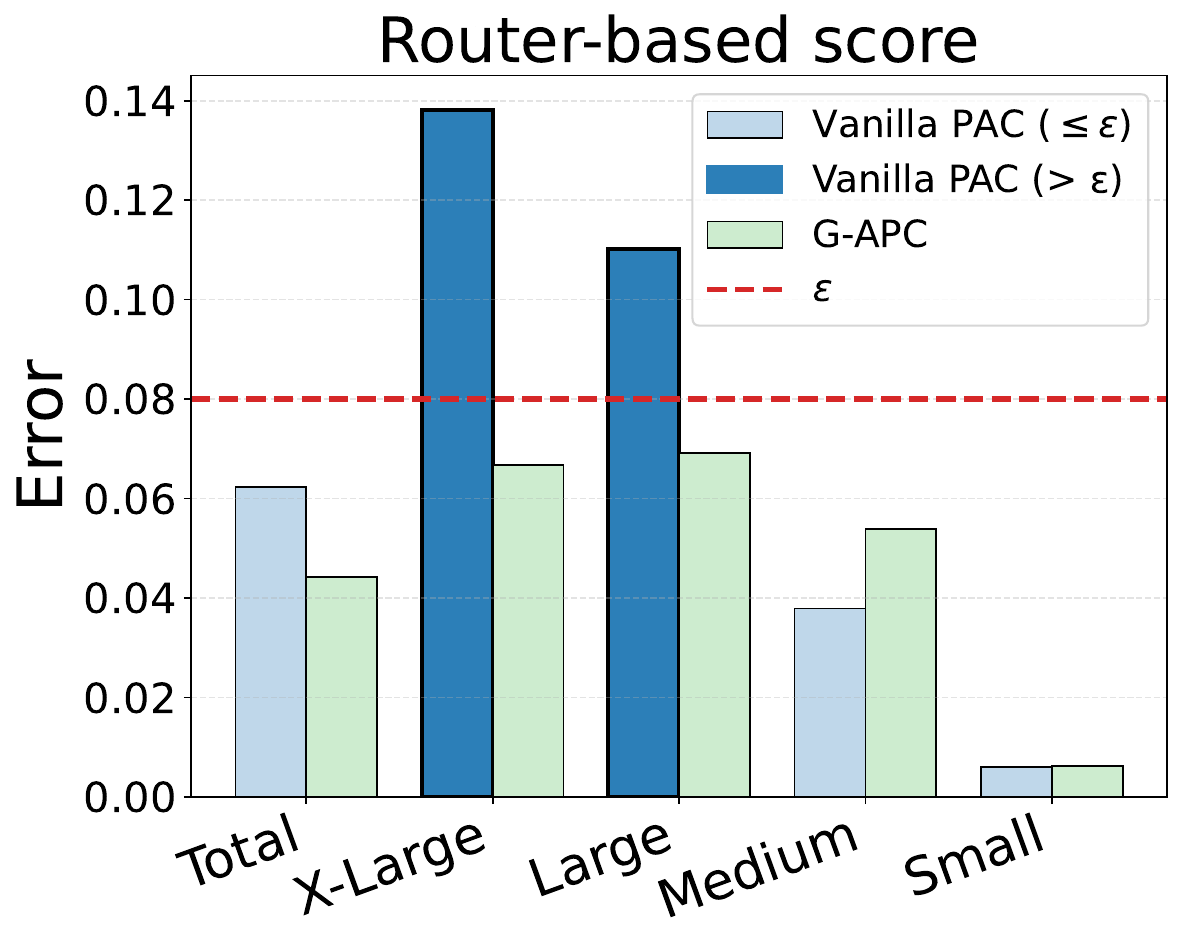}
        \caption*{ZebraLogic}
    \end{subfigure}
    \begin{subfigure}[t]{\linewidth}
        \centering
        \includegraphics[width=0.27\linewidth]{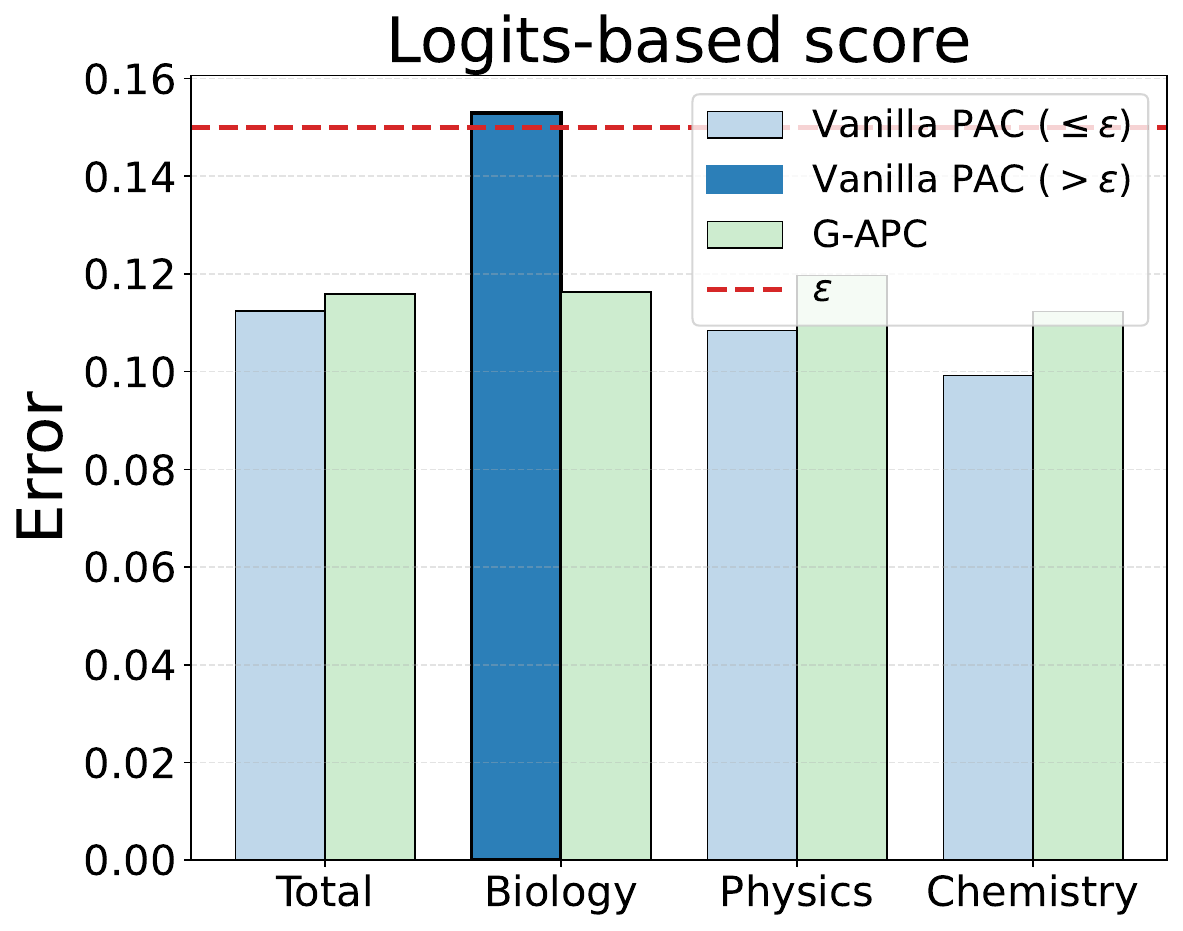}
        \includegraphics[width=0.27\linewidth]{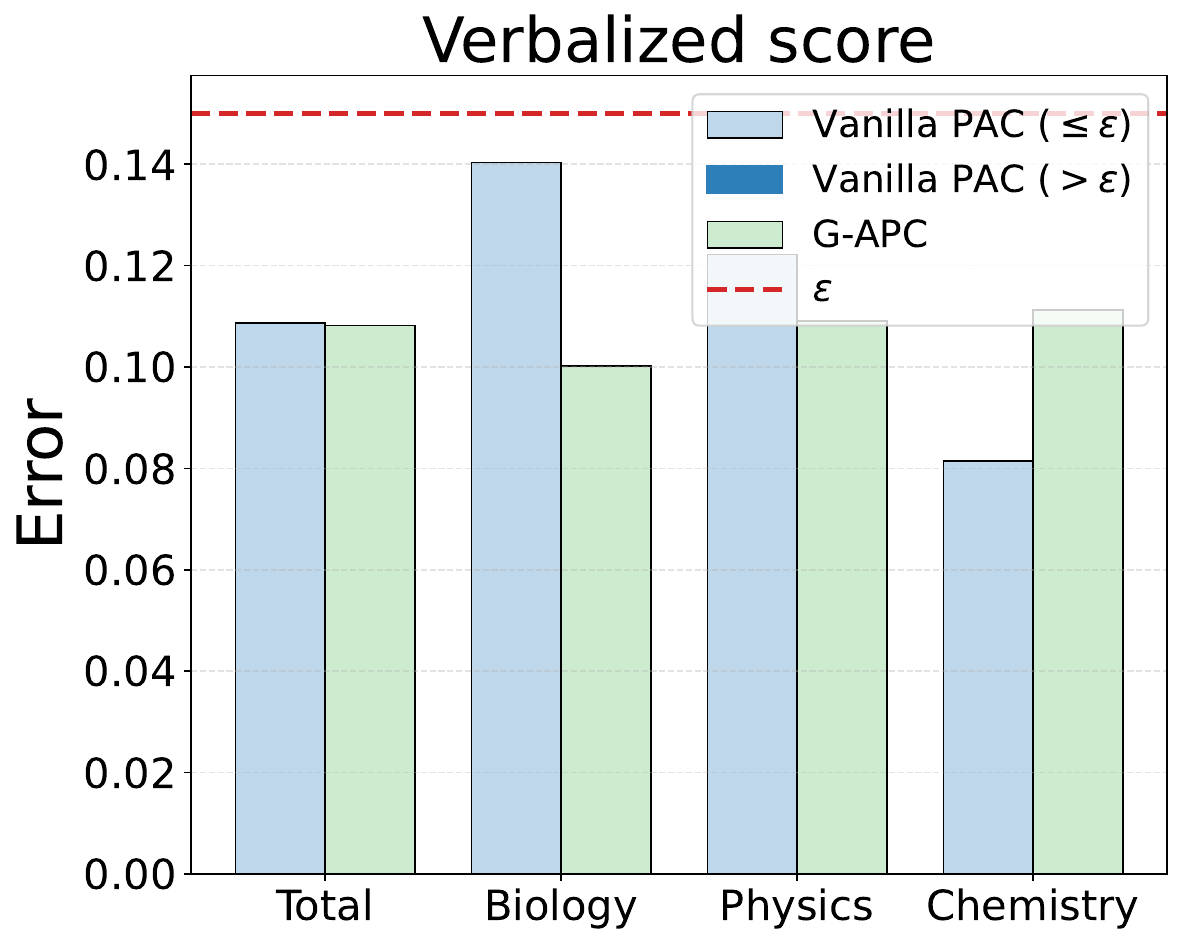}
        \includegraphics[width=0.27\linewidth]{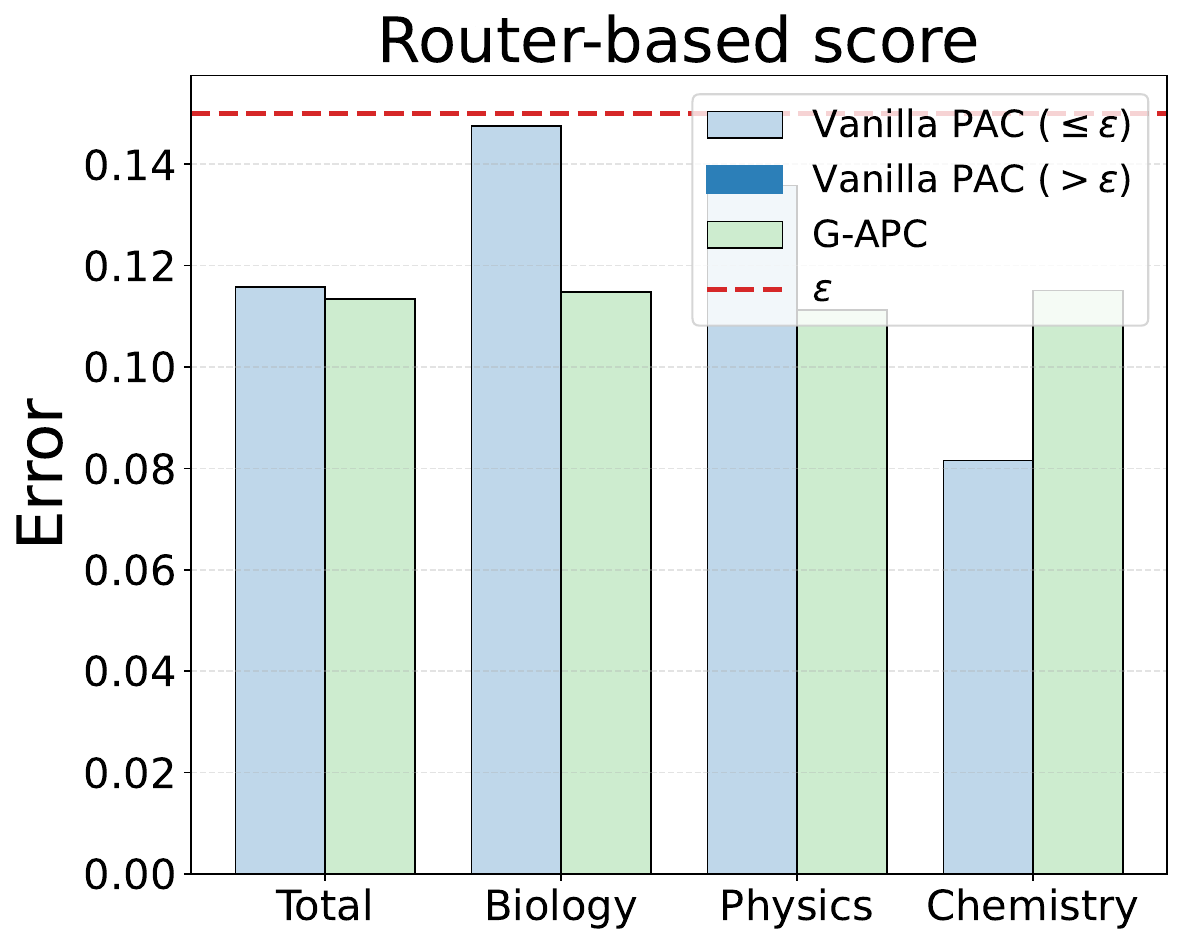}
        \caption*{GPQA}
    \end{subfigure}
    \caption{
        \textbf{G-PAC controls the group-conditional performance loss below the target while vanilla PAC fails (dark blue) across three different uncertainty scores.} All results are obtained using the Qwen model.
   The figure reports the overall and per-category performance losses on three reasoning benchmarks.}
    \label{fig:errorbar_category}
\end{figure*}

\subsection{Setup}
\label{sec:exp_setup}

\textbf{Large language models and Datasets.} We evaluate the PAC reasoning based on Qwen3 series models~\citep{yang2025qwen3} and Llama-3.1-8B–based models. 
Specifically, we employ the ``Qwen3-4B-Thinking-2507'' as the thinking LLM and ``Qwen3-4B-Instruct-2507'' as the non-thinking LLM. Moreover, we use ``DeepSeek-R1-Distill-Llama-8B''~\citep{deepseek-ai2025deepseekr1} as the thinking model and ``Llama-3.1-8B-Instruct''~\citep{grattafiori2024llama} as the non-thinking model. Further details are provided in Appendix~\ref{app:llm_datasets}.
We conduct experiments on a range of real-world benchmarks, including MATH-500~\citep{lightman2023lets}, ZebraLogic~\citep{lin2025zebralogic}, and GPQA~\citep{rein2024gpqa}.
Additional details can be found in Appendix~\ref{app:llm_datasets}.

\textbf{Uncertainty scores.} In our experiments, we consider three uncertainty scores to support different deployment settings: a logits-based score, a verbalized score, and a router-based score, with definitions provided in Appendix~\ref{app:uncertainty_score}.

\textbf{Baselines and evaluation metrics.} We compare the proposed G-PAC reasoning with the naive PAC reasoning~\citep{zeng2025pac}.
To evaluate risk control under both marginal and group-conditional settings, we adopt two evaluation metrics, i.e., \text{Error} and $\text{Error}_{\text{Gap}}$, to measure the empirical risk and its gap on the test dataset.
In addition, we use \textit{Saved Token Percentage} (STP) to quantify computational cost savings.
Details are described in detail in Appendix~\ref{app:evalution_metrics}.

Across all experiments, we fix the confidence level at $\alpha = 0.05$ and the sampling weight at $\pi = 0.5$, while varying $\varepsilon$.
Each experiment is repeated 100 times, and we report the mean error and cost savings, using the binary loss to quantify performance loss (see Appendix~\ref{app:loss_function}).


\begin{figure*}[!t]
    \centering
    \begin{subfigure}[t]{\linewidth}
        \centering
        \includegraphics[width=0.31\linewidth]{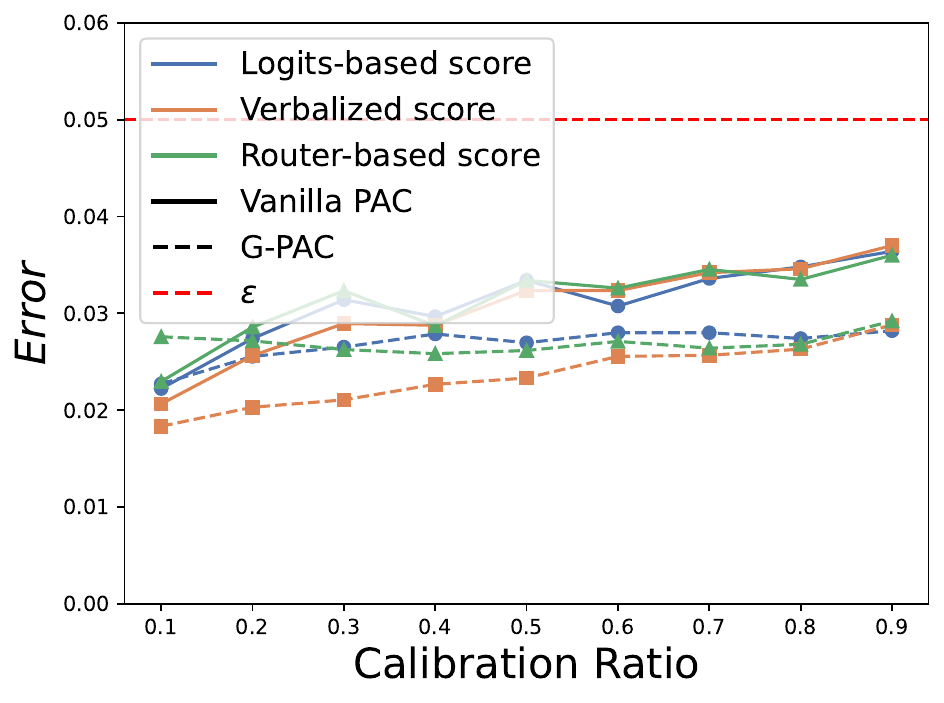}
        \includegraphics[width=0.31\linewidth]{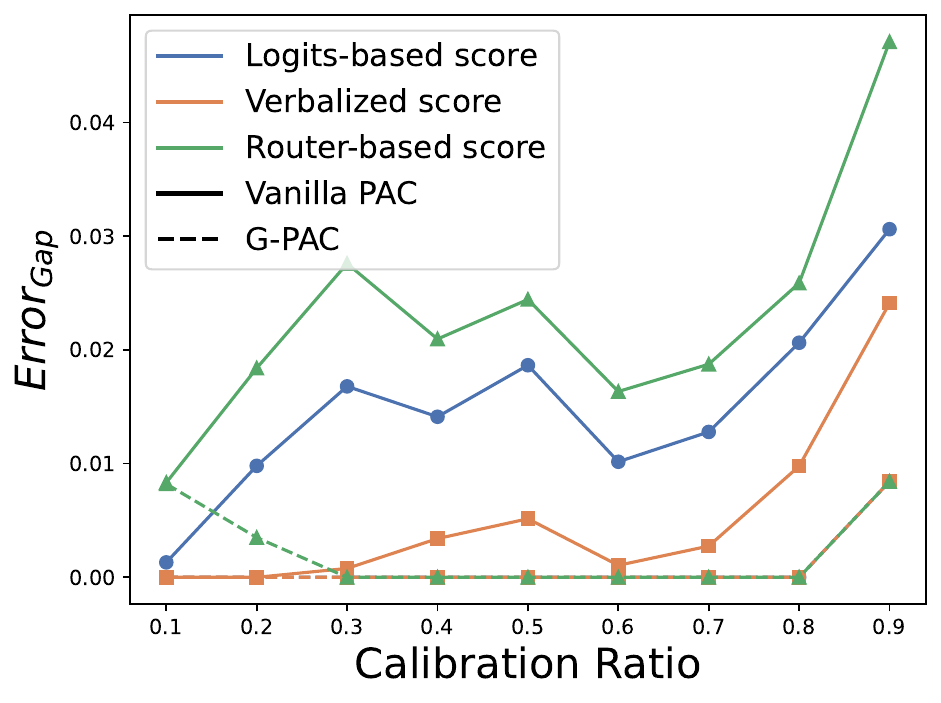}
        \includegraphics[width=0.31\linewidth]{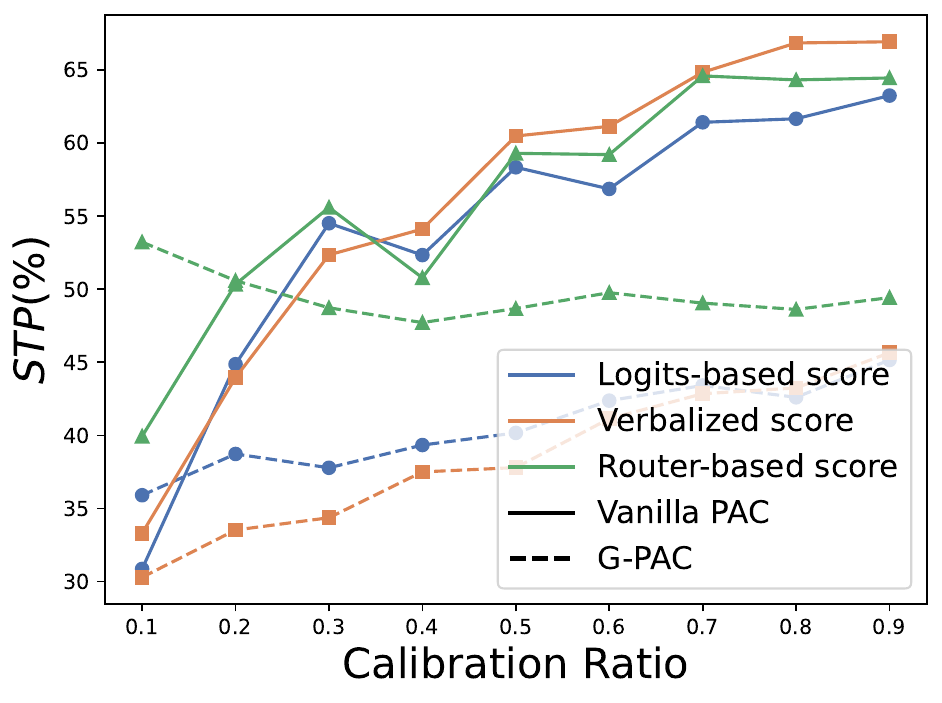}
    \end{subfigure}
    \caption{\textbf{More calibration samples enhance efficiency of reasoning.}
    Experimental results of G-PAC reasoning for different calibration ratios on MATH-500.  The red dashed line $\varepsilon$ means the target risk level.  }
    \label{fig:results_calibration_ratio_math500}
\end{figure*}

\subsection{Results}
\textbf{G-PAC and C-PAC successfully control the group-conditional performance loss below the target while vanilla PAC fails.}
Table~\ref{table:01_results_known} and Table~\ref{table:01_results_unknown} report the results of G-PAC reasoning under known and unknown partitions.
In the known case, we use the predefined partitions from each dataset. For C-PAC, groups are made by clustering uncertainty scores into 3 bins.
Across all datasets, vanilla PAC reasoning fails to control the group-wise error, leading to non-zero $\text{Error}_{\text{Gap}}$.
Specifically, as shown in Figure~\ref{fig:errorbar_category}, vanilla PAC reasoning has high error rates in some groups, such as the \textit{precalculus} category in MATH-500, the \textit{X-Large} category in Zebra Logic, and the \textit{biology} category in GPQA.
In contrast, both G-PAC and C-PAC achieve zero group-conditional error gap, showing they can control group-wise risk even when group labels are not known beforehand.
Although G-PAC and C-PAC may slightly reduce STP because of stricter rules, they still maintain good efficiency in most cases and offer a practical way to balance validity and speed.

\textbf{Trade-off between validity and efficiency.} Although G-PAC and C-PAC ensure group-wise error control, partitioning the input space reduces the calibration samples per group, leading to a loss in efficiency (STP). As shown in Table~\ref{table:01_results_known}, G-PAC often has a lower STP than vanilla PAC while maintaining valid risk control. 
The results in Figure~\ref{fig:results_calibration_ratio_math500} further reflect this sample loss cost, as a larger calibration set helps recover some of the efficiency.

\begin{table}[!t]
\centering
\caption{\textbf{G-PAC performs a smaller error gap than vanilla PAC.}
Experimental results of the binary loss function on verifiable datasets ($\alpha=0.05$).  
We set $\varepsilon=0.05$ for MATH-500, $\varepsilon=0.08$ for ZebraLogic, and $\varepsilon=0.15$ for GPQA.}\label{table:01_results_known}
\setlength{\tabcolsep}{1mm}{
\scriptsize
\resizebox{1\columnwidth}{!}{
\begin{tabular}{lccccccc}
\toprule
\multirow{2}{*}{Dataset}  & \multirow{2}{*}{Metric}  & \multicolumn{2}{c}{Logits-based score}& \multicolumn{2}{c}{Verbalized score} & \multicolumn{2}{c}{Router-based score}
\\
\cmidrule(r){3-4} \cmidrule(r){5-6} \cmidrule(r){7-8}
& & PAC & G-PAC & PAC & G-PAC & PAC & G-PAC  \\
\midrule
\multirow{3}{*}{MATH-500}
& $\text{Error}$ (\%)  & $ 3.08 $ & $2.78$ & $3.24$ & $2.57$ & 3.26 &  2.63 \\
& $\text{Error}_{\text{Gap}}$ (\%)  & $  1.02$ & \colorbox{green!10}{0.00} & $0.11$ & \colorbox{green!10}{0.00} & 1.63 & \colorbox{green!10}{0.00} \\
& STP (\%) $\uparrow$ & $56.86  $  & $43.44 $ & $61.14 $ & $41.63 $ & 59.20 &49.30 \\
\midrule
\multirow{3}{*}{ZebraLogic}
& $\text{Error}$ (\%) & 6.04 & 4.21 & 4.76 &  4.01 & 6.23 &  4.35 \\
& $\text{Error}_{\text{Gap}}$ (\%) & 10.30 & \colorbox{green!10}{0.00} & 3.00 & \colorbox{green!10}{0.00} & 8.84  & \colorbox{green!10}{0.00}\\
& STP (\%)  $\uparrow$& 27.65  &  10.90 &  4.97 & 8.53 & 34.54 & 34.32 \\
\midrule
\multirow{3}{*}{GPQA}
& $\text{Error}$ (\%) & 11.24 & 11.59  &  10.86 & 10.82 & 11.57 & 11.34\\
& $\text{Error}_{\text{Gap}}$ (\%) & 0.30  & \colorbox{green!10}{0.00} & 0.00 & \colorbox{green!10}{0.00} & 0.00 & \colorbox{green!10}{0.00} \\
& STP (\%)  $\uparrow$& 8.73  &  13.78 &  17.07& 19.60  &34.21 & 30.54\\
\bottomrule
\end{tabular}}}
\end{table}

\subsection{Additional discussion}

\textbf{Open-domain task.} We further evaluate our method on an open-domain benchmark, Arena-Hard~\citep{li2025crowdsourced}. 
Since answers in open-domain settings cannot be automatically verified and Arena-Hard does not provide pre-defined group labels, 
we adopt the semantic loss in Eq.~\eqref{eq:semantic_loss} to quantify performance loss and apply C-PAC reasoning.
Table~\ref{table:arena_results} shows that C-PAC reasoning (with 3 groups) consistently achieves zero $\text{Error}_{\text{Gap}}$, while the vallina PAC reasoning fails to control group-conditional risk.
Although C-PAC may slightly reduce STP, it still offers significant token savings and remains effective in open-domain settings.

\textbf{Ablation experiment.}
We conduct ablation experiments to investigate the stability of risk control under different calibration set sizes.
Specifically, we vary the calibration ratio and evaluate both PAC and G-PAC reasoning under a binary loss.
The results on MATH-500 are presented in Figure~\ref{fig:results_calibration_ratio_math500}.
Both PAC reasoning and G-PAC reasoning maintain stable marginal error control across different calibration ratios.
Meanwhile, G-PAC reasoning consistently achieves a lower group-conditional error gap than standard PAC reasoning, indicating improved group-wise risk control.
G-PAC preserves stable efficiency gains as the calibration ratio varies.
Results of ZebraLogic and GPQA are shown in Figure~\ref{fig:results_calibration_ratio_other}.


\section{Conclusion}
\label{sec:conclusion}

We proposed G-PAC reasoning, a statistical framework that enables group-conditional risk control for efficient LLM reasoning.
By partitioning the input space into groups, G-PAC strengthens marginal PAC guarantees with finer-grained validity at the group level.
We establish theoretical guarantees for G-PAC, showing that the group-conditional error gap vanishes for both known and learned groupings.
Experiments across reasoning benchmarks demonstrate that G-PAC achieves group-conditional control while maintaining substantial inference-time savings.


\textbf{Limitations}
First, the efficiency gains of PAC reasoning depend on the quality of the uncertainty score: poorly informative uncertainty estimates may reduce the computational savings.
Second, when extending PAC reasoning to unknown groupings, the method requires either sample splitting, which reduces statistical efficiency, or joint clustering, which may introduce coverage gaps.


\section*{Impact Statement}

This paper presents work whose goal is to advance the field of Machine
Learning. There are many potential societal consequences of our work, none
which we feel must be specifically highlighted here.

\bibliography{cpac}
\bibliographystyle{icml2026}

\newpage
\appendix
\onecolumn

\section{Detailed UCB construction}
\label{app:ucb-construction}

We provide detailed derivations for constructing the upper confidence bound (UCB) for the performance loss using importance sampling.

\paragraph{Importance sampling procedure}
Let $\mathcal{I}_{\text{cal},j}$ denote the index set of samples belonging to group $G_j$ in the calibration dataset.
Given a sampling size $m$, we uniformly draw $m$ indices $\{i_1, \dots, i_m\}$ with replacement from $\mathcal{I}_{\text{cal},j}$. 
For each selected index $i_t$, we perform a Bernoulli trial $\xi_{i_t} \sim \text{Bern}(\pi_{i_t})$ to decide whether to query the expert answer $y_{i_t}$, where $\{\pi_{i_t}\}_{t=1}^m$ are sampling weights.
This yields $m$ i.i.d.\ random variables:
\[
Z_t(u) = \ell(y_{i_t}, \tilde{y}_{i_t}) \frac{\xi_{i_t}}{\pi_{i_t}} \mathbf{1}\{U_{i_t} \leq u\}.
\]
The expectation of $Z_t(u)$ equals the target quantity $L(u,G_j)$, since $\mathbb{E}_{\xi_{i_t}}[\xi_{i_t} / \pi_{i_t} \mid i_t] = 1$.
Then, we can estimate an upper bound for $L(u,G_j)$ by computing a confidence interval for the mean of $\{Z_t(u)\}_{t=1}^m$.

\paragraph{CLT-based UCB}
For large $m$, the sample mean $\hat{\mu}_Z(u) = \frac{1}{m} \sum_{j=1}^m Z_j(u)$ is approximately normal by the central limit theorem, and the UCB is given by:
\[
\hat{L}_{j,u}(\alpha) = \hat{\mu}_Z(u) + z_{1-\alpha} \frac{\hat{\sigma}_Z(u)}{\sqrt{m}},
\]
where $\hat{\sigma}_Z(u)$ is the sample standard deviation and $z_{1-\alpha}$ is the $(1-\alpha)$-quantile of the standard normal distribution. Then, we construct a valid upper confidence bound on $R(\hat{f}|G_j)$ at level $1-\alpha$: $\mathbb{P}(R(\hat{f}|G_j) \leq \hat{L}_{j,u}(\alpha))\geq 1-\alpha$.

\paragraph{Hoeffding-based UCB}
When the loss function is bounded in $[0, B]$, we can replace the CLT-based UCB with a finite-sample bound using Hoeffding's inequality.
In line 16 of Algorithm~\ref{alg:ucb}, instead of computing the CLT-based bound, we first define $R = B / \pi_{\min}$ where $\pi_{\min} = \min_i \pi_i$, then compute:
\begin{equation*}
\delta_{\text{HB}}(\alpha) = \sqrt{\frac{R^2 \log(2/\alpha)}{2m}}, \quad \hat{L}_u(\alpha) = \hat{\mu}_Z(u) + \delta_{\text{HB}}(\alpha).
\end{equation*}
This bound provides strict finite-sample guarantees without relying on asymptotic normality, which is useful when the sample size $m$ is small.
The trade-off is that Hoeffding's bound is typically more conservative than the CLT-based bound for large $m$.

The CLT-based and Hoeffding-based approaches are two examples of constructing valid UCBs.
Alternatively, one could use other concentration inequalities such as Bernstein's inequality~\citep{bentkus2004hoeffdings,hao2019bootstrapping} to construct a valid confidence bound, which may provide better guarantees under different conditions.
Other approaches include betting-based confidence intervals and empirical Bernstein bounds.
The choice of UCB construction depends on the sample size, loss boundedness, and desired tightness.

\section{Proof of Theorem~\ref{thm:known-grouping}}
\label{app:proof-known}


\begin{proof}
Fix a group $G_j \in \mathcal{G}$ and let $\mathcal{D}_j = \{x_i \in \mathcal{D}_{\text{cal}} : x_i \in G_j\}$ be the calibration data for this group.
By Assumption~\ref{assump:ucb-validity}, the UCB $\hat{L}_{j,u}(\alpha)$ computed on $\mathcal{D}_j$ satisfies
\begin{equation}
\mathbb{P}_{\mathcal{D}_j}(R(u \mid G_j) \leq \hat{L}_{j,u}(\alpha)) \geq 1 - \alpha.
\end{equation}

The algorithm selects the threshold $\hat{u}_j = \max\{u : \hat{L}_{j,u}(\alpha) \leq \varepsilon\}$.
By definition, $\hat{L}_{j,\hat{u}_j}(\alpha) \leq \varepsilon$.
Combining with the UCB validity, with probability at least $1 - \alpha$:
\begin{equation}
R(\hat{f} \mid G_j) = R(\hat{u}_j \mid G_j) \leq \hat{L}_{j,\hat{u}_j}(\alpha) \leq \varepsilon.
\end{equation}

If no valid threshold exists (i.e., $\hat{L}_{j,u}(\alpha) > \varepsilon$ for all $u$), we set $\hat{u}_j = -\infty$, which means $\hat{f}(x) = f(x)$ for all $x \in G_j$.
In this case, $R(\hat{f} \mid G_j) = 0 \leq \varepsilon$ trivially.
Since each group $G_j$ uses disjoint calibration data $\mathcal{D}_j$, the calibration procedures are independent.
The guarantee $\mathbb{P}_{\mathcal{D}_j}(R(\hat{f} \mid G_j) \leq \varepsilon) \geq 1 - \alpha$ holds for each group separately.
No Bonferroni correction is needed because we do not require simultaneous validity across all groups.
\end{proof}

\section{Proof of Theorem~\ref{thm:empirical-known-grouping}}
\label{app:proof-empirical-known}


\begin{proof}
The proof combines the population risk guarantee from Theorem~\ref{thm:known-grouping} with Hoeffding's inequality for the test set.
Let $\hat{u}_j$ be the threshold selected by Algorithm~\ref{alg:calibration} for group $G_j$, which is determined by the calibration set $\mathcal{D}_j$.
Conditioned on $\hat{u}_j$, the test losses $\ell(\hat{f}(x_i), f(x_i))$ for $x_i \in G_j \cap \mathcal{D}_{\text{test}}$ are i.i.d.\ and bounded in $[0, B]$.
By Hoeffding's inequality, for any $t > 0$:
\begin{equation}
\mathbb{P}\left(\hat{R}(\hat{f} \mid G_j) - R(\hat{f} \mid G_j) > t \mid \hat{u}_j\right) \leq \exp\left(-\frac{2 N_j t^2}{B^2}\right).
\end{equation}
Using the inclusion:
\begin{equation}
\{\hat{R}(\hat{f} \mid G_j) > \varepsilon + t\} \subseteq \{R(\hat{f} \mid G_j) > \varepsilon\} \cup \{\hat{R}(\hat{f} \mid G_j) - R(\hat{f} \mid G_j) > t\}.
\end{equation}

Taking probabilities and applying the union bound:
\begin{align}
\mathbb{P}(\hat{R}(\hat{f} \mid G_j) > \varepsilon + t) &\leq \mathbb{P}(R(\hat{f} \mid G_j) > \varepsilon) + \mathbb{P}(\hat{R}(\hat{f} \mid G_j) - R(\hat{f} \mid G_j) > t) \\
&\leq \alpha + \exp\left(-\frac{2 N_j t^2}{B^2}\right).
\end{align}

The first term is bounded by $\alpha$ from Theorem~\ref{thm:known-grouping}.
The second term follows from the law of total probability and the conditional Hoeffding bound.
Therefore:
\begin{equation}
\mathbb{P}\left(\hat{R}(\hat{f} \mid G_j) \leq \varepsilon + t\right) \geq 1 - \alpha - \exp\left(-\frac{2 N_j t^2}{B^2}\right).
\end{equation}
\end{proof}

\section{Proof of Proposition~\ref{prop:grouping-benefit}}
\label{app:proof-grouping-benefit}

We prove that grouping always improves PAC efficiency, with strict improvement when groups have heterogeneous risk functions.

\begin{proof}
Let $\mathcal{G} = \{G_1, \ldots, G_k\}$ be a partition of $\mathcal{X}$ with group probabilities $p_j = \mathbb{P}(x \in G_j)$.
Let $\hat{u}^*$ denote the marginal PAC-optimal threshold (no grouping), and let $\hat{u}_j^*$ denote the PAC-optimal threshold for group $G_j$.
By definition, the marginal threshold $\hat{u}^*$ satisfies the PAC constraint on the entire population:
\begin{equation*}
\mathbb{P}_{\mathcal{D}_{\text{cal}}}(R(\hat{u}^*) \leq \varepsilon) \geq 1 - \alpha.
\end{equation*}
The marginal risk decomposes as $R(u) = \sum_{j=1}^k p_j \cdot R(u \mid G_j)$.
Since $R(\hat{u}^*) \leq \varepsilon$ implies $\sum_{j=1}^k p_j \cdot R(\hat{u}^* \mid G_j) \leq \varepsilon$, the threshold $\hat{u}^*$ is feasible for the marginal problem.
For each group $G_j$, the group-conditional calibrated threshold is defined as
\begin{equation*}
\hat{u}_j^* = \sup\{u : \mathbb{P}_{\mathcal{D}_j}(R(u \mid G_j) \leq \varepsilon) \geq 1 - \alpha\}.
\end{equation*}
We claim that $\hat{u}_j^* \geq \hat{u}^*$ for all $j \in [k]$.
Note that $R(\hat{u}^*) \leq \varepsilon$ implies $\sum_{j=1}^k p_j \cdot R(\hat{u}^* \mid G_j) \leq \varepsilon$, but not necessarily $R(\hat{u}^* \mid G_j) \leq \varepsilon$ for each $j$.
Since $\hat{u}_j^*$ is optimized for $G_j$ alone while $\hat{u}^*$ satisfies $\sum_{j=1}^k p_j \cdot R(\hat{u}^* \mid G_j) \leq \varepsilon$, we have:
\begin{align*}
R(\hat{u}^* \mid G_j) < R(\hat{u}^*) \quad &\Rightarrow \quad \hat{u}_j^* > \hat{u}^*, \\
R(\hat{u}^* \mid G_j) > R(\hat{u}^*) \quad &\Rightarrow \quad \hat{u}_j^* < \hat{u}^*.
\end{align*}
The PAC efficiency under grouping is
\begin{equation*}
\text{Eff}_{\text{PAC}}(\mathcal{G}) = \sum_{j=1}^k p_j \cdot \mathbb{P}(U(x) \leq \hat{u}_j^* \mid x \in G_j).
\end{equation*}
The marginal PAC efficiency is
\begin{equation*}
\text{Eff}_{\text{PAC}}(\{\mathcal{X}\}) = \mathbb{P}(U(x) \leq \hat{u}^*) = \sum_{j=1}^k p_j \cdot \mathbb{P}(U(x) \leq \hat{u}^* \mid x \in G_j).
\end{equation*}

To show $\text{Eff}_{\text{PAC}}(\mathcal{G}) \geq \text{Eff}_{\text{PAC}}(\{\mathcal{X}\})$, note that $\hat{u}^*$ is feasible for the group-conditional problem.
For each $G_j$, $\hat{u}_j^*$ maximizes $\mathbb{P}(U(x) \leq u \mid x \in G_j)$ subject to $\mathbb{P}_{\mathcal{D}_j}(R(u \mid G_j) \leq \varepsilon) \geq 1 - \alpha$.
Thus
\begin{equation*}
\mathbb{P}(U(x) \leq \hat{u}_j^* \mid x \in G_j) \geq \mathbb{P}(U(x) \leq \hat{u}^* \mid x \in G_j).
\end{equation*}
Summing over all groups:
\begin{equation*}
\text{Eff}_{\text{PAC}}(\mathcal{G}) = \sum_{j=1}^k p_j \cdot \mathbb{P}(U(x) \leq \hat{u}_j^* \mid x \in G_j) \geq \sum_{j=1}^k p_j \cdot \mathbb{P}(U(x) \leq \hat{u}^* \mid x \in G_j) = \text{Eff}_{\text{PAC}}(\{\mathcal{X}\}).
\end{equation*}
Equality holds iff $\hat{u}_j^* = \hat{u}^*$ for all $j \in [k]$, which occurs when $R(u \mid G_j) = R(u \mid G_{j'})$ for all $j, j'$ (no heterogeneity).
When $\exists j \neq j': R(u \mid G_j) \neq R(u \mid G_{j'})$, we have $\exists j: \hat{u}_j^* \neq \hat{u}^*$, yielding strict improvement.
\end{proof}

\section{Proof of Theorem~\ref{thm:unknown-grouping}}
\label{app:proof-unknown}

We prove the PAC guarantee for unknown grouping under both sample splitting and joint approaches.

\begin{proof}
We prove the two parts of the theorem separately.

\subsection*{1. Sample splitting approach}
The proof has two components: (1) the coverage guarantee, and (2) the efficiency gap bound.

\textbf{Coverage guarantee.}
Let $\hat{\mathcal{G}} = \{\hat{G}_1, \ldots, \hat{G}_k\}$ be the partition learned from $\mathcal{D}_{\text{cluster}}$.
Since $\mathcal{D}_{\text{cluster}}$ and $\mathcal{D}_{\text{cal}}$ are disjoint, the partition $\hat{\mathcal{G}}$ is independent of the calibration data $\mathcal{D}_{\text{cal}}$.
Conditioned on $\hat{\mathcal{G}}$, the calibration procedure is identical to the known grouping case.
For each group $\hat{G}_j$, let $\mathcal{D}_j = \{x_i \in \mathcal{D}_{\text{cal}} : x_i \in \hat{G}_j\}$ be the calibration data for this group.
By Assumption~\ref{assump:ucb-validity}, the UCB $\hat{L}_{j,u}(\alpha)$ computed on $\mathcal{D}_j$ satisfies:
\begin{equation*}
\mathbb{P}_{\mathcal{D}_j}(R(u \mid \hat{G}_j) \leq \hat{L}_{j,u}(\alpha) \mid \hat{\mathcal{G}}) \geq 1 - \alpha.
\end{equation*}
The algorithm selects $\hat{u}_j = \max\{u : \hat{L}_{j,u}(\alpha) \leq \varepsilon\}$, so $\hat{L}_{j,\hat{u}_j}(\alpha) \leq \varepsilon$.
Combining with the UCB validity:
\begin{equation*}
\mathbb{P}_{\mathcal{D}_j}(R(\hat{f} \mid \hat{G}_j) \leq \varepsilon \mid \hat{\mathcal{G}}) \geq 1 - \alpha.
\end{equation*}
Since this holds for any fixed $\hat{\mathcal{G}}$, by the law of total probability:
\begin{equation*}
\mathbb{P}_{\mathcal{D}_{\text{cal}}}(R(\hat{f} \mid \hat{G}_j) \leq \varepsilon) \geq 1 - \alpha.
\end{equation*}

\textbf{Efficiency gap bound.}
Let $\delta = d(\hat{\mathcal{G}}, \mathcal{G}^*)$ be the partition gap.
For any input $x$, let $g(x)$ and $g^*(x)$ denote the group assignments under $\hat{\mathcal{G}}$ and $\mathcal{G}^*$ respectively.
The efficiency difference can be bounded as:
\begin{align*}
&\text{Eff}_{\text{PAC}}(\mathcal{G}^*; \varepsilon, \alpha) - \text{Eff}_{\text{PAC}}(\hat{\mathcal{G}}; \varepsilon, \alpha) \\
&= \mathbb{E}[\mathbf{1}\{U(x) \leq \hat{u}_{g^*(x)}^*\}] - \mathbb{E}[\mathbf{1}\{U(x) \leq \hat{u}_{g(x)}^*\}] \\
&\leq \mathbb{P}(g(x) \neq g^*(x)) + \mathbb{P}(g(x) \neq g^*(x)) \\
&= 2\delta.
\end{align*}
The inequality follows because the efficiency difference is at most 1 for misclassified inputs and 0 for correctly classified inputs.

\subsection*{2. Joint approach}
Let $\hat{\mathcal{G}} = \{\hat{G}_1, \ldots, \hat{G}_k\}$ be the learned partition from calibration set $\mathcal{D}_{\text{cal}} = \{(x_i, U_i, L_i)\}_{i=1}^n$.
Let $\mathcal{G}^* = \{G_1^*, \ldots, G_k^*\}$ be the oracle optimal partition, and let $\delta = d(\hat{\mathcal{G}}, \mathcal{G}^*)$ be the partition gap.
We prove that with probability at least $1 - \alpha$, all learned groups satisfy $R(\hat{f} \mid \hat{G}_j) \leq \varepsilon + c \cdot \delta$.

For each learned group $\hat{G}_j$, the population risk can be decomposed as:
\begin{equation*}
R(\hat{f} \mid \hat{G}_j) = \mathbb{E}_{x \sim \mathcal{P}}[\ell(\hat{f}(x), f(x)) \mid x \in \hat{G}_j].
\end{equation*}
Let $\hat{g}: \mathcal{X} \to [k]$ and $g^*: \mathcal{X} \to [k]$ denote the group assignment functions for $\hat{\mathcal{G}}$ and $\mathcal{G}^*$ respectively.
We decompose the population risk based on whether the learned assignment matches the oracle:
\begin{align*}
R(\hat{f} \mid \hat{G}_j) &= \mathbb{E}[\ell(\hat{f}(x), f(x)) \mid \hat{g}(x) = j] \\
&= \mathbb{E}[\ell(\hat{f}(x), f(x)) \cdot \mathbf{1}\{\hat{g}(x) = g^*(x)\} \mid \hat{g}(x) = j] \\
&\quad + \mathbb{E}[\ell(\hat{f}(x), f(x)) \cdot \mathbf{1}\{\hat{g}(x) \neq g^*(x)\} \mid \hat{g}(x) = j].
\end{align*}

For the misclassification term, since the loss function is bounded by $B$, we have:
\begin{align*}
\mathbb{E}[\ell(\hat{f}(x), f(x)) \cdot \mathbf{1}\{\hat{g}(x) \neq g^*(x)\} \mid \hat{g}(x) = j] &\leq B \cdot \mathbb{P}(\hat{g}(x) \neq g^*(x) \mid \hat{g}(x) = j) \\
&\leq B \cdot \frac{\mathbb{P}(\hat{g}(x) \neq g^*(x))}{\mathbb{P}(\hat{g}(x) = j)} = B \cdot \frac{\delta}{p_j},
\end{align*}
where $p_j = \mathbb{P}(\hat{g}(x) = j)$ is the probability mass of learned group $\hat{G}_j$.

For inputs with $\hat{g}(x) = g^*(x)$, the learned group assignment matches the oracle.
Conditioned on the learned partition $\hat{\mathcal{G}}$, define the empirical risk on calibration data:
\begin{equation*}
\hat{R}_j = \frac{1}{|\mathcal{D}_j|} \sum_{i: x_i \in \hat{G}_j} L_i \cdot \mathbf{1}\{U_i \leq \hat{u}_j\},
\end{equation*}
where $\mathcal{D}_j = \{x_i \in \mathcal{D}_{\text{cal}} : x_i \in \hat{G}_j\}$.
By Assumption~\ref{assump:ucb-validity}, the UCB $\hat{L}_{j,u}(\alpha)$ satisfies:
\begin{equation*}
\mathbb{P}_{\mathcal{D}_j}(R(\hat{u}_j \mid \hat{G}_j, \hat{g} = g^*) \leq \hat{L}_{j,\hat{u}_j}(\alpha) \mid \hat{\mathcal{G}}) \geq 1 - \alpha,
\end{equation*}
where $R(\hat{u}_j \mid \hat{G}_j, \hat{g} = g^*)$ denotes the risk restricted to correctly classified inputs.
Since the algorithm selects $\hat{u}_j = \max\{u : \hat{L}_{j,u}(\alpha) \leq \varepsilon\}$, we have $\hat{L}_{j,\hat{u}_j}(\alpha) \leq \varepsilon$.
Thus, with probability at least $1 - \alpha$:
\begin{equation*}
\mathbb{E}[\ell(\hat{f}(x), f(x)) \cdot \mathbf{1}\{\hat{g}(x) = g^*(x)\} \mid \hat{g}(x) = j] \leq \varepsilon.
\end{equation*}

Combining the bounds for both terms, for each group $\hat{G}_j$, with probability at least $1 - \alpha$:
\begin{equation*}
R(\hat{f} \mid \hat{G}_j) \leq \varepsilon + B \cdot \frac{\delta}{p_j}.
\end{equation*}
To simplify the bound, we note that $p_j \geq p_{\min}$ where $p_{\min} = \min_j \mathbb{P}(\hat{g}(x) = j)$.
For balanced partitions with $k$ groups, $p_{\min} \geq 1/(2k)$ with high probability.
Setting $c = 2Bk$, we obtain for each group $\hat{G}_j$:
\begin{equation*}
\mathbb{P}(R(\hat{f} \mid \hat{G}_j) \leq \varepsilon + c \cdot \delta) \geq 1 - \alpha.
\end{equation*}
This guarantee holds for each group independently with its own confidence $1 - \alpha$.
\end{proof}



\section{Experimental setup}
\label{app:exp_setup}

\subsection{LLMs and datasets}
\label{app:llm_datasets}

\paragraph{Large language models} In this study, we evaluate the G-PAC reasoning based on the Qwen3 series models~\citep{yang2025qwen3} and the  Llama-3.1-8B–based LLMs. 
Specifically, we employ the ``Qwen3-4B-Thinking-2507'' as the thinking model and ``Qwen3-4B-Instruct-2507'' as the lower-performance non-thinking model. 
We also conduct a complementary experiment on Llama-3.1-8B–based models where the ``DeepSeek-R1-Distill-Llama-8B'' as the thinking model and ``Llama-3.1-8B-Instruct'' as the lower-performance non-thinking model. The sampling temperature and other hyperparameters for both LLMs are configured following the settings in the original paper. 
Experiments were run on four NVIDIA RTX A6000 Graphics Cards.

\paragraph{Datasets} We evaluate PAC reasoning on a range of real-world datasets spanning different reasoning paradigms.
Specifically, our evaluation includes a mathematical reasoning benchmark, MATH-500~\citep{lightman2023lets}, a text-based logical reasoning dataset, ZebraLogic~\citep{lin2025zebralogic}, and a commonsense question answering benchmark, GPQA~\citep{rein2024gpqa}.
For each dataset, we randomly split the original test set into a PAC calibration subset and a held-out PAC test subset.
Table~\ref{table:splitting_settings} summarizes the datasets used in our experiments, along with their corresponding splitting strategies, including dataset type, total size, and the sizes of the calibration and test partitions. We also report the distribution of categories for each dataset in Figure~\ref{fig:dataset_category_distribution}.

\begin{table}[ht]
\centering 
\caption{The details of datasets and splitting settings for PAC experiments}\label{table:splitting_settings}
\begin{tabular}{l|l|l|l|r}
\hline
\textbf{Dataset}& \textbf{Dataset Type} &\textbf{Dataset Size} & \textbf{Split Setting} & \textbf{Size} \\
\hline
\multirow{2}{*}{MATH-500} & \multirow{2}{*}{Math Reasoning} &\multirow{2}{*}{\centering 500}& PAC Calibration & 300 \\
     & & & PAC Test & 200 \\
\hline
\multirow{2}{*}{ZebraLogic} & \multirow{2}{*}{Text reasoning} &\multirow{2}{*}{\centering 1000}& PAC Calibration & 500 \\
    & & & PAC Test & 500 \\
\hline
\multirow{2}{*}{GPQA} & \multirow{2}{*}{QA Task} &\multirow{2}{*}{\centering 448}& PAC Calibration & 224 \\
     & & & PAC Test & 224 \\
\hline
\end{tabular}
\end{table}

\begin{figure}[t]
    \centering
    \begin{subfigure}[t]{0.31\linewidth}
        \centering
        \includegraphics[width=\linewidth]{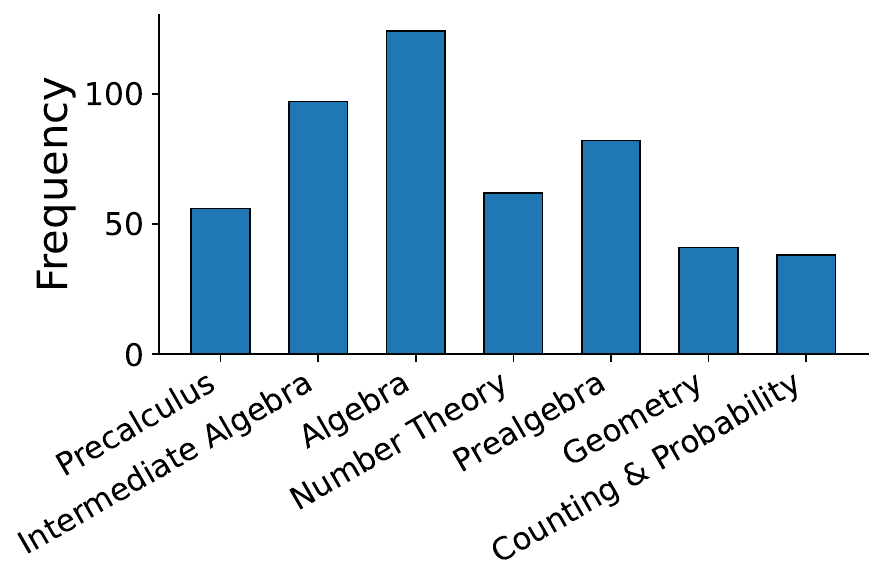}
        \caption{MATH-500}
    \end{subfigure}
    \begin{subfigure}[t]{0.31\linewidth}
        \centering
        \includegraphics[width=\linewidth]{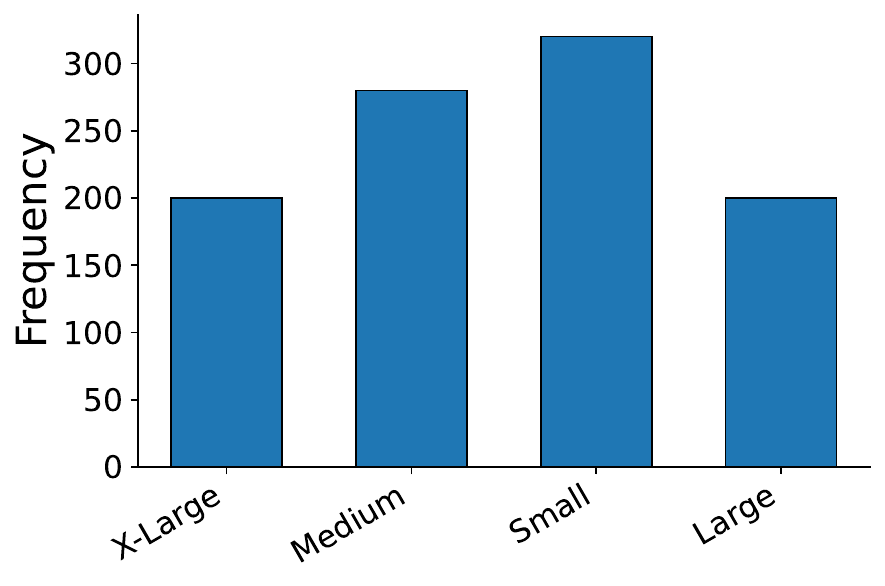}
        \caption{ZebraLogic}
    \end{subfigure}
     \begin{subfigure}[t]{0.31\linewidth}
        \centering
        \includegraphics[width=\linewidth]{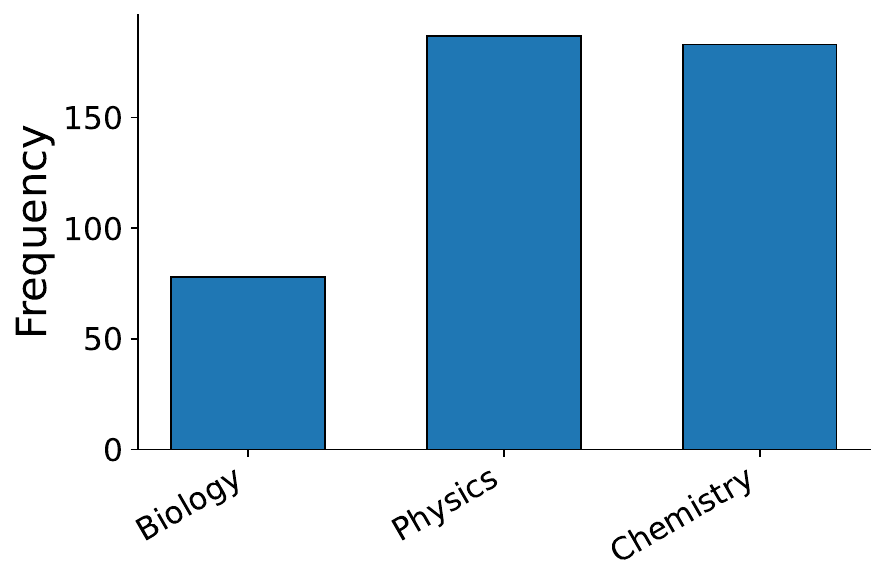}
        \caption{GPQA}
    \end{subfigure}
    \caption{
   The category distribution for three verfiable benchmarks.
    }
    \label{fig:dataset_category_distribution}
\end{figure}

\paragraph{The performance of tested LLMs on different benchmarks} We report the accuracy of the tested LLMs on different benchmarks in Table~\ref{table:llm_accuracy}. As shown, the performance gaps between models vary substantially across datasets, reflecting differences in task difficulty. Since such gaps directly influence the tightness of PAC-style guarantees, using a uniform tolerance would be either overly conservative for certain benchmarks. Therefore, we adopt dataset-specific values of $\varepsilon$ to appropriately account for these varying performance gaps and to ensure meaningful guarantees.

\begin{table*}[h]
\centering
\caption{Accuracy (\%) of different models on three reasoning benchmarks.}
\label{table:llm_accuracy}
\setlength{\tabcolsep}{2mm}{
\resizebox{0.8\textwidth}{!}{
\begin{tabular}{ccccc}
\toprule
\multirow{2}{*}{Dataset} 
& \multicolumn{2}{c}{Qwen LLMs} & \multicolumn{2}{c}{Llama-based LLMs}\\
\cmidrule(r){2-3} \cmidrule(r){4-5}
& Qwen3-4B-Instruct-2507 
& Qwen3-4B-Thinking-2507 
& Llama-3.1-8B-Instruct 
& DeepSeek-R1-Distill-Llama-8B \\
\midrule
MATH-500   & 92.80 & 96.20 &  45.20 & 85.80 \\
ZebraLogic & 80.40 & 89.20 & 12.50  & 38.40   \\
GPQA & 46.43 & 48.88 & 27.23 & 39.73 \\
\bottomrule
\end{tabular}}}
\end{table*}

\subsection{Uncertainty score function}
\label{app:uncertainty_score}
In this part, we introduce the details of three uncertainty score functions used to quantify the uncertainty of an answer from the non-thinking model, including a white-box score derived from model logits and two black-box scores obtained from verbalized self-reports and an external router model. 

\paragraph{Logits-based score} For the logits-based score, we use token-level probabilities computed from the prediction logits~\citep{kwon2023efficient,zheng2024sglang,zhou2025theoretical}.
Formally, let $y_i = (y_{i,1},\dots, y_{i,l})$ be an answer with $l$ tokens and $y_{i,j}$ be the $j$-th token of the answer. 
Furthermore, we define the uncertainty score of $y_i$ as its average token probability~\citep{hao2023reasoning, huang2025look}:
$$U_{logits}(y_i) = 1-\frac{1}{l_{y_i}} \sum_{j=1}^{l_{y_i}} \mathbb{P}(y_{i,j} | y_{i,1}, \dots, y_{i,j-1}, x_i),
$$
where $\mathbb{P}(y_{i,j} | y_{i,1}, \dots, y_{i,j-1},  x_i)$ is the conditional probability of token $y_{i,j}$.

\paragraph{Verbalized score} We also consider verbalized uncertainty scores from non-thinking models~\citep{xiong2023can,tian2023just,yang2024verbalized,zhou2025theoretical}, where the model explicitly states its self-reported confidence. The verbalized uncertainty score is mainly applicable in black-box scenarios, where access to generation logits is restricted, especially in the case of closed-source LLMs. The corresponding prompts are listed in Table~\ref{tab:verbalized_score_prompt}.  
In this study, we compute the average confidence over 10 trials and then define the verbalized uncertainty score as one minus this average confidence.

\begin{table*}[h]
\caption{Prompt for the verbalized confidence scores.}\label{tab:verbalized_score_prompt}
\renewcommand{\arraystretch}{1.4}
\rowcolors{1}{Gray}{Gray}
\resizebox{\textwidth}{!}{
\begin{tabular}{!{\vrule width 1.2pt} p{\linewidth}!{\vrule width 1.2pt}}
    \Xhline{1.2pt}
        \textbf{System prompt:} You are a reasoning assistant. For each question and proposed answer, you must estimate how likely the proposed answer is correct.\\
        
        \textbf{User prompt:}\\
        Question: \{QUESTION\}\\
        Answer: \{ANSWER\}\\
        Provide a probability (between 0.0 and 1.0) that your answer is correct. Only output the probability.\\
    \Xhline{1.2pt}
\end{tabular}}
\end{table*}

\paragraph{Router-based score} We trained a router for each mode pair based on an open-source library, named LLMRouter~\citep{2025llmrouter}. 
Specifically, we sample 2,000 examples from eight common benchmarks, including ARC-Challenge~\citep{clark2018think}, CommonsenseQA~\citep{talmor2019commonsenseqa}, GSM8K~\citep{cobbe2021training}, MATH~\citep{hendrycks2021measuring}, HumanEval~\citep{chen2021evaluating}, MMLU~\citep{hendrycks2021measuringa}, NaturalQA~\citep{kwiatkowski2019natural}, and TriviaQA~\citep{joshi2017triviaqa}. The router is trained as a lightweight classifier that takes the input prompt as features and predicts a routing score representing the probability that using the thinking model is necessary.
At inference time for our PAC reasoning, we define the uncertainty score of the non-thinking model as one minus the routing probability predicted for the non-thinking model.
Compared to other uncertainty scores, the router-based score relies on supervised signals collected from paired model outputs and can capture more complex input interactions between different models beyond simple confidence estimation. In this work, we use the ``Qwen3-Embedding-8B''~\citep{zhang2025qwen3} as the embedding model to encode the input prompts for training and inference of the router.

\begin{remark}
Logits-based and verbalized uncertainty scores follow a cascade routing setup, where a lightweight model first generates a candidate answer, and the corresponding uncertainty score is computed based on this output to decide whether to use a stronger model.
In contrast, router-based scores determine the routing decision directly from the router’s output, without requiring the non-thinking model to perform explicit answer generation.
Both setups are widely used in LLM routing~\citep{zeng2025pac,dekoninck2025unified,zhang2025leveraging}.
Notably, router-based routing is often more resource-efficient, as it avoids the cost of generating candidate answers with the non-thinking LLM. By supporting both answer-dependent and direct routing scores, our framework demonstrates its generality and compatibility with common routing strategies in practice.
\end{remark}

\paragraph{Alternative scores} The choice of the uncertainty score is flexible. Beyond the uncertainty scores considered in this work, a reward model~\citep{zhang2025lessons} can also be used to quantify the quality of the generated answer, and its output can naturally serve as an alternative uncertainty score.

\paragraph{Expected Calibration Error of different uncertainty scores}  Figure~\ref{fig:expected_error_uncertainty} reports the Expected Calibration Error (ECE) of different uncertainty scores across three verifiable benchmarks.
We observe substantial variation in calibration quality across datasets and scoring methods: logits-based scores are generally better calibrated than verbalized scores, while router-based scores often exhibit the largest ECE, especially on more challenging benchmarks.
Despite these differences, PAC reasoning consistently maintains valid control of performance loss across all settings (See Section~\ref{sec:experiments}).
This result highlights a key property of our framework: PAC reasoning does not rely on well-calibrated uncertainty estimates.
\textbf{These findings demonstrate that PAC reasoning is model-agnostic, dataset-agnostic, and score-agnostic.}

\begin{figure}[htbp]
    \centering

    \begin{subfigure}{0.31\linewidth}
        \centering
        \includegraphics[width=\linewidth]{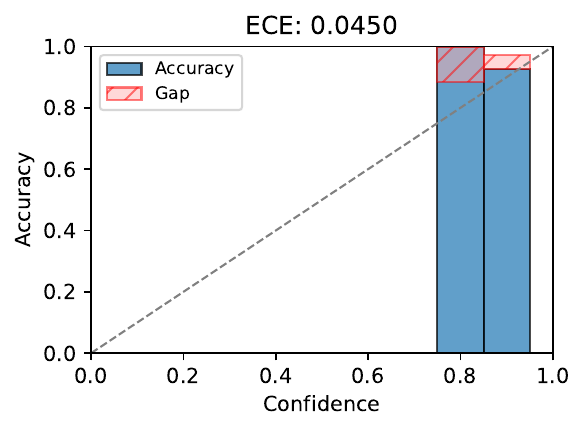}
        \caption{MATH-500 (Logits-based)}
    \end{subfigure}
    \begin{subfigure}{0.31\linewidth}
        \centering
        \includegraphics[width=\linewidth]{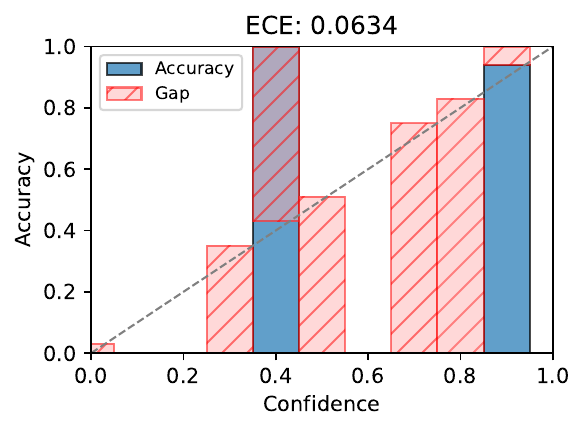}
        \caption{MATH-500 (Verbalized)}
    \end{subfigure}
    \begin{subfigure}{0.31\linewidth}
        \centering
        \includegraphics[width=\linewidth]{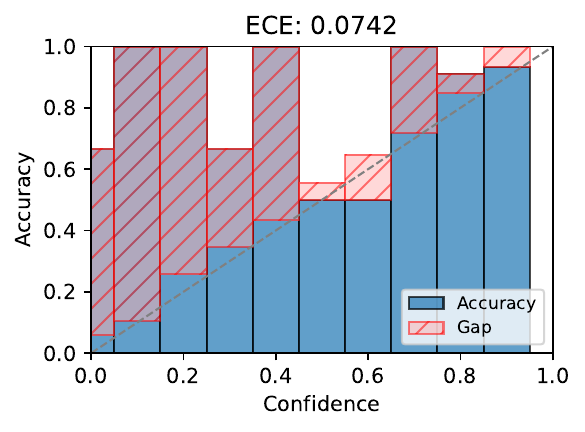}
        \caption{MATH-500 (Router)}
    \end{subfigure}

    \vspace{0.5em}

    \begin{subfigure}{0.31\linewidth}
        \centering
        \includegraphics[width=\linewidth]{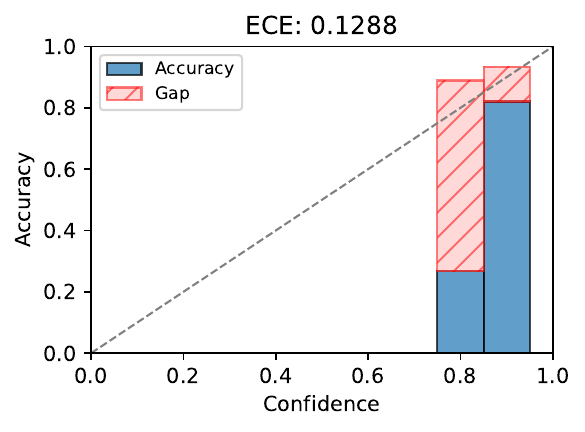}
        \caption{ZebraLogic (Logits-based)}
    \end{subfigure}
    \begin{subfigure}{0.31\linewidth}
        \centering
        \includegraphics[width=\linewidth]{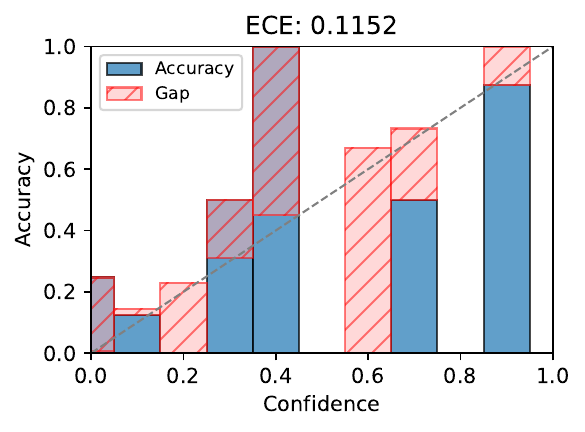}
        \caption{ZebraLogic (Verbalized)}
    \end{subfigure}
    \begin{subfigure}{0.31\linewidth}
        \centering
        \includegraphics[width=\linewidth]{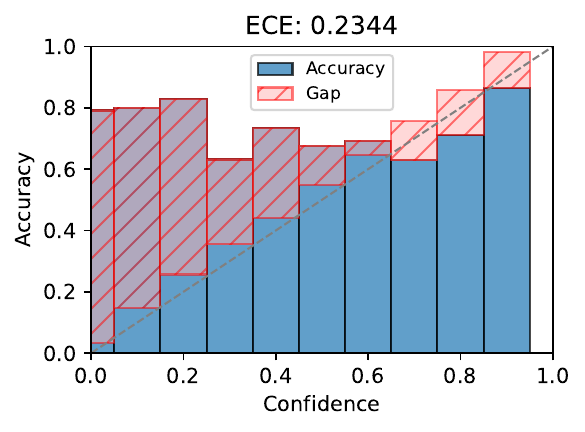}
        \caption{ZebraLogic (Router)}
    \end{subfigure}

    \vspace{0.5em}

    \begin{subfigure}{0.31\linewidth}
        \centering
        \includegraphics[width=\linewidth]{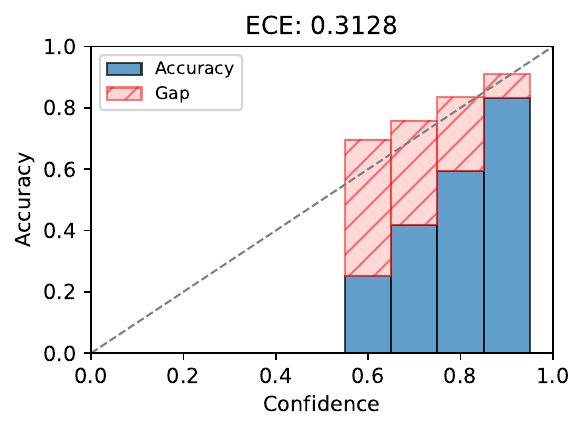}
        \caption{GPQA (Logits-based)}
    \end{subfigure}
    \begin{subfigure}{0.31\linewidth}
        \centering
        \includegraphics[width=\linewidth]{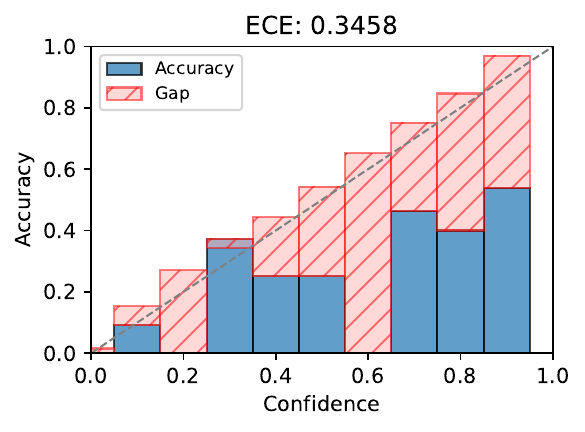}
        \caption{GPQA (Verbalized)}
    \end{subfigure}
    \begin{subfigure}{0.31\linewidth}
        \centering
        \includegraphics[width=\linewidth]{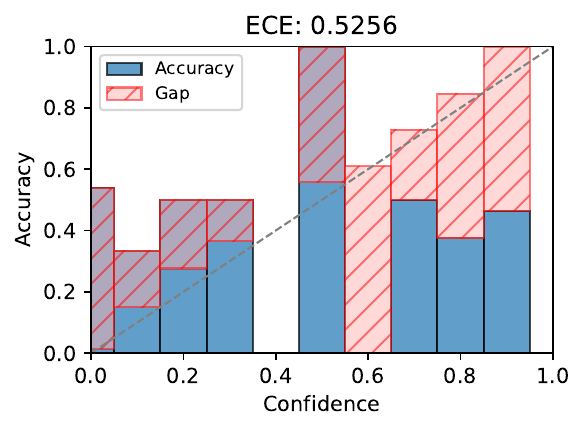}
        \caption{GPQA (Router)}
    \end{subfigure}

    \caption{Expected Calibration Error across three verifiable benchmarks using Qwen LLMs.}
    \label{fig:expected_error_uncertainty}
\end{figure}

\subsection{Evaluation metrics}
\label{app:evalution_metrics}
Let $T$ denote the number of independent trials, indexed by $t = 1,\dots,T$. To evaluate the effectiveness of our method, we consider the following metrics.

\paragraph{Risk evaluation}
For the $t$-th trial, the empirical error is defined as
\begin{align*}
\text{Error}^{(t)}
&=
\frac{1}{N}
\sum_{j=1}^k
\sum_{i \in \mathcal{I}_{\mathrm{test},j}^{(t)}}
\ell(y_i, \tilde{y}_i)\,
\mathbf{1}\{U(x_i) \le u\}.
\end{align*}

We first average the group-wise empirical risk across trials.
For each group $j$, define
\[
\bar{E}_j
:=
\frac{1}{T}
\sum_{t=1}^T
E_j^{(t)},
\quad
E_j^{(t)}
=
\frac{1}{N_j^{(t)}}
\sum_{i \in \mathcal{I}_{\mathrm{test},j}^{(t)}}
\ell(y_i, \tilde{y}_i)\,
\mathbf{1}\{U(x_i) \le u\}.
\]

We report the final metrics by averaging over all trials:
\[
\text{Error}
=
\frac{1}{T}
\sum_{t=1}^T
\text{Error}^{(t)},
\text{Error}_{\text{Gap}}
=
\sum_{j=1}^k
(\bar{E}_j - \varepsilon)\,
\mathbf{1}\{\bar{E}_j > \varepsilon\}.
\]
When $\text{Error}_{\mathrm{Gap}} = 0$, the group-conditional empirical risk
does not exceed the tolerance level $\varepsilon$ for any group in any trial.

\paragraph{Efficiency evaluation}
To evaluate the effectiveness of PAC reasoning in reducing inference cost, we introduce an efficiency metric termed \textit{Saved Token Percentage} (STP).
Let $l_{\tilde{y}_i}$ denote the number of tokens generated by the non-thinking model for its candidate answer $\tilde{y}_i$, and let $l_{y_i}$ denote the number of tokens generated by the thinking model for its reference answer $y_i$.
The STP is formally defined as
\begin{equation*}
 \text{STP} := \frac{1}{N} \sum_{i \in \mathcal{I}_{\text{test}}}
 \left(1 - \frac{l_{\tilde{y}_i} + \mathbf{1}\{U_i > u\} \, l_{y_i}}{l_{y_i}}\right)\times 100\% .
\end{equation*}
Here, $\mathbf{1}\{\cdot\}$ is the indicator function, $U_i$ denotes the uncertainty score, and $u$ is the calibrated threshold.
This metric measures the relative token savings by comparing the tokens consumed by PAC reasoning against always invoking the thinking model, while accounting for cases in which an expert (thinking-model) call is triggered.

In the router-based setting, however, the uncertainty score $U_i$ is produced by an external router rather than by the non-thinking model itself.
As a result, the non-thinking model is only executed when the router decides not to use the thinking model.
Accordingly, we slightly revise the STP definition as follows:
\begin{equation*}
 \text{STP} := \frac{1}{N} \sum_{i \in \mathcal{I}_{\text{test}}}
 \left(1 - \frac{\mathbf{1}\{U_i \leq u\}\, l_{\tilde{y}_i}
 + \mathbf{1}\{U_i > u\}\, l_{y_i}}{l_{y_i}}\right)\times 100\% .
\end{equation*}

We repeat each experiment 100 times and report the mean and standard deviation of the budget savings. We fix $\alpha = 0.05$ throughout all experiments while varying $\varepsilon$, and set the sampling weight $\pi = \pi_i = 0.5$ for each \(i\in \mathcal{I}_{cal}\) and the sample size $m = n \times \frac{1}{\pi}$.

\subsection{Loss function}
\label{app:loss_function}
Since the motivation is that PAC reasoning is designed to control the additional error introduced by switching from a reliable but expensive thinking model to a cheaper non-thinking model, this loss is intentionally defined relative to the thinking model rather than directly with respect to the ground-truth accuracy. 
In this work, we consider two types of loss functions for evaluating the PAC guarantee of our method: the semantic cosine distance and the binary 0-1 loss.

\paragraph{Semantic loss function}
The semantic cosine distance measures similarity between outputs in the embedding space. Formally, given reference output $y_i = f(x_i)$ and PAC reasoning output $\hat{y}_i = \hat{f}(x_i)$, we compute their embeddings $v_{y_i}$ and $v_{\hat{y}_i}$, and define the loss as: 
\begin{equation}
\label{eq:semantic_loss}
    \ell(y_i, \hat{y}_i) = 1 - \frac{v_{y_i} \cdot v_{\hat{y}_i}}{\|v_{y_i}\| \|v_{\hat{y}_i}\|}.
\end{equation}
For the semantic embedding model, we adopt ``Qwen3-Embedding-4B''~\citep{zhang2025qwen3}.

\paragraph{Binary loss function}
We also employ a binary 0–1 loss, defined in Eq.~\eqref{eq:binary_loss}, which captures the actual loss in answer accuracy when comparing the PAC reasoning output $\hat{y}$ with the reference output $y$:
\begin{equation}
\label{eq:binary_loss}
\ell(y_i, \hat{y}_i) = \ell(y_i, \hat{y}_i | y_{i,gold}) = \mathbf{1} \{ \hat{y}_i \neq y_{i}^{gold}\} \mathbf{1}\{y_i=y_{i}^{gold}\}
\end{equation}
where $y_{i,gold}$ is the ground-truth answer for the problem $x_i$. 

\begin{remark}
    By counting an error only when the thinking model produces a correct answer but the PAC reasoning output does not, this loss isolates the incremental performance degradation caused by efficiency-oriented routing.
This formulation is particularly suitable for switching-based deployment scenarios, where the thinking model serves as a reliability anchor, and the primary objective is to ensure that efficiency gains do not incur excessive additional errors.
In this sense, PAC reasoning provides a statistically valid guarantee on relative performance loss with respect to a fixed thinking model, rather than a guarantee of absolute correctness with respect to ground truth.
\end{remark}

\section{Main experimental results}

\subsection{Main results of Qwen series models}
In this part, we present the results of the Qwen series LLMs.

\paragraph{Error analysis across categories for vanilla PAC reasoning and G-PAC reasoning.}
To examine the validity of group-conditional guarantees, we report error bars for both the overall population and each category under the Qwen series models, as shown in Figure~\ref{fig:errorbar_category}. The corresponding results have been summarized in Table~\ref{table:01_results_known}.
While vanilla PAC reasoning successfully controls the overall error below the target tolerance~$\varepsilon$, it fails to consistently satisfy the error constraint at the category level, with several categories exhibiting errors that exceed~$\varepsilon$. 
In contrast, G-PAC reasoning maintains valid error control not only for the overall dataset but also uniformly across all categories, demonstrating its effectiveness in enforcing group-conditional guarantees.

\paragraph{Experiment of C-PAC under known group partitions.}
Table~\ref{table:01_results_unknown} presents the experimental results comparing vanilla PAC and C-PAC under the setting where the group partition is known. We evaluate both methods on three verifiable reasoning benchmarks (MATH-500, ZebraLogic, and GPQA) with $\alpha=0.05$. Across all scoring strategies and under both disjoint and joint clustering, C-PAC consistently achieves lower empirical error than PAC. More importantly, C-PAC effectively drives the group-wise error gap $\text{Error}_{\text{Gap}}$ to zero in all cases. These results demonstrate that incorporating group-aware calibration enables C-PAC to provide stronger and more reliable guarantees across heterogeneous groups.

\paragraph{Ablation analysis on the calibration ratios.}
Figure~\ref{fig:results_calibration_ratio_other} presents additional results on ZebraLogic and GPQA.
Across different calibration ratios, both PAC and G-PAC maintain stable marginal error control.
However, the group-conditional error gap of vanilla PAC varies noticeably with the calibration ratio, while G-PAC consistently achieves near-zero error gaps across all uncertainty scores.
Moreover, the efficiency of G-PAC changes smoothly with the calibration ratio and remains comparable to that of PAC.
These results further demonstrate the robustness of G-PAC to the choice of calibration set size.

\begin{figure*}[t]
    \centering
    \begin{subfigure}[t]{\linewidth}
        \centering
        \includegraphics[width=0.31\linewidth]{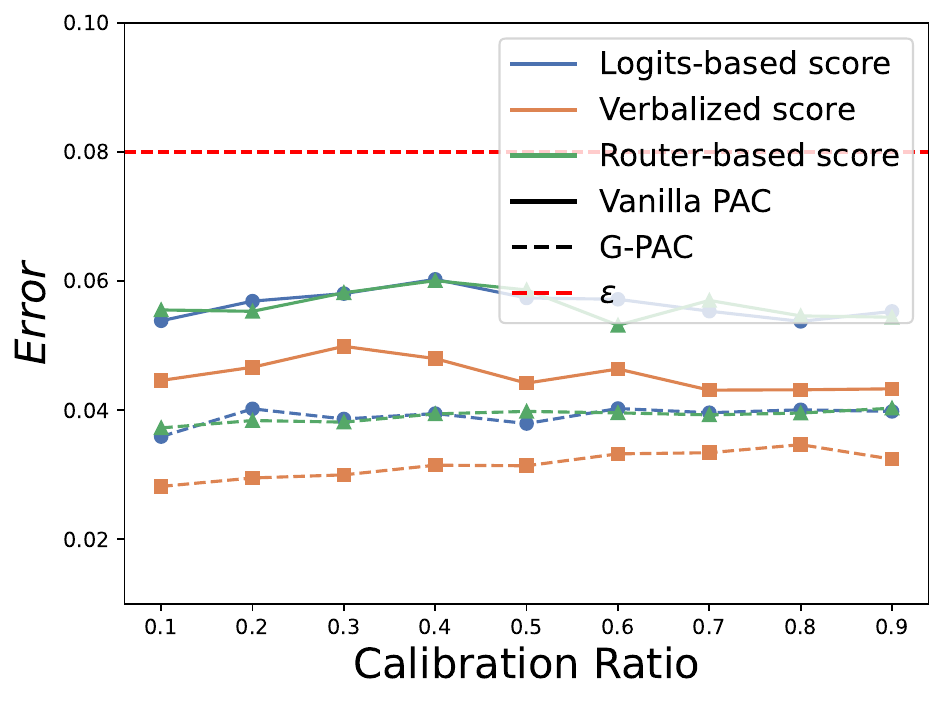}
        \includegraphics[width=0.31\linewidth]{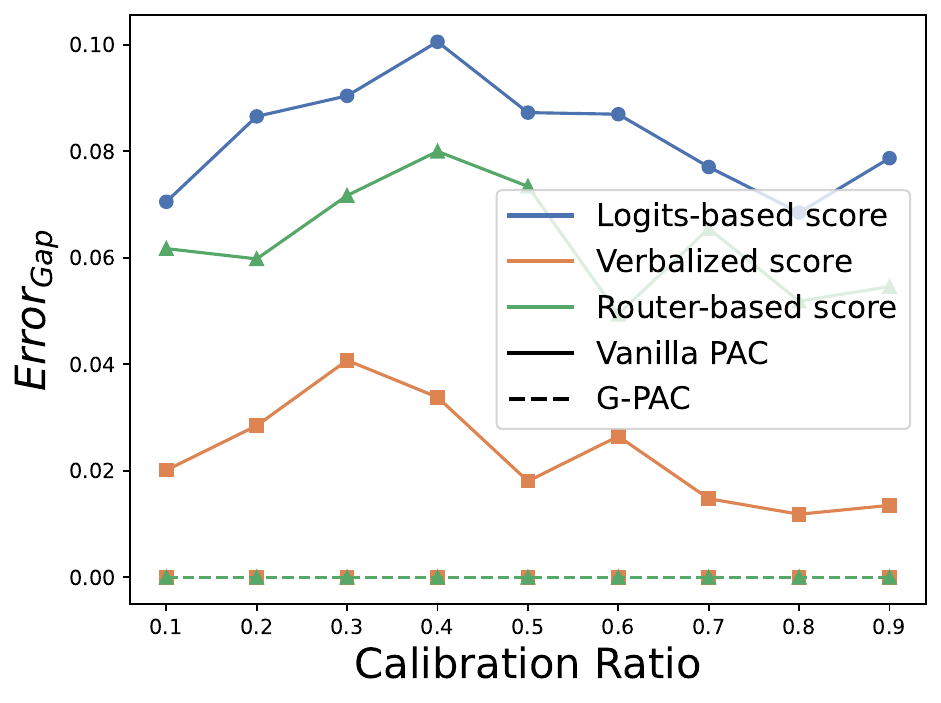}
        \includegraphics[width=0.31\linewidth]{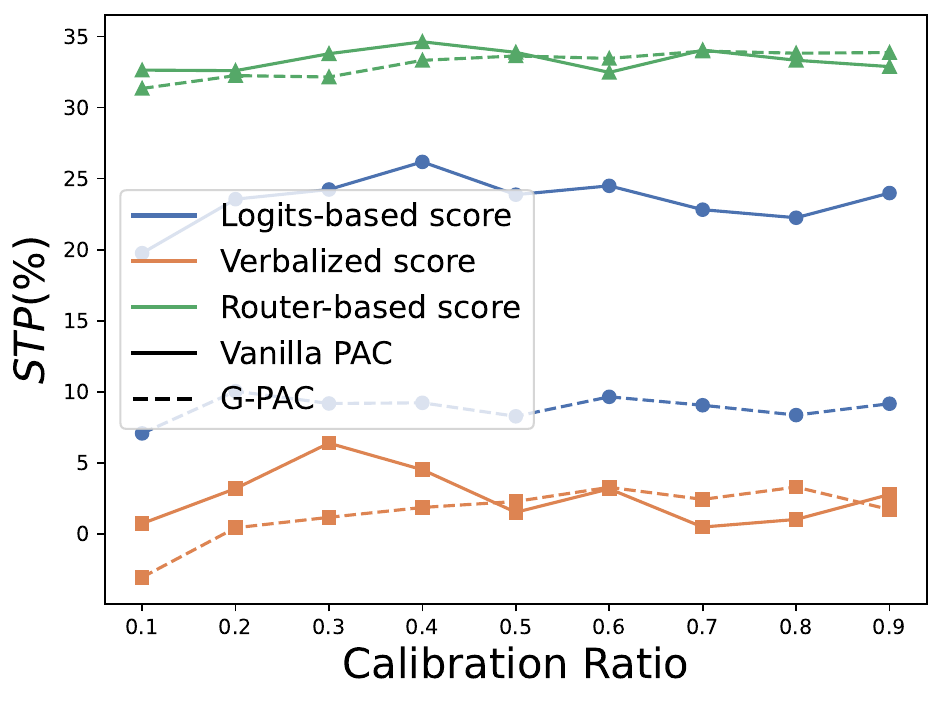}
        \caption{ZebraLogic}
    \end{subfigure}
    \hfill
    \begin{subfigure}[t]{\linewidth}
        \centering
        \includegraphics[width=0.31\linewidth]{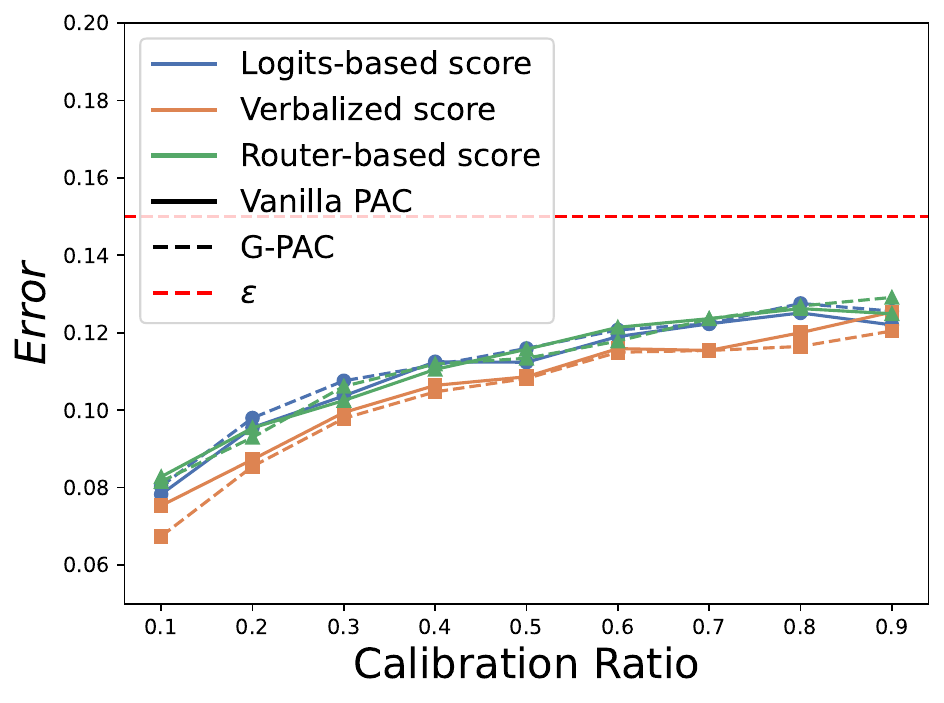}
        \includegraphics[width=0.31\linewidth]{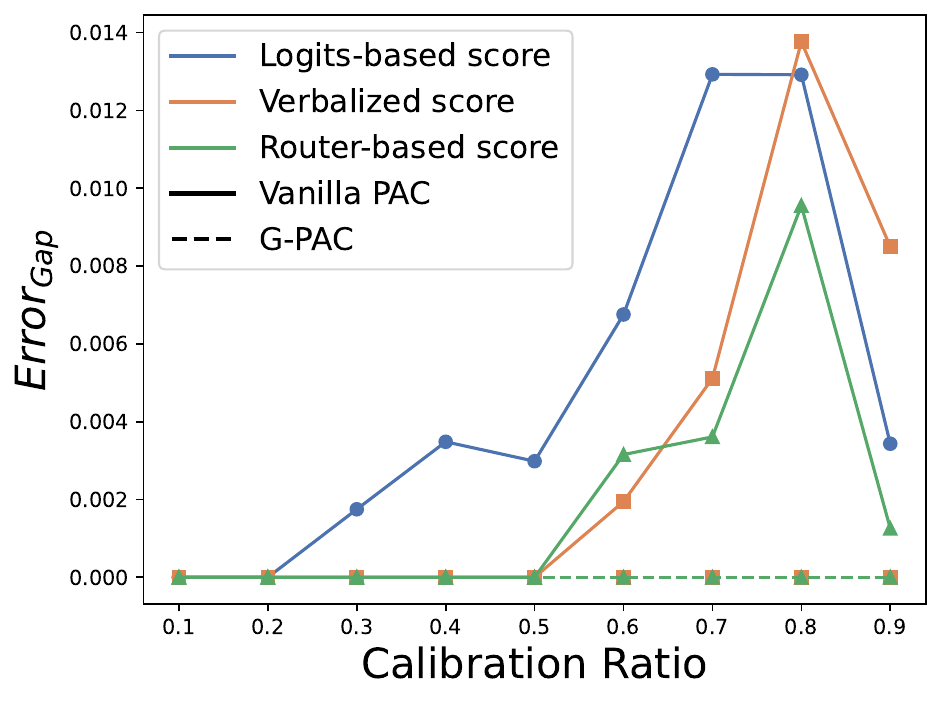}
        \includegraphics[width=0.31\linewidth]{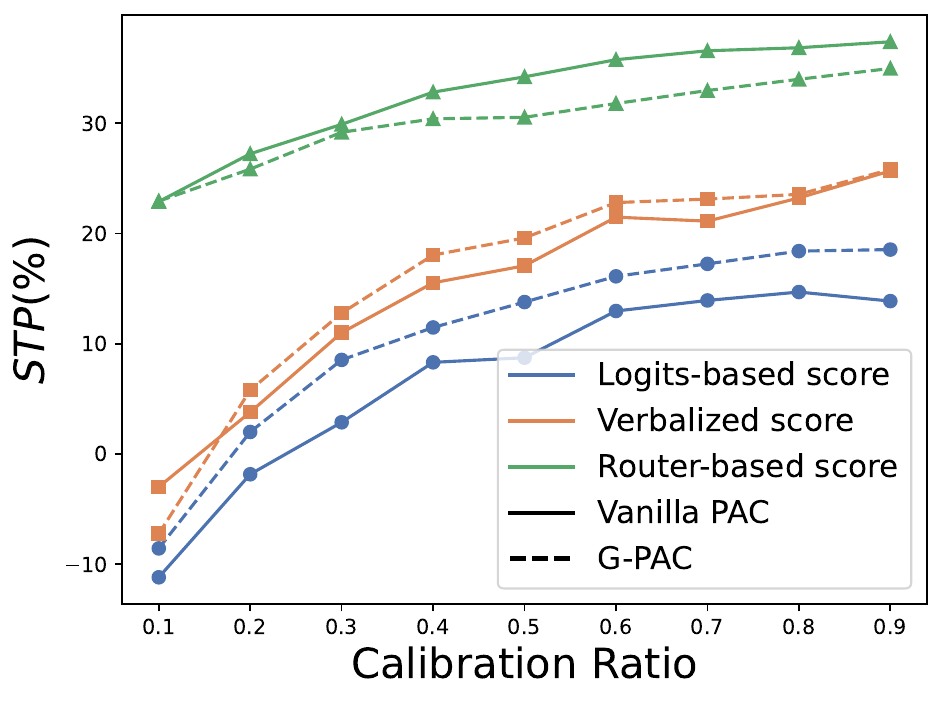}
        \caption{GPQA}
    \end{subfigure}
    \caption{
    Error control and STP of PAC reasoning for binary loss for different calibration ratios at a confidence level of $\alpha = 0.05$. All experiments are conducted on Qwen LLMs. The red dashed line $\varepsilon$ means the target risk level. 
    }
    \label{fig:results_calibration_ratio_other}
\end{figure*}

\subsection{Main results of Llama-based LLMs}
\label{app:exp_other_llm}

In this part, we report results for the Llama model.
Table~\ref{table:01_llama_results} shows that, when group partitions are known, G-PAC consistently eliminates the group-conditional error gap across all datasets and uncertainty scores on Llama-based models, while retaining comparable computational savings to PAC.
Moreover, Figure~\ref{fig:errorbar_category_llama} shows that the average risk for each category when using Llama-based LLMS.
Table~\ref{table:01_results_unknown_llama} demonstrates that under unknown group partitions, C-PAC maintains zero empirical $\text{Error}_{\text{Gap}}$ under both independent and joint clustering, whereas vanilla PAC exhibits substantial conditional risk violations.
Table~\ref{table:arena_results_llama} further confirms that on the open-domain Arena-Hard benchmark, C-PAC achieves strict conditional loss control with meaningful efficiency gains, even in the absence of verifiable answers and predefined group labels.

\begin{table*}[!t]
\centering
\caption{
\textbf{Performance comparison between vanilla PAC and G-PAC under known group partitions.}  
Experimental results of the binary loss function on verifiable datasets using the Qwen models with $\alpha=0.05$.  
We set $\varepsilon=0.05$ for MATH-500, $\varepsilon=0.1$ for ZebraLogic, and $\varepsilon=0.15$ for GPQA.}
\label{table:01_results_unknown}
\setlength{\tabcolsep}{3mm}{
\resizebox{\textwidth}{!}{
\begin{tabular}{lccccccccccccc}
\toprule
\multirow{3}{*}{Dataset}  & \multirow{3}{*}{Metric}  & \multicolumn{6}{c}{Disjoint clustering}& \multicolumn{6}{c}{Joint clustering} \\
\cmidrule(r){3-8} \cmidrule(r){9-14}
& & \multicolumn{2}{c}{Logits-based score} &\multicolumn{2}{c}{Verbalized score} & \multicolumn{2}{c}{Router-based score} & \multicolumn{2}{c}{Logits-based score} &\multicolumn{2}{c}{Verbalized score} & \multicolumn{2}{c}{Router-based score}
\\
\cmidrule(r){3-4} \cmidrule(r){5-6} \cmidrule(r){7-8} \cmidrule(r){9-10} \cmidrule(r){11-12} \cmidrule(r){13-14}
& & PAC & C-PAC & PAC & C-PAC & PAC & C-PAC  & PAC & C-PAC& PAC & C-PAC & PAC & C-PAC \\
\midrule

\multirow{3}{*}{MATH-500 }
& $\text{Error}$ (\%)
& 3.03 & 2.56 & 2.70 & 2.25 & 2.94 & 2.88
& 3.34 & 2.45 & 3.24 & 2.44 & 3.34 & 2.92 \\

& $\text{Error}_{\text{Gap}}$ (\%)
& 0.17 & \colorbox{green!10}{0.00} & 0.00 & \colorbox{green!10}{0.00} & 0.00 & \colorbox{green!10}{0.00}
& 1.02 & \colorbox{green!10}{0.00} & 0.33 &\colorbox{green!10}{0.00} & 0.14 & \colorbox{green!10}{0.00} \\

& STP (\%) $\uparrow$
& 50.95 & 43.19 & 47.47 & 35.01 & 54.10 & 53.04
& 58.34 & 45.79 & 60.48 & 42.73 & 59.29 & 54.51 \\

\midrule

\multirow{3}{*}{ZebraLogic}
& $\text{Error}$ (\%) 
& 7.69 & 5.41 & 6.67 & 5.99 & 7.36 & 6.77
& 7.51 & 5.45 & 6.67 & 5.97 & 7.77 & 6.74 \\

& $\text{Error}_{\text{Gap}}$ (\%) 
& 1.98 & \colorbox{green!10}{0.00} & 0.00 & \colorbox{green!10}{0.00} & 1.49 & \colorbox{green!10}{0.00}
& 1.44 & \colorbox{green!10}{0.00} & 0.00 & \colorbox{green!10}{0.00} & 2.20 & \colorbox{green!10}{0.00} \\

& STP (\%) $\uparrow$
& 36.07 & 18.99 & 20.68 & 10.87 & 39.01 & 35.38
& 35.72 & 18.95 & 21.06 & 10.55 & 40.34 & 35.34 \\

\midrule

\multirow{3}{*}{GPQA}
& $\text{Error}$ (\%)
& 11.57 & 11.21 & 11.24 & 9.46 & 11.79 & 11.36
& 11.27 & 11.57 & 11.00 & 8.87 & 11.91 & 11.60 \\

& $\text{Error}_{\text{Gap}}$ (\%)
& 0.00 & \colorbox{green!10}{0.00} & 0.06 & \colorbox{green!10}{0.00} & 1.68 & \colorbox{green!10}{0.00}
& 0.38 & \colorbox{green!10}{0.00} & 0.00 & \colorbox{green!10}{0.00} & 1.36 & \colorbox{green!10}{0.00} \\

& STP (\%) $\uparrow$
& 11.28 & 10.32 & 18.86 & 5.98 & 34.64 & 30.39
& 9.05 & 10.87 & 18.21 & 0.54 & 34.91 & 29.75 \\

\bottomrule
\end{tabular}}}
\end{table*}

\begin{table}[!t]
\centering
\caption{\textbf{C-PAC successfully controls the group-conditional risk on an open-domain task.}  
Experimental results on Arena-Hard with $\varepsilon=0.1$, using Qwen LLMs.}
\label{table:arena_results}
\small
\setlength{\tabcolsep}{6mm}
\resizebox{0.6\columnwidth}{!}{
\begin{tabular}{ccccc}
\toprule
Uncertainty & Method & Error (\%) & Error$_{\text{Gap}}$ (\%) & STP (\%) $\uparrow$ \\
\midrule
\multicolumn{5}{c}{Disjoint clustering} \\
\midrule
\multirow{2}{*}{Logits}
 & PAC   & 8.84 & 0.00 & 51.77 \\
 & C-PAC & 8.54 & \colorbox{green!10}{0.00} & 47.01 \\
\midrule
\multirow{2}{*}{Verbalized}
 & PAC   & 8.63 & 3.68 & 30.07 \\
 & C-PAC & 5.27 & \colorbox{green!10}{0.00} & 10.91 \\
\midrule
\multirow{2}{*}{Router}
 & PAC   & 8.90 & 4.44 & 33.07 \\
 & C-PAC & 8.84 & \colorbox{green!10}{0.00} & 42.34 \\
\midrule
\multicolumn{5}{c}{Joint clustering} \\
\midrule
\multirow{2}{*}{Logits}
 & PAC   & 8.94 & 0.00 & 52.25 \\
 & C-PAC & 8.69 & \colorbox{green!10}{0.00} & 49.32 \\
\midrule
\multirow{2}{*}{Verbalized}
 & PAC   & 8.67 & 4.04 & 30.15 \\
 & C-PAC & 5.17 & \colorbox{green!10}{0.00} & 10.30 \\
\midrule
\multirow{2}{*}{Router}
 & PAC   & 8.94 & 4.69 & 33.39 \\
 & C-PAC & 8.77 & \colorbox{green!10}{0.00} & 42.02 \\
\bottomrule
\end{tabular}}
\end{table}

\begin{table*}
\centering
\caption{ 
\textbf{G-PAC performs a smaller error gap than vanilla PAC.}
Experimental results on verifiable datasets, using Llama-based LLMs ($\alpha=0.05$).  
We set $\varepsilon=0.42$ for MATH-500, $\varepsilon=0.2$ for ZebraLogic, and $\varepsilon=0.2$ for GPQA.}\label{table:01_llama_results}
\setlength{\tabcolsep}{6mm}{
\resizebox{\textwidth}{!}{
\begin{tabular}{lccccccc}
\toprule
\multirow{2}{*}{Dataset}  & \multirow{2}{*}{Metric}  & \multicolumn{2}{c}{Logits-based score}& \multicolumn{2}{c}{Verbalized score} & \multicolumn{2}{c}{Router-based score}
\\
\cmidrule(r){3-4} \cmidrule(r){5-6} \cmidrule(r){7-8}
& & PAC & G-PAC & PAC & G-PAC & PAC & G-PAC  \\
\midrule
\multirow{3}{*}{MATH-500}
& $\text{Error}$ (\%)  & $ 37.28 $ & $32.16$ & $36.52$ & $ 32.18 $ & 37.17 &  32.99 \\
& $\text{Error}_{\text{Gap}}$ (\%)  & $  19.27 $ & \colorbox{green!10}{0.00} & $12.47$ & \colorbox{green!10}{0.00} & 16.20 & \colorbox{green!10}{0.00} \\
& STP (\%) $\uparrow$ & $12.55 $  & $2.91 $ & $11.32 $ & $-2.21 $ & 26.01 &23.04\\
\midrule
\multirow{3}{*}{ZebraLogic}
& $\text{Error}$ (\%) &  17.33 & 11.43 & 17.05 & 11.13 & 17.20 &  11.13 \\
& $\text{Error}_{\text{Gap}}$ (\%) & 16.20 & \colorbox{green!10}{0.00} & 15.37 & \colorbox{green!10}{0.00} & 15.55 & \colorbox{green!10}{0.00}\\
& STP (\%)  $\uparrow$& 23.72  &  13.23 &  -30.68 & -9.52 & 25.22 & 25.98 \\
\midrule
\multirow{3}{*}{GPQA}
& $\text{Error}$ (\%) & 12.38 & 12.40  & 11.44 & 11.91  & 11.89& 11.90 \\
& $\text{Error}_{\text{Gap}}$ (\%) & 7.03  & \colorbox{green!10}{0.00} & 0.00 & \colorbox{green!10}{0.00} & 0.00 & \colorbox{green!10}{0.00} \\
& STP (\%)  $\uparrow$& 18.13  &  23.47 & 42.52 & 36.87 &39.67 & 39.88\\
\bottomrule
\end{tabular}}}
\end{table*}

\begin{figure*}[t]
    \centering
    \begin{subfigure}[t]{\linewidth}
        \centering
        \includegraphics[width=0.27\linewidth]{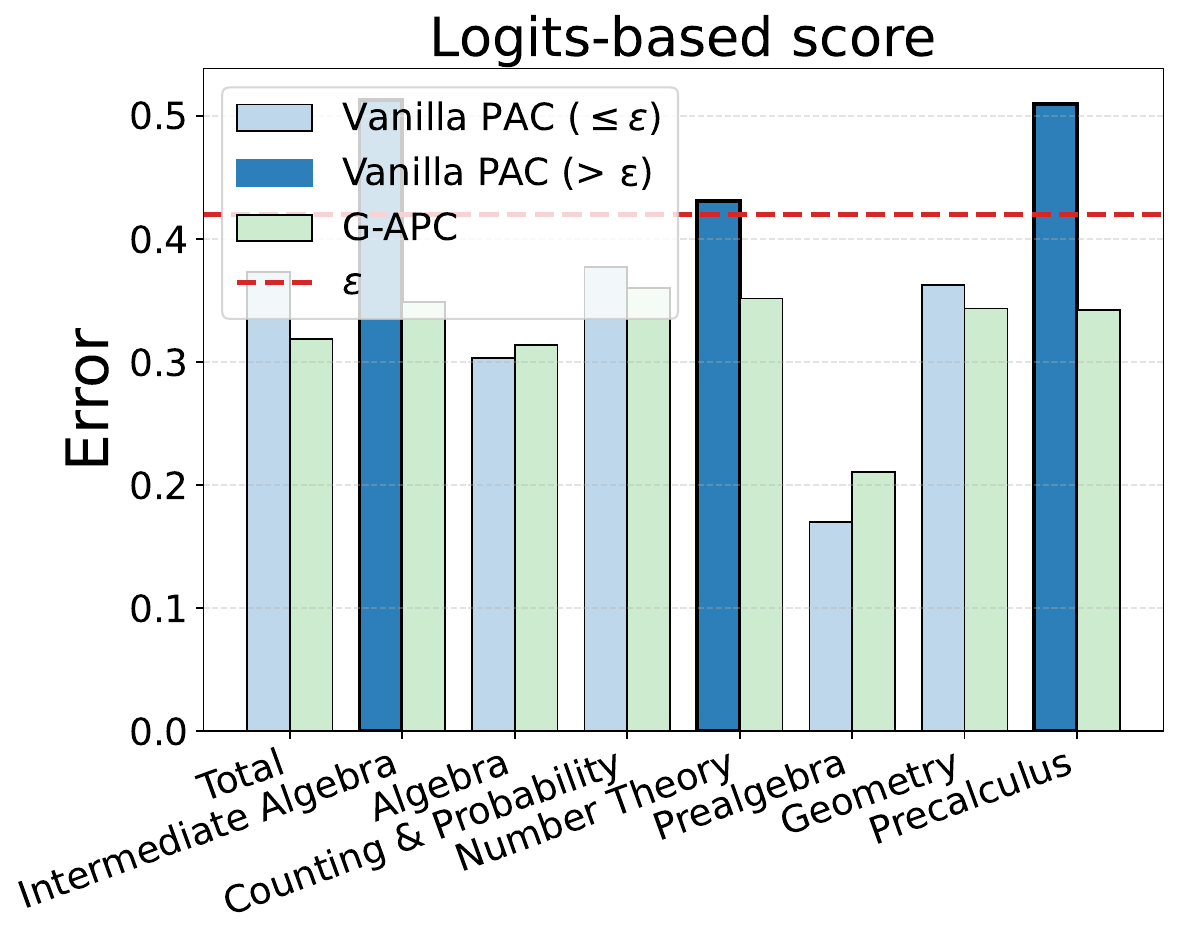}
        \includegraphics[width=0.27\linewidth]{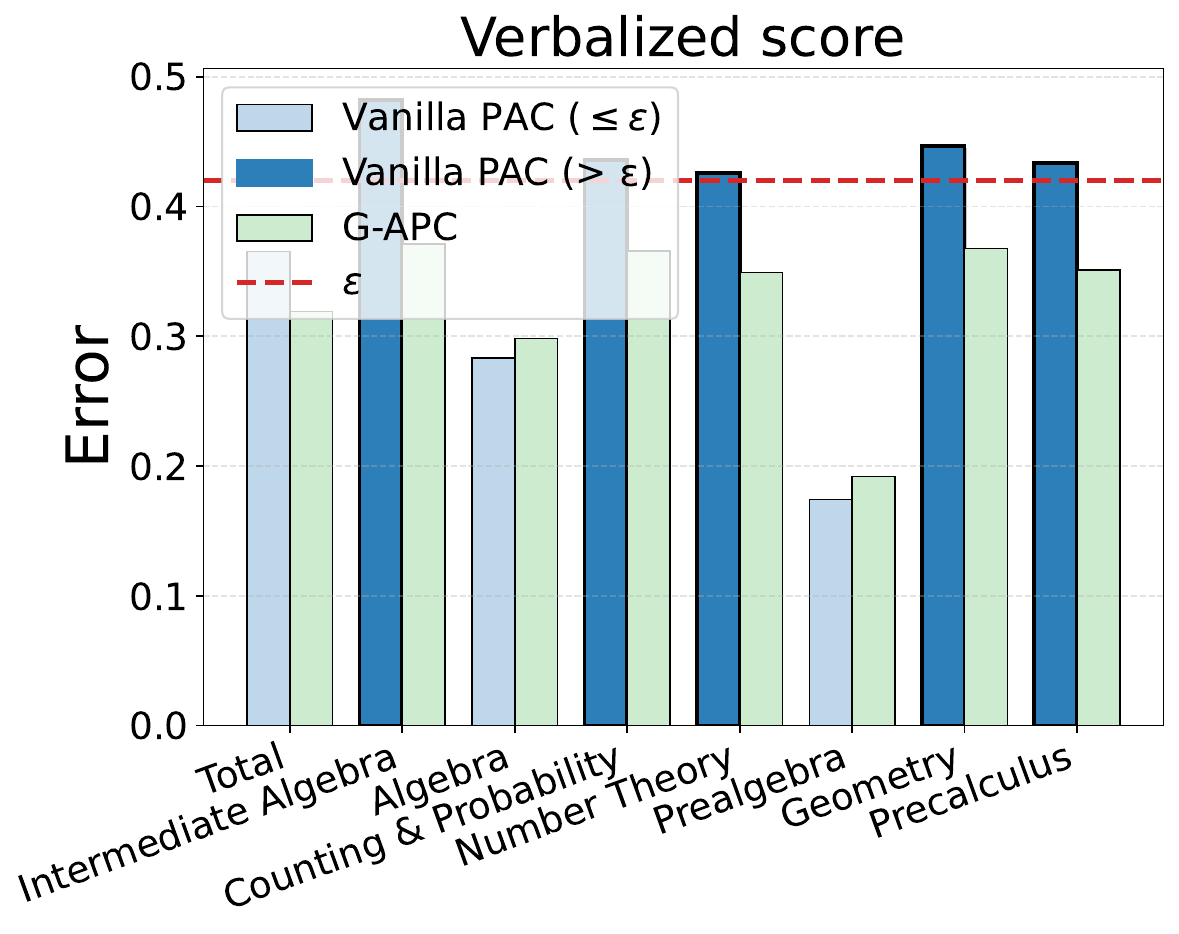}
        \includegraphics[width=0.27\linewidth]{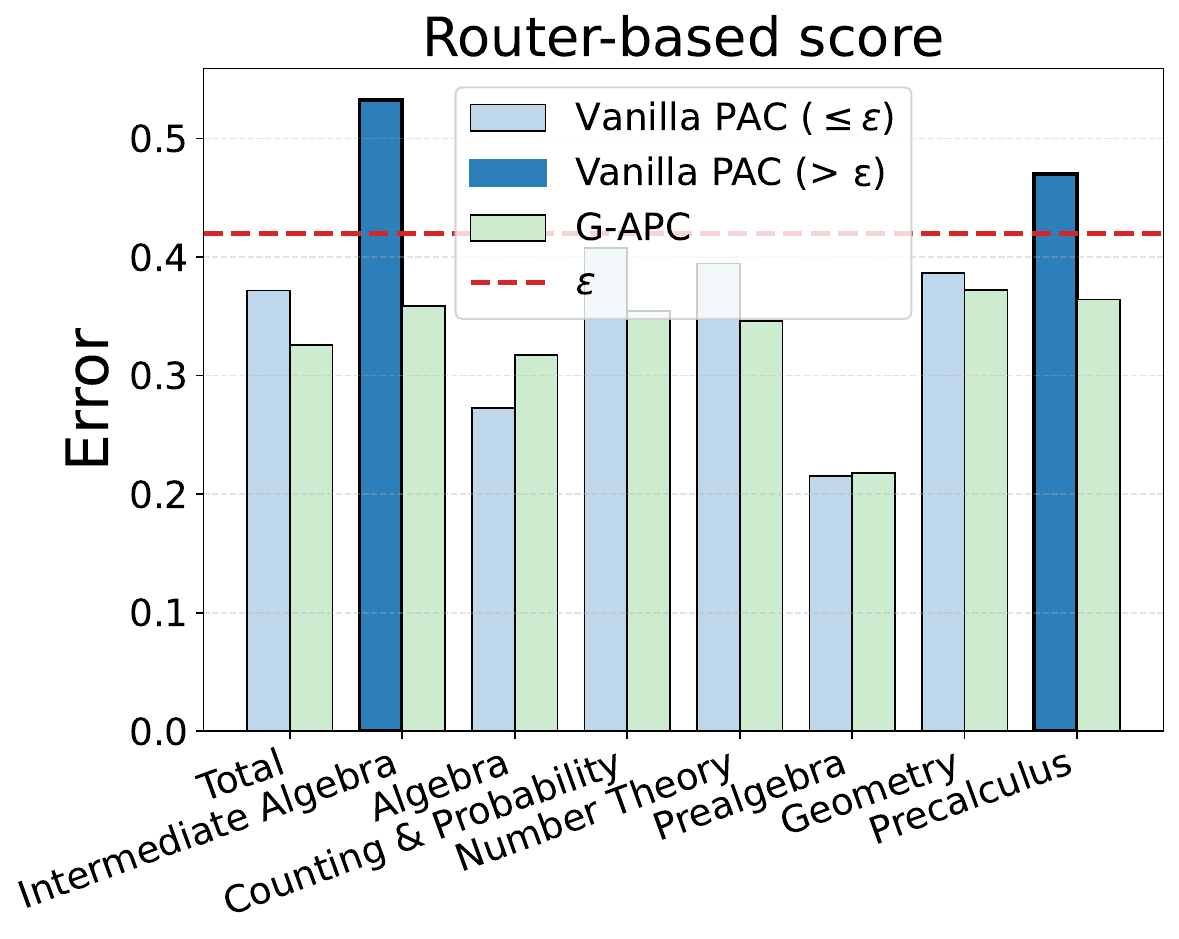}
        \caption*{MATH-500}
    \end{subfigure}\\
    \begin{subfigure}[t]{\linewidth}
        \centering
        \includegraphics[width=0.27\linewidth]{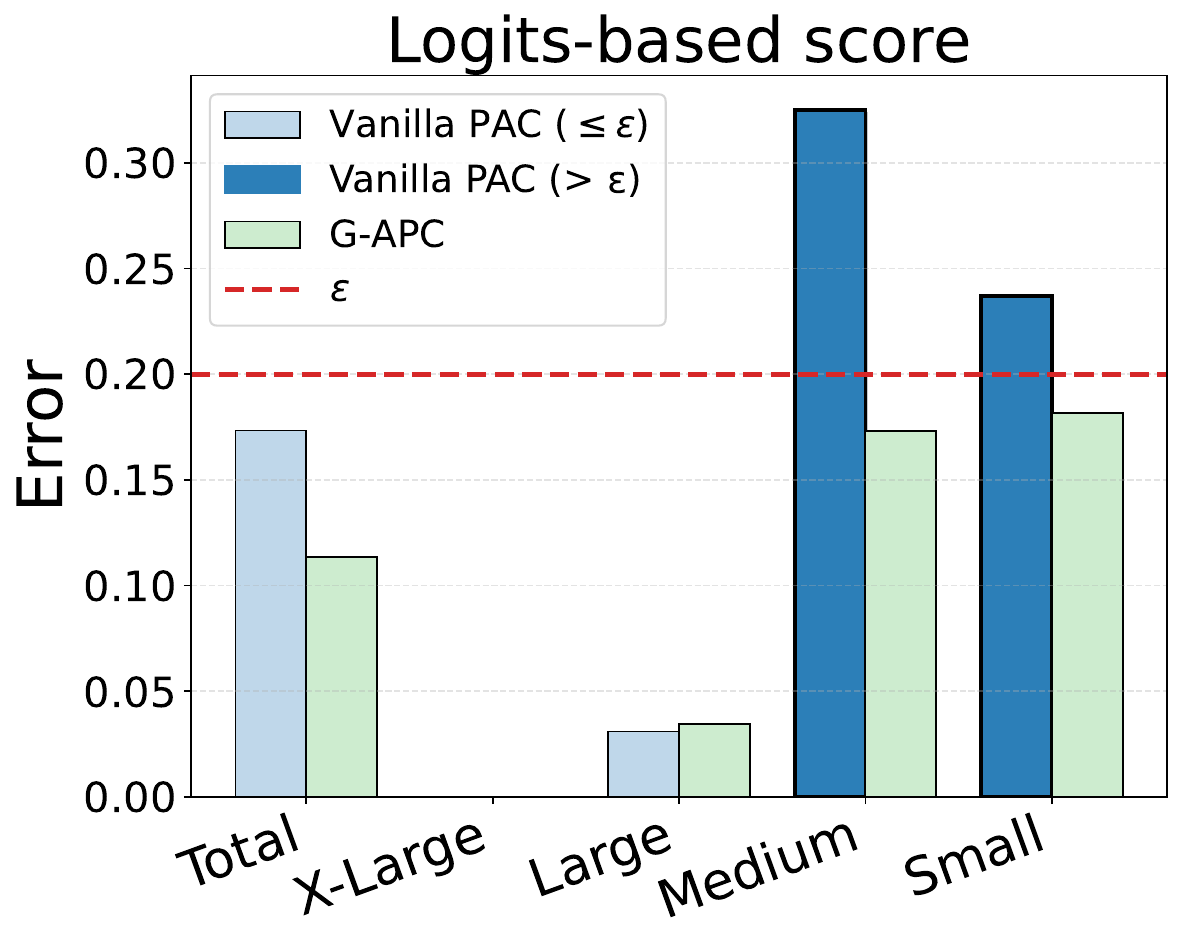}
        \includegraphics[width=0.27\linewidth]{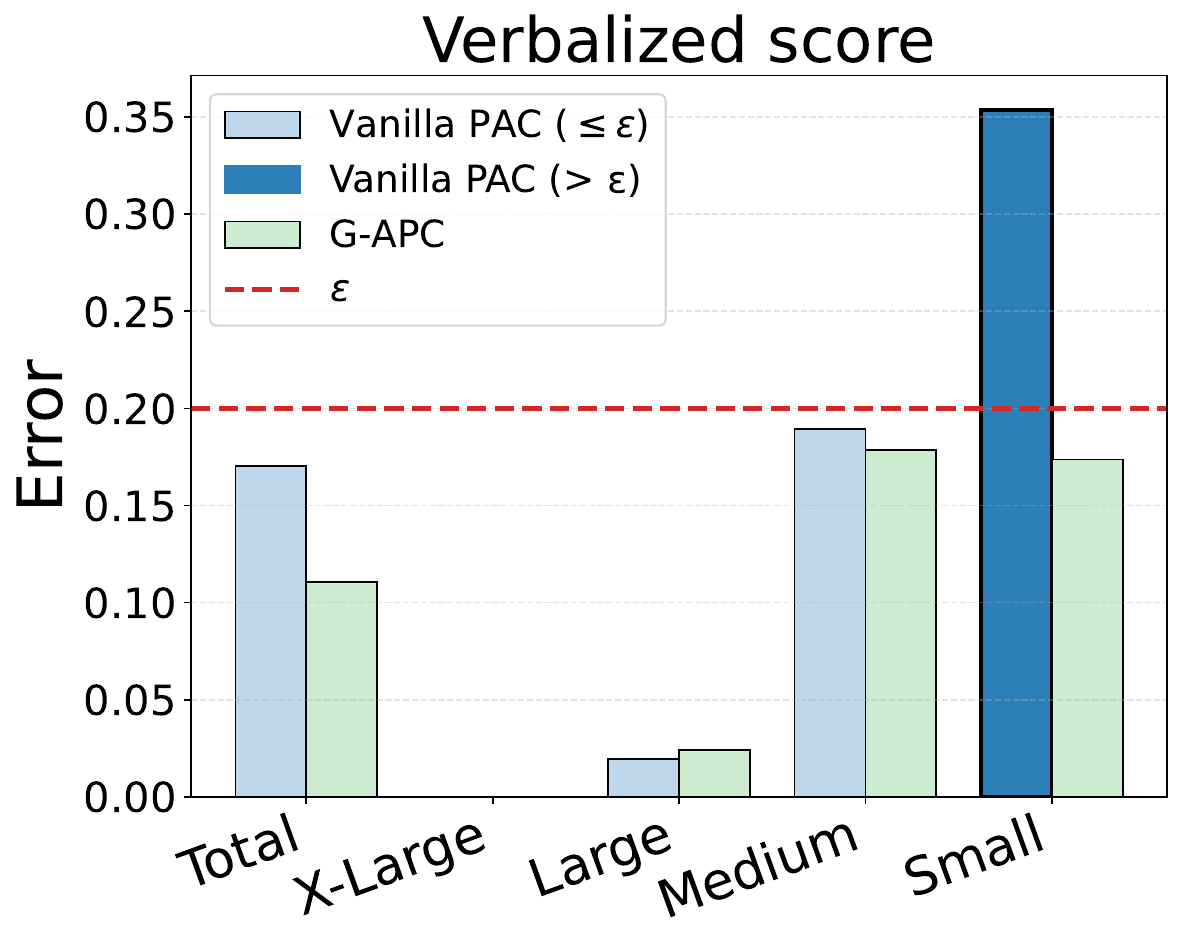}
        \includegraphics[width=0.27\linewidth]{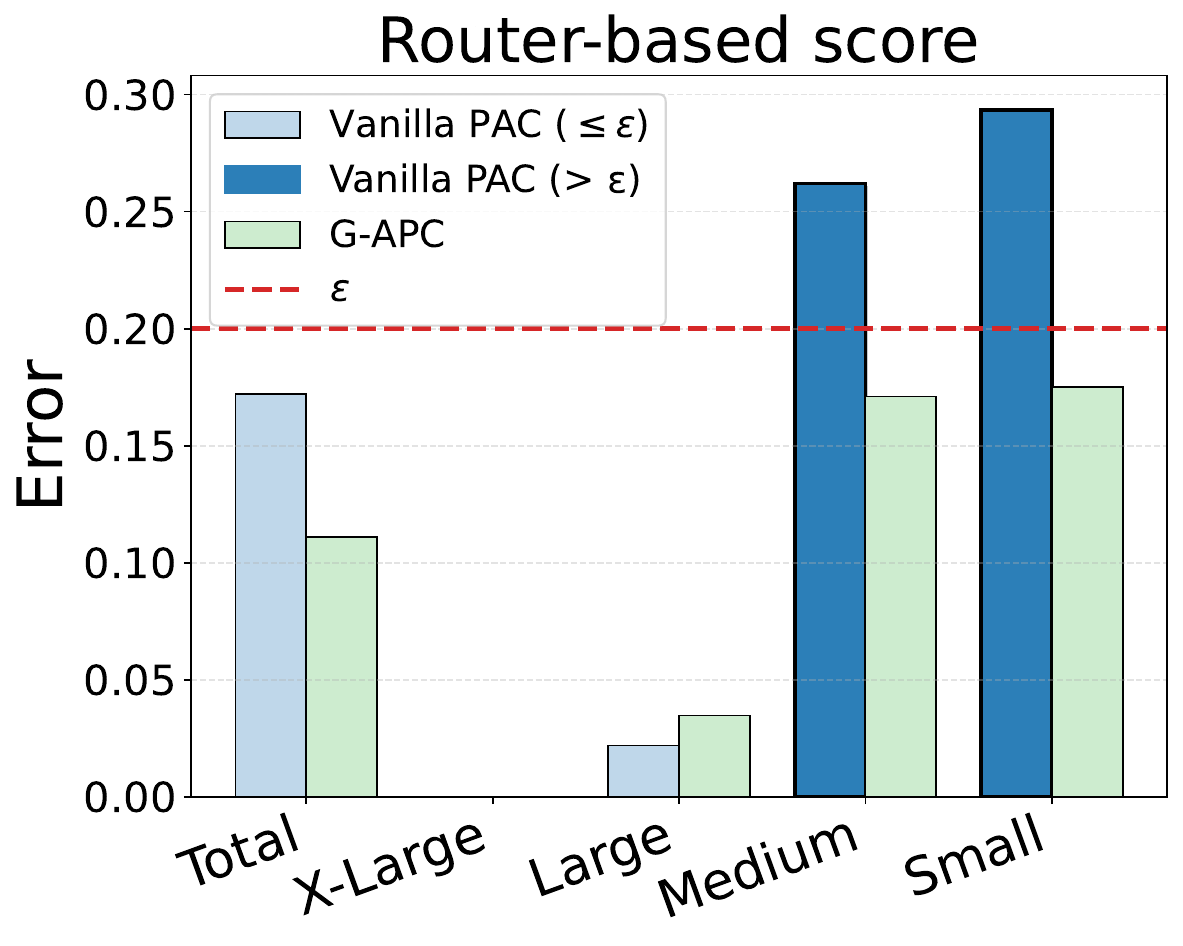}
        \caption*{ZebraLogic}
    \end{subfigure}
    \begin{subfigure}[t]{\linewidth}
        \centering
        \includegraphics[width=0.27\linewidth]{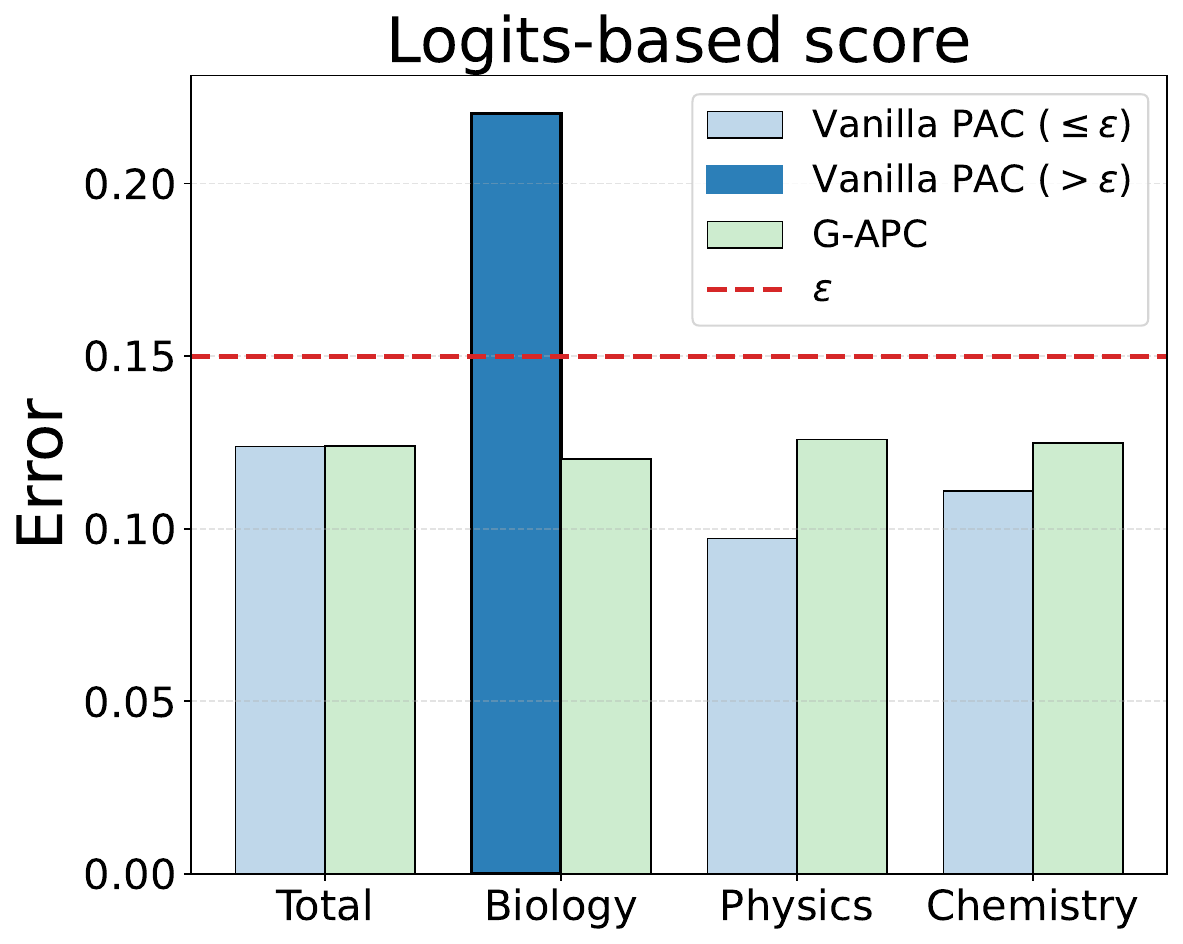}
        \includegraphics[width=0.27\linewidth]{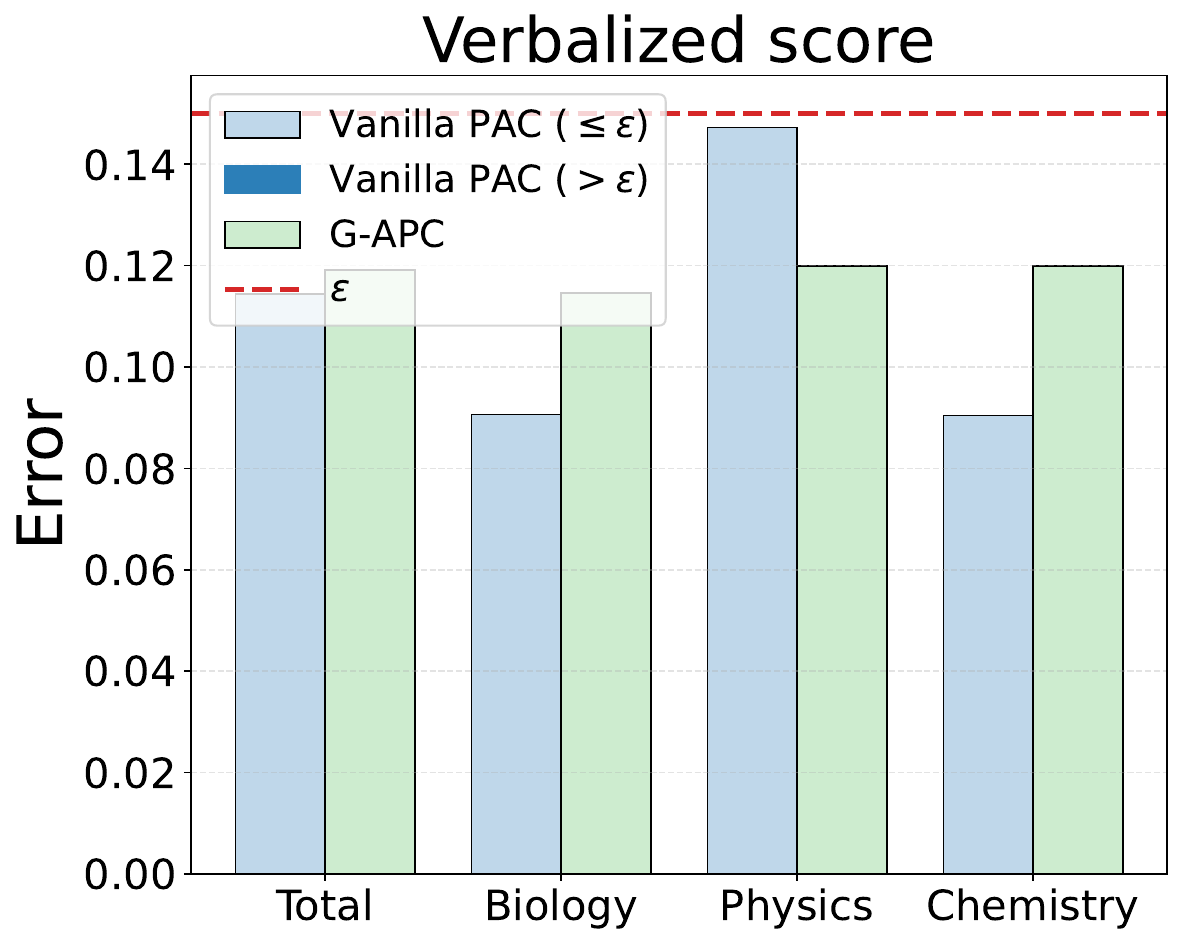}
        \includegraphics[width=0.27\linewidth]{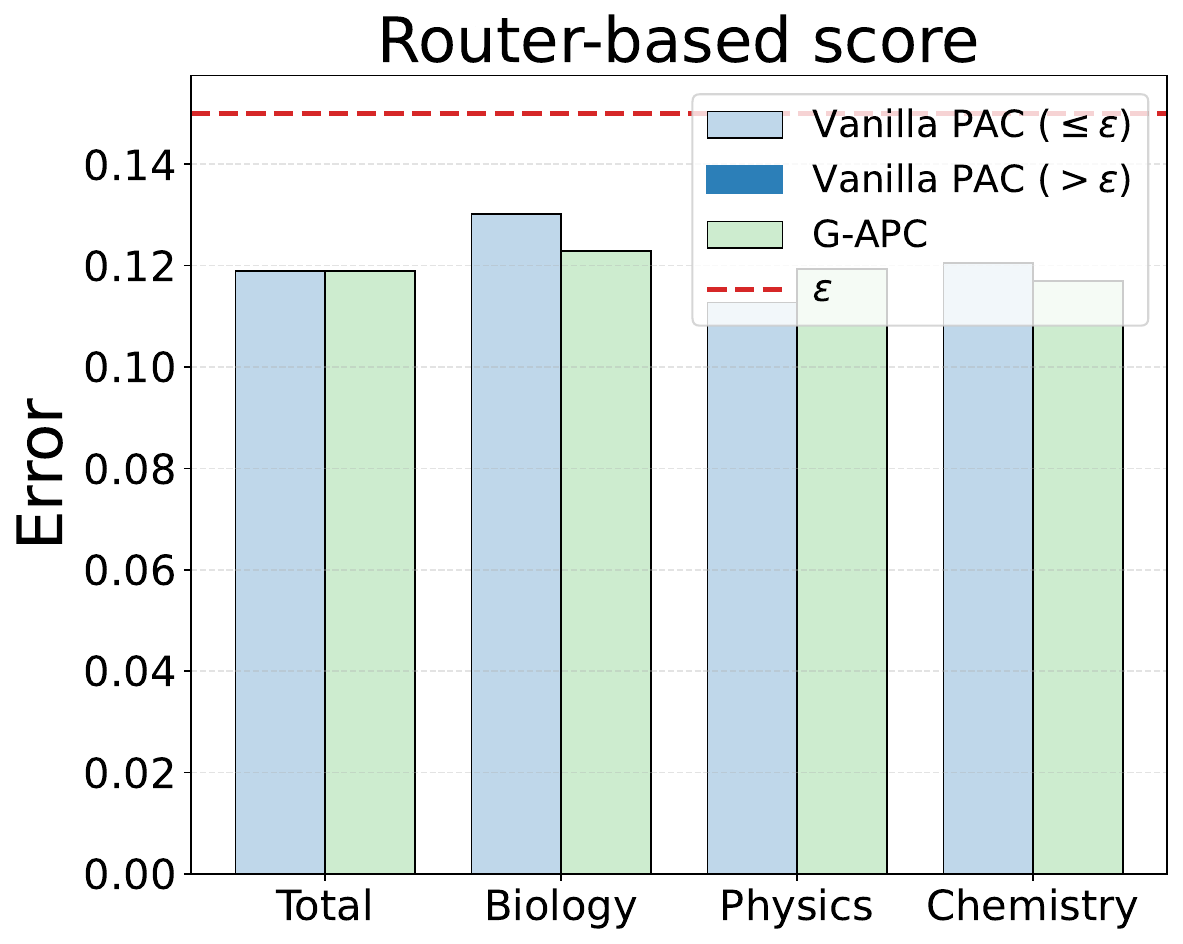}
        \caption*{GPQA}
    \end{subfigure}
    \caption{
   \textbf{G-PAC controls the group-conditional performance loss below the target while vanilla PAC fails (dark blue) across three different uncertainty scores.}
   Experiments are conducted with the Llama-based model.
   The figure reports the overall and per-category performance losses on three reasoning benchmarks.}
    \label{fig:errorbar_category_llama}
\end{figure*}

\begin{table*}[!t]
\centering
\caption{\textbf{Performance comparison between vanilla PAC and G-PAC under known group partitions.}  
Experimental results of the binary loss function on verifiable datasets using the Qwen models with $\alpha=0.05$.  
We set $\varepsilon=0.4$ for MATH-500, $\varepsilon=0.2$ for ZebraLogic, and $\varepsilon=0.15$ for GPQA.}
\label{table:01_results_unknown_llama}
\setlength{\tabcolsep}{2mm}{
\resizebox{\textwidth}{!}{
\begin{tabular}{lccccccccccccc}
\toprule
\multirow{3}{*}{Dataset}  & \multirow{3}{*}{Metric}  & \multicolumn{6}{c}{Disjoint clustering}& \multicolumn{6}{c}{Joint clustering} \\
\cmidrule(r){3-8} \cmidrule(r){9-14}
& & \multicolumn{2}{c}{Logits-based score} &\multicolumn{2}{c}{Verbalized score} & \multicolumn{2}{c}{Router-based score} & \multicolumn{2}{c}{Logits-based score} &\multicolumn{2}{c}{Verbalized score} & \multicolumn{2}{c}{Router-based score}
\\
\cmidrule(r){3-4} \cmidrule(r){5-6} \cmidrule(r){7-8} \cmidrule(r){9-10} \cmidrule(r){11-12} \cmidrule(r){13-14}
& & PAC &C-PAC& PAC & C-PAC & PAC & C-PAC  & PAC & C-PAC& PAC & C-PAC & PAC & C-PAC \\
\midrule

\multirow{3}{*}{MATH-500 }
& $\text{Error}$ (\%)
& 33.22 & 33.78 & 33.38 & 31.64 & 33.54 & 34.41
& 35.08 & 34.06 & 34.86 & 32.37 & 35.34 & 34.83 \\

& $\text{Error}_{\text{Gap}}$ (\%)
& 8.22 & \colorbox{green!10}{0.00} & 7.68 & \colorbox{green!10}{0.00} & 2.16 & \colorbox{green!10}{0.00}
& 8.36 & \colorbox{green!10}{0.00} & 8.90 & \colorbox{green!10}{0.00} & 4.07 & \colorbox{green!10}{0.00} \\

& STP (\%) $\uparrow$
& 3.29 & 8.40 & 3.51 & 0.40 & 24.28 & 22.63
& 7.99 & 8.96 & 7.91 & 0.96 & 26.47 & 23.59 \\

\midrule

\multirow{3}{*}{ZebraLogic}
& $\text{Error}$ (\%)
& 16.53 & 15.21 & 16.26 & 12.32 & 16.05 & 16.38
& 16.82 & 15.88 & 16.15 & 12.27 & 16.24 & 16.57 \\

& $\text{Error}_{\text{Gap}}$ (\%)
& 3.10 & \colorbox{green!10}{0.00} & 13.62 & \colorbox{green!10}{0.00} & 3.74 & \colorbox{green!10}{0.00}
& 3.70 & \colorbox{green!10}{0.00} & 13.43 & \colorbox{green!10}{0.00} & 4.20 & \colorbox{green!10}{0.00} \\

& STP (\%) $\uparrow$
& 21.81 & 13.31 & -30.62 & -31.89 & 23.33 & 22.65
& 22.75 & 17.03 & -32.76 & -32.38 & 23.75 & 23.23 \\

\midrule

\multirow{3}{*}{GPQA}
& $\text{Error}$ (\%)
& 11.96 & 11.98 & 12.14 & 12.11 & 12.29 & 11.84
& 12.65 & 11.92 & 12.02 & 11.77 & 12.34 & 11.92 \\

& $\text{Error}_{\text{Gap}}$ (\%)
& 10.75 & \colorbox{green!10}{0.00} & 5.66 & \colorbox{green!10}{0.00} & 10.47 & \colorbox{green!10}{0.00}
& 12.32 & \colorbox{green!10}{0.00} & 5.53 & \colorbox{green!10}{0.00} & 11.05 & \colorbox{green!10}{0.00} \\

& STP (\%) $\uparrow$
& 16.75 & 32.73 & 38.86 & 34.85 & 40.77 & 37.83
& 18.89 & 31.53 & 38.51 & 31.36 & 41.27 & 37.19 \\

\bottomrule
\end{tabular}}}
\end{table*}

\begin{table}[!t]
\centering
\caption{\textbf{C-PAC successfully controls the group-conditional risk on an open-domain task.}  
Experimental results on Arena-Hard with $\varepsilon=0.15$, using Llama-based LLMs.}
\label{table:arena_results_llama}
\small
\setlength{\tabcolsep}{6mm}
\resizebox{0.6\columnwidth}{!}{
\begin{tabular}{ccccc}
\toprule
Uncertainty & Method & Error (\%) & Error$_{\text{Gap}}$ (\%) & STP (\%) $\uparrow$ \\
\midrule
\multicolumn{5}{c}{Disjoint clustering} \\
\midrule
\multirow{2}{*}{Logits}
 & PAC   & 13.51 & 3.88 & 42.70 \\
 & C-PAC & 13.08 & \colorbox{green!10}{0.00} & 36.63 \\
\midrule
\multirow{2}{*}{Verbalized}
 & PAC   & 13.34 & 6.35 & 29.49 \\
 & C-PAC & 12.61 & \colorbox{green!10}{0.00} & 22.50 \\
\midrule
\multirow{2}{*}{Router}
 & PAC   & 13.16 & 8.67 & 13.00 \\
 & C-PAC & 13.18 & \colorbox{green!10}{0.00} & 36.32 \\
\midrule
\multicolumn{5}{c}{Joint clustering} \\
\midrule
\multirow{2}{*}{Logits}
 & PAC   & 13.46 & 3.91 & 42.62 \\
 & C-PAC & 13.32 & \colorbox{green!10}{0.00} & 38.05 \\
\midrule
\multirow{2}{*}{Verbalized}
 & PAC   & 13.23 & 7.10 & 28.95 \\
 & C-PAC & 12.70 & \colorbox{green!10}{0.00} & 23.06 \\
\midrule
\multirow{2}{*}{Router}
 & PAC   & 13.30 & 9.38 & 13.49 \\
 & C-PAC & 13.20 & \colorbox{green!10}{0.00} & 35.96 \\
\bottomrule
\end{tabular}}
\end{table}

\end{document}